\documentclass{article}

\PassOptionsToPackage{numbers, sort&compress}{natbib}
\usepackage[preprint]{Aux/neurips_2026}

\usepackage[utf8]{inputenc} 
\usepackage[T1]{fontenc}    
\usepackage{hyperref}       
\usepackage{url}            
\usepackage{xurl}           
\usepackage{booktabs}       

\usepackage{nicefrac}       
\usepackage{microtype}      
\usepackage[dvipsnames, table]{xcolor} 
\usepackage{amsfonts,amssymb,amsthm,amsmath,amscd}

\usepackage{amsmath}
\usepackage{amssymb}
\usepackage{algorithm}
\usepackage[noend]{algpseudocode}

\newcommand{\R}{\mathbb{R}}

\newcommand{\llangle}{\langle \! \langle}
 \newcommand{\rrangle}{\rangle \! \rangle}
\usepackage{fontawesome5}
\newcommand{\bruno}[1]{}
\newcommand{\jake}[1]{}
\newcommand{\kelly}[1]{}
\newcommand{\philipp}[1]{}
\newcommand{\katya}[1]{}
\newcommand{\vincenzo}[1]{}

\usepackage{colortbl}  

\providecommand{\bestcell}[1]{\begingroup\setlength{\fboxsep}{1.2pt}\colorbox{yellow!30}{\bfseries #1}\endgroup}

\usepackage{array}
\usepackage{booktabs}

\newcolumntype{G}{>{\columncolor{gray!15}\centering\arraybackslash}p{0.09\linewidth}}

\newcommand{\npfdelta}[1]{\textcolor{green!50!black}{+#1}}
\newcommand{\modeldelta}[1]{\textcolor{red!70!black}{-#1}}
\newcommand{\zerodelta}{\textcolor{green!50!black}{---}}
\usepackage{enumerate}

\usepackage{graphicx}
\usepackage{overpic}
\usepackage{float}
\usepackage{wrapfig}
\usepackage[most]{tcolorbox}
\usepackage{subcaption}
\usepackage{thm-restate}
\tcbuselibrary{skins, breakable}

\usepackage{adjustbox}
\usepackage{booktabs}

\usepackage[most]{tcolorbox}
\usepackage{amsthm}

\newtcbtheorem[number within=section]{informalthm}{Informal Theorem}%
{
  enhanced,
  breakable,
  colback=gray!2,
  colframe=gray!25,
  coltitle=black,
  fonttitle=\bfseries,
  boxrule=0.3pt,
  arc=2pt,
  left=6pt,
  right=6pt,
  top=6pt,
  bottom=6pt,
  before skip=10pt,
  after skip=10pt,
}{ithm}

\definecolor{intuitionblue}{RGB}{235,245,255}
\definecolor{intuitionborder}{RGB}{70,120,180}

  \newcommand{\kfc}{\ensuremath{k}\textnormal{FC}}
  \newcommand{\kforms}{\ensuremath{k}\textnormal{-forms}}
  \newcommand{\cdc}{\textit{carré du champ}}
  \newcommand{\CdC}{\textit{Carré du Champ}}

\newtcolorbox{intuitionbox}{
  colback=intuitionblue,
  colframe=intuitionborder,
  boxrule=0.6pt,
  arc=2pt,
  left=6pt,
  right=6pt,
  top=5pt,
  bottom=5pt,
  fonttitle=\bfseries,
  title=Intuition: Volume-based learning.
}

\usepackage{array}
\usepackage{booktabs}
\usepackage{longtable}
\usepackage{makecell}

\usepackage{tikz}
\usetikzlibrary{backgrounds, calc, fit, positioning, shapes.geometric, decorations.pathmorphing, decorations.pathreplacing, arrows.meta}
\usepackage{tikz-cd}
\usepackage{tikzit}
\usetikzlibrary{shapes.geometric}

\tikzstyle{red dot}=[fill={rgb,255: red,191; green,0; blue,64}, draw=black, shape=circle]
\tikzstyle{green dot}=[fill={rgb,255: red,0; green,128; blue,128}, draw=black, shape=circle]
\tikzstyle{rectangle}=[fill=white, draw=black, shape=rectangle]
\tikzstyle{triangle}=[fill=white, draw=black, regular polygon sides=3, regular polygon, shape=regular polygon, tikzit shape=regular polygon, shape border rotate=180, minimum size=5mm, inner sep=0pt]
\tikzstyle{bullet}=[fill=black, draw=black, shape=circle, minimum size=2mm, inner sep=0pt]
\tikzstyle{noborder}=[fill=none, draw=none, shape=circle]
\tikzstyle{widerectangle}=[fill=white, draw=black, shape=rectangle, minimum width=16mm, minimum height=4mm, inner sep=1pt, tikzit shape=rectangle]

\tikzstyle{new edge style 0}=[-]

\definecolor{arroworange}{RGB}{255,161,90}
\definecolor{arrowpink}{RGB}{225,176,233}

\usepackage{extpfeil}
\usepackage{url}
\usepackage{comment}
\usepackage{parskip}
\usepackage{pifont}
\usepackage{mathrsfs}
\usepackage{scalerel}

\let\todo\undefined
\usepackage[colorinlistoftodos]{todonotes}

\let\todo\undefined

\newcommand{\todo}[1][]{}
\newcommand{\tocite}[1][]{}
\newcommand{\toref}[1][]{}

\newtheorem{theorem}{Theorem}
\newtheorem{lemma}[theorem]{Lemma}
\newtheorem{proposition}[theorem]{Proposition}

\newtheorem{corollary}[theorem]{Corollary}
\theoremstyle{remark}
\newtheorem{remark}{Remark}

\theoremstyle{plain}
\colorlet{necolor}{gray}

\usepackage{mdframed}

\mdfdefinestyle{nonessentialstyle}{
  topline           = false,
  bottomline        = false,
  rightline         = false,
  leftline          = true,
  linewidth         = 0.5pt,
  linecolor         = necolor!60,
  backgroundcolor   = white,
  innerleftmargin   = 6pt,
  innerrightmargin  = 2pt,
  innertopmargin    = 3pt,
  innerbottommargin = 4pt,
  splittopskip      = 6pt,
  splitbottomskip   = 4pt,
}

\newmdenv[style=nonessentialstyle]{nonessentialinner}

\title{Neural Point-Forms }

\author{%
  \begin{minipage}[t]{0.45\textwidth}
    \centering
    \textbf{Bruno Trentini}\thanks{Equal contribution.}\normalfont \\
    NVIDIA, University of Oxford \\
    London, UK
  \end{minipage}
  \begin{minipage}[t]{0.45\textwidth}
    \centering
    \textbf{Jacob Hume}\footnotemark[1] \normalfont \\
    University of Oxford \\
    Oxford, UK
  \end{minipage}
  \\[4em]
  \begin{minipage}[t]{0.45\textwidth}
    \centering
    \textbf{Vincenzo Antonio Isoldi} \\
    Max Planck Institute \\
    for Mathematics in the Sciences \\
    Leipzig, Germany
  \end{minipage}
  \begin{minipage}[t]{0.45\textwidth}
    \centering
    \textbf{Philipp Misof} \\
    Department of Mathematical Sciences \\
    Chalmers University of Technology and \\
    University of Gothenburg \\
    Gothenburg, Sweden
  \end{minipage}
  \\[6em]
  \begin{minipage}[t]{0.45\textwidth}
    \centering
    \textbf{Ekaterina S.~Ivshina} \\
    School of Engineering and Applied Sciences \\
    Harvard University \\
    Cambridge, MA, US
  \end{minipage}
  \begin{minipage}[t]{0.45\textwidth}
    \centering
    \textbf{Kelly Maggs}\thanks{Corresponding author: \texttt{maggs@mpi-cbg.de}.} \\
    Max Planck Institute \\
    of Molecular Cell Biology and Genetics \\
    Dresden, Germany
  \end{minipage}
}

\begin{document}

\maketitle

\begin{abstract}
Point cloud learning often rests on the premise that observed samples are noisy traces of an underlying geometric object, such as a manifold embedded in a high-dimensional feature space. Yet much of this geometry is not captured directly by coordinates, pairwise distances, or learned graph neighborhoods alone. In the smooth setting, differential forms are devices to encode higher order tangency information. In this work, we introduce a new family of principled learnable geometric features for point clouds called \textit{neural point-forms} (NPFs). In the absence of a natural tangency structure, we instead use Laplacian-based techniques from Diffusion Geometry to build a discrete model for comparing differential forms on point clouds via inner products. In the continuum, submanifolds of a shared ambient feature space are represented as comparison matrices, whose entries describe how pairs of feature forms interact with extrinsic tangency information. We make this intuition precise by proving the long-run consistency of comparison matrices under standard sampling, bandwidth, density, and manifold-hypothesis assumptions. This yields a compact, efficient and permutation-invariant neural layer whose output is a learned form-comparison matrix. Across synthetic and biologically relevant experiments, we show that NPFs provide a competitive, and interpretable representation, with the strongest benefits appearing when labels depend on sampling density, manifold-like structure, or response-relevant population geometry.

\end{abstract}


\section{Introduction}

Machine learning typically represents objects by real-valued features, treating each datum as a point in an abstract Euclidean feature space $\mathbb{R}^D$. In many applications, however, a datum is more naturally described not by a single vector but by a point cloud -- a population of vectors occupying a region of feature space. A tissue, for example, is characterized by the joint distribution of its cells across relevant biological features, not by any individual cell. Such settings are inherently \textit{extrinsic}: how a population sits within the ambient coordinates is as informative as intrinsic quantities like pairwise distances.



The manifold hypothesis makes this view precise. We model each observed point cloud as a sample from an underlying submanifold $\iota:M\hookrightarrow \mathbb{R}^D$. Different classes of point clouds correspond to different ways of occupying the same ambient space, resulting in distinct underlying submanifolds. In the tissue example, two biological conditions may share the same molecular features yet confine cell populations to distinct regions or geometries within $\mathbb{R}^D$. More generally, such submanifolds may arise from distinct latent mechanisms -- for example, biological regulation, physical constraints, or dynamical laws. The learning problem we study is therefore to classify submanifolds from their sampled point clouds, without access to the constraints that produced them.

 Such problems appear perfectly set up for Geometric Deep Learning (GDL) \cite{bronstein2021geometric}, which, as the name suggests, designs deep learning architectures using structures from geometry. The preeminent architectures in GDL are \textit{message passing graph neural networks} (GNNs)~\cite{gilmer2017neural, battaglia2018relational, hamilton2018inductiverepresentationlearninglarge, thomas2019kpconv, xu2018how, velickovic2018graph, dwivedi2021generalizationtransformernetworksgraphs}. Having matured as a field, the strengths and limitations of message passing have become apparent \cite{garg2020generalizationrepresentationallimitsgraph}. Topological Deep Learning (TDL)~\cite{papamarkou2024position}  extends this paradigm to `higher-order' relational structures such as cell complexes or hypergraphs, resolving some of these weaknesses while inheriting others \cite{ebli2020simplicial,bunch2020simplicial,bodnar2021weisfeiler,roddenberry2021principled,feng2019hypergraph,bodnar2021cellular,hajij2022topological,bodnar2022neural,hume2025sheafification, papillon2025topotuneframeworkgeneralized}.




In contrast to methods built on combinatorial structures are architectures that process and classify point clouds directly. The dominant family of point cloud classifiers takes the raw ambient coordinates of each point as input, processes each point through a shared neural network, and combines the per-point outputs into a single representation using a permutation-invariant pooling operation~\cite{qi2017pointnet,qi2017pointnet++,wang2019dynamic,thomas2019kpconv,zhao2021point,thomas2018tensor,fuchs2020se3,batatia2022mace}. Such methods, however, do not yet possess extensions to algorithms that capture higher order structure. 



\begin{wrapfigure}{r}{0.25\textwidth}
    \centering
    \includegraphics[width=0.25\textwidth]{Figures/form_norms.pdf}
    \caption{ \centering Pairings of \( k \)-form restrictions distinguish manifolds by tangency information.}
    \label{fig:form-norms}
\end{wrapfigure}






 Following on from the perspective in \cite{maggs2023simplicial}, the approach in this work is to use ambient differential forms as higher-order geometric features. Such features are activated or inactivated by the tangency and normal structure of a submanifold, as quantified by the inner products and norms of their restrictions (Fig. \ref{fig:form-norms}). Ambient differential forms are parametrized using neural networks, backpropagated and learned from data in practice. The content of a post-trained neural $k$-form has, in theory, learned the where $k$-sets of variables generate volume salient to the learning task. One limitation of \cite{maggs2023simplicial}, however, is that it required a simplicial structure on data, and included an expensive Riemann-sum approximation of volume integrals in the forward pass.
 



To overcome this limitation, we define \textit{Neural Point Forms} (NPFs) directly on point cloud data by leveraging an emerging paradigm in geometric data analysis of \textit{complex-free} geometric models \cite{jonesComputingDiffusionGeometry2026}. The two main examples of this paradigm, Spectral Exterior Calculus \cite{giannakis-berry2020} and Diffusion Geometry \cite{jones2024diffusiongeometry}, are based on using the Laplace-Beltrami operator $\Delta_g$ of a manifold to approximate the Riemmanian metric/inner product on differentials via the \textit{carré du champ identity}
\begin{equation} \label{smooth-carre-du-champ}
   \langle df, dh \rangle_g = \dfrac{1}{2} \Big( f \Delta_g h + h \Delta_g f - \Delta_g(fh) \Big).
\end{equation} The Laplace-Beltrami operator has a well-established approximation theory on data~\cite{coifman2006diffusion, Berry2016, Belkin2008, singer2006graph}, making the approximation of the inner products possible in practice. The motivation for this line of work is that many of the key objects in geometry can be estimated via formulae involving the Riemmanian metric, leaving the full manifold and differential structure of data `implicit' \cite{kawasaki2026diffusion}. 

The neural point-forms in this work are also defined `implicitly': we bypass defining a de Rham-like complex of differential forms on point clouds explicitly. To achieve this goal, we employ a manifold-like representation of data introduced \cite{jones2026manifolddiffusiongeometrycurvature,jonesComputingDiffusionGeometry2026} that we refer to as the \textit{Gram field} for convenience. This transformation recasts each point cloud as a parametrized collection of symmetric matrices (bilinear forms), whose evaluation approximates the inner product of ambient forms in the probablistic continuum limit, and whose kernel and image encode normal and tangency information of a submanifold. The upshot of this method is that our forward pass is extremely simple, consisting only of composing a parametrized bilinear form with the forward pass of an MLP.

To illustrate the viability of our model, we design a suite of synthetic experiments based on the extrinsic manifold learning perspective outlined above. In these, we show that we can interpret the learned forms, and reveal some somewhat surprising fail modes of traditional GNNs. Inspired by \cite{viswanath2025hiponet}, we design a new set of biologically meaningful real-world benchmarks using publically available single-cell sequenced patient-derived organoids \cite{RamosZapatero2023}, where each sample is a point cloud with biologically meaningful features. Despite being a completely novel and unstudied class of architectures, neural point-forms perform competitively with established SOTA point cloud classification, GNN, and TDL methods. 

\paragraph{Contributions} Our main contributions are as follows:
\begin{enumerate}
    \item Define \textit{Neural Point-Forms} (NPFs) as learnable geometric features of point cloud data; 
    \item Prove that the main objects are consistent with Riemannian definitions in the continuum under the manifold hypothesis;
    \item Develop an efficient algorithm for featurizing point cloud data that is competitive on real-world biological tasks.
\end{enumerate}

\section{Background on differential forms}
In this section, we give an informal and intuitive overview of differential forms and how they capture tangency information of submanifolds. We present intuitive definitions here, and provide a more rigorous exposition in Appendix \ref{app:background-diff-forms}. 

\begin{figure}[t]
    \centering
    
    \begin{subfigure}[t]{0.32\textwidth}
        \centering
        \includegraphics[width=\textwidth]{Figures/Gram_field.pdf}
        \caption{\centering The Gram field $G_M$ of a submanifold.}
        \label{fig:gram-field-smooth}
    \end{subfigure}
    \hspace{1em}
     \begin{subfigure}[t]{0.32\textwidth}
        \centering
        \includegraphics[width=\textwidth]{Figures/Discrete_Gram_field.pdf}
        \caption{\centering The Gram field $G_P$ of a sample $P \subset M$.}
        \label{fig:gram-field-discrete}
    \end{subfigure}
    \caption{\centering Tangency information is encoded by images of Gram fields and their approximations. }
    \label{fig:gram-and-formnorm}
\end{figure}

\paragraph{Basis forms in $\R^D$} We first provide an informal introduction to differential $k$-forms in ambient Euclidean space $\R^D$, which have a simple and interpretable structure. The \textit{$k$-th exterior algebra $\Lambda^k(dx^1, \ldots, dx^D)$ over $\R^D$}  is the vector space generated by the basis
\begin{equation}
      \big\{\, dx^I = dx^{i_1} \wedge \ldots \wedge dx^{i_k} \mid I = (1 \leq i_1 < \ldots < i_k \leq D) \in \text{Asc}(k,D) \, \big\}.
\end{equation} Each formal symbol $dx^I$ corresponds to a family of alternating functions
\begin{equation}
    dx^I_p : T_p \R^D \times \ldots \times T_p\R^D \to \R
\end{equation} that measure the volume spanned by $k$ tangent vectors in the subspace spanned by coordinates $I = \{ i_1, i_2, \ldots, i_k\}$. In this way, each multi-index $I$ corresponds to a $k$-dimensional feature subspace whose volume is measured by $dx^I$.

\paragraph{Differential forms in $\R^D$} A \textit{differential $k$-form} $\omega \in \Omega^k(\R^D)$ is the data of
\begin{equation}
    \omega = f_I \cdot dx^I := \sum_I f_I \cdot dx^I; \qquad f_I\in C^\infty(\R^D)
\end{equation} where $C^\infty(\R^D)$ is the algebra of smooth functions on $\R^D$ and $I$ ranges over $\text{Asc}(k,D)$. Intuitively, the \textit{scaling functions} rescale oriented volume in the $I$-th coordinate subspaces at each point, i.e., they reweight the basis forms
\begin{equation}
f_I(p) \cdot dx^I_p : T_p \R^D \times \ldots \times T_p\R^D \to \R
\end{equation} by a scaling factor depending on $p \in \R^D$. By definition, the data of a differential form on $\R^D$ is completely characterized by the coordinate vector of scaling functions
\begin{equation}
    f = \Big( \, f_I \mid I \in \text{Asc}(k,D) \Big) : \R^D \to \R^{\binom{D}{k}}.
\end{equation}

\paragraph{Alignment fields} Let $\iota : M \hookrightarrow \R^D$ be an embedded manifold. Define $\Pi_p : T_p\R^D \to T_pM$ to be the orthogonal projection onto the tangent space at a point $p \in M$ and $\nabla x^i$ be the gradient vector field of the $i$-th coordinate. The \textit{(i,j)-alignment field}\footnote{Our terminology.} is the mapping
\begin{equation} \label{eq:1form-innerprod}
    p \in M \; \; \mapsto \; \; \Big\langle \iota^*dx^i, \iota^*dx^j \Big\rangle_M(p) := \Big\langle \Pi_p \nabla x^i, \Pi_p \nabla x^j \Big\rangle_{\R^D}(p)
\end{equation} and measures the correlation between feature directions once projected onto the tangent space of $M$. Here, the notation $\iota^* dx^i$ is formally the pullback along $\iota$ (See App. \ref{app:background-diff-forms}). Analogously, the \textit{$(I,J)$-alignment field} is the mapping
\begin{equation}
p \; \; \mapsto \; \; \left\langle \iota^* dx^I,\iota^* dx^J \right\rangle_{M}(p)
:=
\det \left[
\left\langle
\iota^* dx^{i_a},
\iota^* dx^{j_b}
\right\rangle_{M}(p)
\right]_{a,b=1}^k
\label{eqn:IJ-alignment-field}
\end{equation} for multi-indices $I=(i_1<\cdots<i_k)$ and $J=(j_1<\cdots<j_k)$. Such fields capture higher order information about tangential \textit{volume} --- for example, the alignment field $$ \big\langle \,\iota^*dx^I, \iota^* dx^I \,\big \rangle_M(p) = \lVert \iota^* dx^I \rVert_M^2(p) $$ measures the (squared) volume of the $k$-cube \begin{equation}
        \text{Vol}_k^2\Big( \,\Pi_p \nabla x^{i_1},\Pi_p \nabla x^{i_2}, \cdots, \Pi_p \nabla x^{i_k} \, \Big) = \det_{i_a,i_b} \Big[ \Big\langle \Pi_p \nabla x^{i_a}, \Pi_p \nabla x^{i_b} \Big\rangle_{\R^D}(p) \Big]^2
\end{equation} spanned by the $I$-feature directions projected into the tangent space at $p \in M$.

\begin{wrapfigure}{r}{0.40\textwidth}
    \centering
    \resizebox{0.40\textwidth}{!}{%
        \begin{tikzpicture}[
    cell/.style={minimum size=0.38cm, inner sep=0pt},
    gridbox/.style={draw=black, thick, inner sep=0pt},
]

\newcommand{\drawgrid}[4]{%
  \begin{scope}[shift={(#1,#2)}]
    \draw[thick] (0,0) rectangle (3*0.38, 3*0.38);

    \foreach \i in {1,2} {
      \draw ({\i*0.38},0) -- ({\i*0.38},{3*0.38});
      \draw (0,{\i*0.38}) -- ({3*0.38},{\i*0.38});
    }

    \foreach \r/\c in {#4} {
      \fill[#3] ({\c*0.38+0.02},{(2-\r)*0.38+0.02}) 
        rectangle ({\c*0.38+0.36},{(2-\r)*0.38+0.36});
    }

    \draw[thick] (-0.1, -0.06) -- (-0.1, {3*0.38+0.06});
    \draw[thick] ({3*0.38+0.1}, -0.06) -- ({3*0.38+0.1}, {3*0.38+0.06});
  \end{scope}
}

\def\bs{1.14}    
\def\gapx{0.55}  
\def\gapy{0.45}  

\pgfmathsetmacro{\colA}{0}
\pgfmathsetmacro{\colB}{\bs + \gapx}
\pgfmathsetmacro{\colC}{2*\bs + 2*\gapx}

\pgfmathsetmacro{\rowA}{2*\bs + 2*\gapy}
\pgfmathsetmacro{\rowB}{\bs + \gapy}
\pgfmathsetmacro{\rowC}{0}

\definecolor{colD1}{HTML}{9B8EC4}
\definecolor{colD2}{HTML}{D47B7B}
\definecolor{colD3}{HTML}{C4B05A}

\definecolor{colAB}{HTML}{5BA3B5}
\definecolor{colAC}{HTML}{E09B5A}
\definecolor{colBC}{HTML}{7BBF7B}

\drawgrid{\colA}{\rowA}{colD1!50}{0/0, 0/1, 1/0, 1/1}

\drawgrid{\colB}{\rowA}{colAB!50}{0/0, 0/2, 1/0, 1/2}
\drawgrid{\colA}{\rowB}{colAB!50}{0/0, 0/1, 2/0, 2/1}

\drawgrid{\colC}{\rowA}{colAC!50}{0/1, 0/2, 1/1, 1/2}
\drawgrid{\colA}{\rowC}{colAC!50}{1/0, 1/1, 2/0, 2/1}

\drawgrid{\colB}{\rowB}{colD2!50}{0/0, 0/2, 2/0, 2/2}

\drawgrid{\colC}{\rowB}{colBC!50}{0/1, 0/2, 2/1, 2/2}
\drawgrid{\colB}{\rowC}{colBC!50}{1/0, 1/2, 2/0, 2/2}

\drawgrid{\colC}{\rowC}{colD3!50}{1/1, 1/2, 2/1, 2/2}

\pgfmathsetmacro{\matH}{3*\bs + 2*\gapy}
\pgfmathsetmacro{\matW}{3*\bs + 2*\gapx}
\pgfmathsetmacro{\bracketPad}{0.15}

\draw[very thick] ({-\bracketPad}, {-\bracketPad}) 
  -- ({-\bracketPad - 0.15}, {-\bracketPad})
  -- ({-\bracketPad - 0.15}, {\matH + \bracketPad})
  -- ({-\bracketPad}, {\matH + \bracketPad});

\draw[very thick] ({\matW + \bracketPad}, {-\bracketPad}) 
  -- ({\matW + \bracketPad + 0.15}, {-\bracketPad})
  -- ({\matW + \bracketPad + 0.15}, {\matH + \bracketPad})
  -- ({\matW + \bracketPad}, {\matH + \bracketPad});

\pgfmathsetmacro{\labelY}{\matH + 0.45}
\node[font=\small] at ({\colA + \bs/2}, \labelY) {$(1,2)$};
\node[font=\small] at ({\colB + \bs/2}, \labelY) {$(1,3)$};
\node[font=\small] at ({\colC + \bs/2}, \labelY) {$(2,3)$};

\pgfmathsetmacro{\labelX}{-0.75}
\node[font=\small] at (\labelX, {\rowA + \bs/2}) {$(1,2)$};
\node[font=\small] at (\labelX, {\rowB + \bs/2}) {$(1,3)$};
\node[font=\small] at (\labelX, {\rowC + \bs/2}) {$(2,3)$};

\end{tikzpicture}
    }
    \caption{\centering The second Gram field $G^{(2)}_M$ is constructed as a compound matrix of the first $G_M$.}
    \label{fig:compound-matrices}
\end{wrapfigure}

\paragraph{Gram fields} The \textit{$k$-th Gram field} is the mapping
\begin{equation}
    p \in M \, \, \mapsto \, \, G_M^{(k)}(p) := \Big[ \, \left\langle \iota^* dx^I,\iota^* dx^J \right\rangle_{M}(p) \, \Big]_{IJ} 
\end{equation} that assembles the collection of alignment fields into a parametrized symmetric matrix. As observed in \cite{jones2026manifolddiffusiongeometrycurvature, kawasaki2026diffusion}, the Gram field in dimension 1 is the matrix form of the projection operator
\begin{equation} 
    \Pi_p =G_M(p) : \R^D \to T_pM \subset \R^D
\end{equation} and thus recovers the tangent and normal spaces
\begin{equation}\label{eq:gramfield-decomposition}
    T_p M = \text{Im} \,G_M(p) \qquad  N_pM = \text{Ker} \, G_M(p)
\end{equation} as the respective image and kernel. As an object, the Gram field thus faithfully encodes the tangency and normal information of a submanifold as a family of $(D \times D)$-matrices parametrized by $p \in M$. Analogous decompositions hold for the $k$-th cotangent spaces (See App. \ref{app:background-diff-forms}, \ref{eq:k-normal-tangent-decomp}).

\paragraph{Inner products of forms} Define the local inner product on $k$-forms $\omega = f_I \cdot dx^I$ and $\eta = h_J \cdot dx^J$ by the mapping (using Einstein notation)
\begin{equation}
    p \, \, \mapsto \, \,  \Big(\, f^T G_M^{(k)} h\, \Big)(p) =  f_I(p) h_J(p)\left\langle \iota^* dx^I,\iota^* dx^J \right\rangle_{M}(p) 
\label{eqn:main-local-inner-product}
\end{equation} where $f$ and $h$ are the scaling vectors corresponding to $\omega$ and $\eta$ respectively. From this perspective, we interpret $G_M$ as a parametrized bilinear form on the scaling functions that preserves the tangential part and discards the normal part according to the decomposition in Equation \ref{eq:k-normal-tangent-decomp}. For a given measure $\mu$ on $M$, the \textit{global inner product} is given by
\begin{equation}
    \langle\!\langle \iota^* \omega, \iota^* \eta \rangle\!\rangle_M := \int_M \Big(f^T G_M^{(k)}h\Big)(p) d\mu,
\end{equation} i.e., the average local inner product with respect to $\mu$. Intuitively, the inner product scores the average alignment of submanifolds with pairs of ambient forms (Fig. 1).

\section{Neural Point-Forms}
\label{sec:formulation}


We now translate the theory above into the point cloud setting using the techniques of Diffusion Geometry \cite{jones2026manifolddiffusiongeometrycurvature, jonesComputingDiffusionGeometry2026}. We show our estimators are consistent, and detail how our constructions lead naturally to a algorithms for processing point clouds.

\paragraph{Discrete \CdC.}
Suppose now that $P \subset \mathbb{R}^D$ is a finite point cloud. Since $P$
is discrete, it does not have a nontrivial tangent or cotangent bundle. We
therefore replace the smooth Laplace--Beltrami operator by a discrete
Laplacian $L_P : \mathbb{R}^P \to \mathbb{R}^P$ where $\mathbb{R}^P$ is the set of functions $f: P \to \R$. In our implementation, $L_P$ is taken to be the discrete variable-bandwidth Laplacian, since it is well-equipped for dealing with noisy nonuniform samples in practice~\ref{discrete-variable-bandwidth-laplacian} from \cite{Berry2016}. The corresponding \textit{discrete \cdc} is the bilinear map
\begin{equation}
\Gamma_{L_P}(f,h)(p)
:=
\frac{1}{2}
\Big(
f(p)L_P h(p)
+
h(p)L_P f(p)
-
L_P(fh)(p)
\Big),
\qquad
f,h \in \mathbb{R}^P,\ p\in P,
\end{equation}
where $fh$ denotes pointwise multiplication.

\paragraph{Gram fields on point clouds} Let $x^i : P \to \mathbb{R}$ denote the restriction of the $i$th ambient
coordinate function to $P$. For $k\geq 1$, the $k$th extrinsic Gram field is the map $
G_P^{(k)} : P \to \R^{\binom{D}{k}\times \binom{D}{k}}$
whose entries are indexed by increasing multi-indices
$I=(i_1<\cdots<i_k)$ and $J=(j_1<\cdots<j_k)$ and are given by
\begin{equation}
\left(G_P^{(k)}(p)\right)^{IJ}
:=
\det
\left[
G_P^{i_a j_b}(p)
\right]_{a,b=1}^k
=
\det
\left[
\Gamma_{L_P}(x^{i_a},x^{j_b})(p)
\right]_{a,b=1}^k .
\end{equation}
Intutively, the discrete \cdc \textit{defines} the discrete alignment field, motivated by equality witnessed via the smooth \cdc formula \ref{smooth-carre-du-champ}. We thus think of $G_P^{(k)}(p)$ as the discrete analogue the smooth Gram field.

\paragraph{Discrete inner products of forms} Inner products on forms over a point cloud are defined analogously to smooth case, i.e., via the discrete Gram field as a parametrized bilinear operator. Explicitly,  $\omega = f_I \cdot dx^I$ and $\eta = h_J \cdot dx^J$ define the local inner product on $k$-forms over $P$ by the mapping
\begin{equation} \label{def:gramfield-innerproduct}
    p \, \, \mapsto \, \,  \big\langle \, \omega, \eta \, \big\rangle_P(p) := \Big(\, f^T G_P^{(k)} h\, \Big)(p).
\end{equation} Similarly, for a discrete measure $\mu_P$ on $P$, define the global inner product of $\omega$ and $\eta$ by
\begin{equation}
\label{def:gramfield-global-inner-product}
    \llangle \omega, \eta \rrangle_P = \sum_{p \in P} \langle \, \omega, \eta \, \rangle_P(p) \mu(p).
\end{equation}

\paragraph{Consistency results} We formalize the claim 

\begin{equation}
    \llangle \omega ,\eta \rrangle_P \qquad \text{is an estimator of} \qquad \llangle \iota^*\omega, \iota^*\eta\rrangle_M
\end{equation} when $P \subset \R^D$ is sampled from an embedded submanifold $\iota : M \hookrightarrow \R^D$. Appendix~\ref{appendix:B} contains a detailed discussion, from which we extract the following. 

\begin{informalthm}{Continuum Limits}{main-text-continuum-limits}
    Let $M \subset_{\iota} \R^D$ be a manifold with probability density \( q \) from which point clouds $P_{n}=\{ p_{i} \overset{\text{i.i.d.}}{\sim} q \}_{i=1}^n$ are sampled. For each $P_n$, construct the local inner product \( \langle -,-  \rangle_{P_{n}}  \)~\ref{def:gramfield-innerproduct} using the variable-bandwidth Laplacian \( L_{P} \) as described in Appendix~\ref{appendix:B}.  Let $\omega, \eta \in \Omega^k(\mathbb{R}^{D})$. 
  
    \begin{enumerate}
        \item \textit{(Local)}  For any $p \in M$,
    \begin{equation}
        \lim_{n \to \infty} \langle\, \omega, \eta \, \rangle_{P_n}(p) = \langle \iota^*\omega, \iota^*\eta\rangle_M(p) \text{ almost surely},
        \label{eqn:main-text-local-continuum}
    \end{equation} and this convergence is uniform in \( p \) provided \( M \) is compact.
        \item \textit{(Global)} If \( M \) is compact, then
    \begin{equation}
        \lim_{n \to \infty} \llangle\, \omega, \eta \, \rrangle_{P_n} = \llangle \iota^*\omega, \iota^*\eta\rrangle_M \text{ almost surely},
        \label{eqn:main-text-global-continuum}
    \end{equation} where the summation~\eqref{def:gramfield-global-inner-product} is against a standard choice of \( \mu \in \mathbb{R}^{P} \).
    
    \end{enumerate}
   
\end{informalthm} 

\begin{proof}[Proof (Sketch)]

In order to prove Theorem~\ref{ithm:main-text-continuum-limits}, we first provide an expansion on the observation in~\cite{jones2026manifolddiffusiongeometrycurvature}, itself based on \cite{Berry2016}, that the discrete Gram field $G^{(1)}_{P}$ consistently estimates the continuous Gram field $G^{(1)}$. Specifically, we
\begin{enumerate}
    \item analyze how the variable-bandwidth Laplacian parameters affect convergence rates in low-sampling density regions likely to occur in practice (Corollary~\ref{cor:cdc-consistency});
    \item explicate exponential probability bounds under which consistency holds uniformly in the \( (i,j) \)-alignment fields comprising \( G^{(1)} \)(Lemma~\ref{lemma:uniformizing-in-f}); and
    \item explicate exponential probability bounds under which consistency holds uniformly in $p$ when $M$ is compact (Lemma \ref{lemma:uniformizing-in-i-with-compactness}). 
\end{enumerate}
These additional pieces allow us to push consistency estimates on $G^{(1)}$ (Proposition~\ref{prop:tangential-projector-consistency}) uniformly through the determinantal structure~\eqref{eqn:IJ-alignment-field}  in the \( (I,J) \)-alignment fields defining $G^{(k)}$ (Proposition~\ref{cor:kth-tangential-projector-consistency} and Corollary~\ref{cor:kth-tangential-projector-compactness-corollary}), and in turn the local inner product~\eqref{eqn:main-local-inner-product} (Theorem \ref{thm:locla-consistency-estimate}), from which we are able to extract the local continuum limit \eqref{eqn:main-text-local-continuum} almost surely by way of the probability bounds' exponentiality (Corollary~\ref{cor:local-continuum-limit}). Using $(3)$, we further deduce this convergence to be uniform in $p$ when $M$ is compact, and this underlies our further extraction of the global continuum limit~\eqref{eqn:main-text-global-continuum} (Corollary~\ref{cor:global-continuum-limit}). 
\end{proof}
 A feature of our theory is its faithfulness to practice. Our results hold, up to $k$-NN truncation, for the flexible class of variable-bandwidth Laplacians $\Delta_{P}$ used in our implementation. Moreover, they include explicit finite-sample dependence upon (and provable insensitivity to) estimation parameters for $\Delta_{P}$. All of our results also accommodate nonuniform sampling; in the local case \textit{arbitrarily} nonuniform from an unknown density. 

\paragraph{Neural $k$-forms.} The key observation is that ambient differential forms are completely characterized by their scaling functions. This means that differential forms can be parametrized as neural networks and back-propagated over learning tasks \cite{maggs2023simplicial}. Let $\Theta$ denote the parameter space, and let
\begin{equation}
F_{\theta}:\mathbb{R}^D
\to
\mathbb{R}^{\ell \times \binom{D}{k}}
\end{equation}
be a neural network with parameters $\theta \in \Theta$. For each $a\in\{1,\ldots,\ell\}$, the $a$th row of
$F_\theta$ defines the scaling functions of an ambient $k$-form
\begin{equation}
\omega_{\theta}^{(a)}
=
\sum_I
F_{\theta}^{aI} \cdot dx^I
\in \Omega^k(\mathbb{R}^D),
\end{equation}
parametrized by $\theta$. 

\paragraph{Comparison matrices} The final step is to produce an interpretable, learnable vectorization of point-cloud data that depends parametrically on a collection of ambient neural $k$-forms. Given a point cloud $P$, the neural $k$-forms produce the learnable \textit{comparison matrix}
\begin{equation}
\left(C_P(\theta)\right)_{ab} := \llangle \omega^{(a)}, \omega^{(b)} \rrangle_P = 
\sum_{p\in P}
\,
F_{\theta}^{a}(p)^\top
G_P^{(k)}(p)
F_{\theta}^{b}(p) \mu_P(p)
\end{equation}
where $F_{\theta}^{a}(p)\in\mathbb{R}^{\binom{D}{k}}$ denotes the $a$th row of
$F_\theta(p)$. The comparison matrix mapping \begin{equation}
P \, \, 
 \mapsto \, \, 
C_P(\theta)
\in \R^{\ell \times \ell}
\end{equation} represents each point cloud as an $(\ell \times \ell)$-matrix and is permutation-invariant in the points of $P$. It's dimension depends on the number of features rather the number of points. It's entries correspond to inner products of global ambient features, and are across samples in the shared ambient feature space.

\paragraph{Architecture} A key point is that the above theory leads to a straightforward and efficient architecture. Fix an ambient space $\R^D$ consisting of the global features, form dimension $k$ and choice of Laplacian for each point cloud. Intialize a neural $k$-form 
\begin{equation} 
F_\theta: \R^D \to \R^{\ell \times \binom{D}{k}}
\end{equation} with $\ell$ features. Choose a density $\mu_P$ for each point cloud $P$ in the dataset.
\begin{enumerate}
    \item (Gram field transform) The first step performs the mapping
    \begin{equation} 
\Big\{ P \subset \R^D \Big\} \hspace{2em} \mapsto \hspace{2em} \Big\{ G_P^{(k)} : P \to  \R^{\binom{D}{k} \times \binom{D}{k}} \Big\}  
\end{equation} that represents each point cloud $P$ in a dataset by it's Gram field, in practice, a tensor of size $P \times \binom{D}{k} \times \binom{D}{k}$. Since this requires the (expensive) computation of compound matrices, this is performed as pre-computation step as in Algorithm \ref{alg:gram-field}.
\item (Comparison matrix pass) The comparison matrix layer is the mapping
 \begin{equation} 
\big(\,G_P^{(k)}\, , F_\theta \, \big)  \hspace{2em} \mapsto \hspace{2em}\left(C_P(\theta)\right)_{ab} := 
\sum_{p\in P}
\,
F_{\theta}^{a}(p)^\top
G_P^{(k)}(p)
F_{\theta}^{b}(p) \mu_P(p)
\end{equation} described in Algorithm  \ref{alg:comparison-matrix}. This layer is in the forward pass -- the parameters can be thus be back-propagated against any downstream learning task. As per the right-hand side above, this simply requires summing the evaluations of neural scaling vectors against the bilinear forms $G_P^{(k)}(p)$ over $p$ for each pair $1 \leq a\leq b \leq \ell$.  
\end{enumerate} In practice, we vectorize $C_P(\theta)$ with various readout layers, for example
by taking its upper triangular entries, and pass the resulting feature vector to a downstream classifier.

\section{Experiments}
\label{sec:experiments}

\begin{figure}[t]
  \centering
  \newlength{\eTenGridW}
  \setlength{\eTenGridW}{\linewidth}
  \begin{minipage}[c]{\eTenGridW}
    \begin{minipage}[c]{\eTenGridW}
      \centering
      \includegraphics[
        width=\linewidth,
        height=0.15\eTenGridW,
        keepaspectratio
      ]{Figures/E10-ASSETS/e10_nonuniform_samples.pdf}
    \end{minipage}
    \vspace{0.012\eTenGridW}
    \noindent\textcolor{gray!40}{\rule{\linewidth}{0.4pt}}
    \vspace{0.008\eTenGridW}
    \begin{minipage}[c]{0.575\eTenGridW}
      \centering
      \includegraphics[
        width=\linewidth,
        height=0.28\eTenGridW,
        keepaspectratio
      ]{Figures/E10-ASSETS/e10_forms_fields.pdf}
    \end{minipage}
    \hfill
    \begin{minipage}[c]{0.385\eTenGridW}
      \centering
      \includegraphics[
        width=\linewidth,
        height=0.28\eTenGridW,
        keepaspectratio
      ]{Figures/E10-ASSETS/e10_kappa_stress_errors.pdf}
    \end{minipage}
  \end{minipage}
  \vspace{0.5em}
  \caption{%
    \textbf{Circle one-form density-correction check.}
    \emph{Top:} point clouds sampled from von-Mises densities on \(S^{1}\) as the concentration \(\kappa\) increases.
    \emph{Bottom left:} ambient representatives of the two one-forms used in the inner-product estimator.
    \emph{Bottom right:} MAE under nonuniform sampling shows that density correction substantially reduces the sampling-density bias.
  }
  \label{fig:circle-one-form-density-correction}
\end{figure}

\paragraph{Circles vs lines} We first generated a synthetic dataset consisting of two classes of solution trajectories to the ODEs
\begin{equation}
\text{circles } (\dot{x}, \dot{y}) = (y, -x) \qquad \text{and} \qquad
\text{lines } (\dot{x}, \dot{y}) = (x, y)
\end{equation}
in $\R^2$ over random initial conditions (Figure~\ref{fig:synthetic-form-diagnostics}a). In addition to achieving perfect test-train accuracy, the learned 1-forms appeared to recover the structure of the underlying ODE. Surprisingly, some common GNN benchmarks performed poorly on this relatively straightforward task as Table~\ref{tab:toy_auroc_comparison} shows. We additionally tested non-uniform sampling densities and Gaussian perturbations, finding that variable-bandwidth density correction behaved as expected by our theory (Figure~\ref{fig:circle-one-form-density-correction}).

\paragraph{RNA kinetics}
As a biologically motivated synthetic example, we replaced the synthetic ODEs above with a standard model of RNA transcription dynamics used in RNA velocity \cite{lamanno2018rna}. For each gene \(g\), the dynamics of unspliced and spliced RNA counts are given by
\begin{equation} \label{rna-velocity-eqn}
    \dot{u}_g = \alpha_g - \beta_g u_g, \qquad
    \dot{s}_g = \beta_g u_g - \gamma_g s_g,
\end{equation}
where the features $u_g$ and $s_g$ denote the unspliced and spliced RNA counts, respectively. The parameters $\alpha_g, \beta_g$ and $\gamma_g$ denote the transcription, splicing, and degradation rates.

As a toy model of a biological intervention, we generate one class of flow lines from a fixed parameter set and a second class from a small perturbation of those parameters (Figure~\ref{fig:synthetic-form-diagnostics}b).. The resulting classes correspond to perturbed RNA dynamics. Although the trajectories overlap in feature space, the learned comparison matrix is linearly separable in its first two principal components, showing that our algorithm distinguishes the two classes by their distinct tangency structure (Figure~\ref{fig:synthetic-form-diagnostics}b).

        \begin{figure}[!t]
  \centering

  \newlength{\storyheight}
  \setlength{\storyheight}{0.14\textheight}

  \begin{minipage}[c]{0.35\linewidth}
    \centering
    \includegraphics[
      height=\storyheight,
      keepaspectratio
    ]{Figures/EXPERIMENT_GRID/CIRCLES_LINES_v2.pdf}

    {\small\textbf{(a) Circle vs. Lines one-forms}}
  \end{minipage}
  \hfill
  \begin{minipage}[c]{0.6\linewidth}
    \centering
    \includegraphics[
      height=\storyheight,
      keepaspectratio
    ]{Figures/EXPERIMENT_GRID/RNA_GRID_V2.PDF}

    \vspace{0.25em}
    {\small\textbf{(b) RNA kinetic regimes}}
  \end{minipage}

  \caption{
  Qualitative diagnostics for learned point-form representations on controlled synthetic tasks.
  \textbf{(a)} Learned one-form channels separate linear and circular structure in the line--circle task.
  \textbf{(b)} Synthetic RNA-kinetic regimes generated by different \((\alpha,\beta,\gamma)\) parameters, with a PCA of learned comparison matrices showing class separation from global inner-product features.
  }
  \label{fig:synthetic-form-diagnostics}
  \end{figure}

   \begin{table*}[t]
            \centering
            \caption{\textbf{Synthetic AUROC comparison.} NPF methods are competitive on both synthetic tasks with at most \(68{,}866\) trainable parameters. NPF-Tri matches the best overall performance on Circles vs.\ Lines, while NPF-Gram is the best NPF variant on synthetic RNA velocity. Gap rows report AUROC differences to the best NPF for each task.}
            \label{tab:toy_auroc_comparison}
            \large
            \setlength{\tabcolsep}{6pt}
            \renewcommand{\arraystretch}{1.25}
            \begin{adjustbox}{max width=\textwidth}
            \begin{tabular}{
                l
                !{\vrule width 1.1pt}
                ccc
                !{\vrule width 1.1pt}
                cccccccc
            }
            \toprule
            &
            \multicolumn{3}{c!{\vrule width 1.1pt}}{\textbf{NPF methods}} &
            \multicolumn{8}{c}{\textbf{Baselines}} \\[4pt]
            \cmidrule(lr){2-4}\cmidrule(lr){5-12}
            \rule{0pt}{2.8ex}\textbf{Dataset / Metric}
            &
            \shortstack{NPF-Tri\\$k=1$}
            & \shortstack{NPF-Gram\\$k=1$}
            & \shortstack{NPF-Flat\\$k=1$}
            & \shortstack{GCN}
            & \shortstack{GIN}
            & \shortstack{Graph\\SAGE}
            & \shortstack{Graph\\Trans.}
            & \shortstack{MLP}
            & \shortstack{Point\\Net++}
            & \shortstack{Point\\Trans.}
            & \shortstack{TDL} \\
            \midrule
            \rowcolor{gray!6}
            \textit{Max params}
            &
            ${\leq}68{,}866$ & ${\leq}68{,}866$ & ${\leq}68{,}866$
            &
            23{,}303 & 23{,}305 & 45{,}447 & 173{,}063 & 31{,}111
            &
            1{,}481{,}735 & 2{,}165{,}767 & 23{,}815 \\[2pt]
            \midrule
            \textbf{Circles vs.\ Lines}
            &
            \bestcell{$1.000{\scriptstyle\pm0.000}$}
            & $0.997{\scriptstyle\pm0.003}$
            & $0.985{\scriptstyle\pm0.028}$
            &
            $0.735{\scriptstyle\pm0.020}$
            & $0.995{\scriptstyle\pm0.005}$
            & $0.726{\scriptstyle\pm0.031}$
            & \underline{$1.000{\scriptstyle\pm0.000}$}
            & $0.729{\scriptstyle\pm0.028}$
            & \underline{$1.000{\scriptstyle\pm0.000}$}
            & \bestcell{$1.000{\scriptstyle\pm0.000}$}
            & $0.731{\scriptstyle\pm0.024}$ \\
            \rowcolor{gray!12}
            \textit{\small Gap to best NPF}
            &
            \zerodelta & \npfdelta{0.003} & \npfdelta{0.015}
            &
            \npfdelta{0.265} & \npfdelta{0.005} & \npfdelta{0.274} & \zerodelta
            & \npfdelta{0.271} & \zerodelta & \zerodelta & \npfdelta{0.269} \\[3pt]
            \midrule
            \textbf{RNA velocity (Syn.)}
            &
            $0.941{\scriptstyle\pm0.012}$
            & \underline{$0.958{\scriptstyle\pm0.014}$}
            & $0.937{\scriptstyle\pm0.017}$
            &
            $0.717{\scriptstyle\pm0.024}$
            & $0.589{\scriptstyle\pm0.045}$
            & $0.725{\scriptstyle\pm0.027}$
            & \underline{$0.973{\scriptstyle\pm0.040}$}
            & $0.728{\scriptstyle\pm0.023}$
            & \underline{$0.958{\scriptstyle\pm0.010}$}
            & \bestcell{$0.982{\scriptstyle\pm0.009}$}
            & $0.861{\scriptstyle\pm0.014}$ \\
            \rowcolor{gray!12}
            \textit{\small Gap to best NPF}
            &
            \npfdelta{0.017} & \zerodelta & \npfdelta{0.021}
            &
            \npfdelta{0.241} & \npfdelta{0.369} & \npfdelta{0.233} & \npfdelta{0.009}
            & \npfdelta{0.230} & \zerodelta & \modeldelta{0.024} & \npfdelta{0.097} \\
            \bottomrule
            \end{tabular}
            \end{adjustbox}
            \end{table*}

        \begin{table}[t]
        \caption{\textbf{PDO response prediction.} AUROC values are reported as mean \(\pm\) confidence interval over completed aggregate runs. Delta columns show AUROC differences to the best selected NPF variant for each task. Positive values favor NPF; negative values favor the row model. Parameter counts are shown where comparable. Selected NPF variants are those best on at least one PDO response task.}
            \label{tab:pdo-main-response}
            \centering
            \tiny
            \setlength{\tabcolsep}{3pt}
            \renewcommand{\arraystretch}{1.04}
            \resizebox{\linewidth}{!}{%
            \begin{tabular}{@{}p{0.14\linewidth}>{\centering\arraybackslash}p{0.14\linewidth}*{3}{>{\centering\arraybackslash}p{0.13\linewidth}G}@{}}
            \toprule
            Model &
            \shortstack{Trainable\\params} &
            \shortstack{PDO response\\tumor-selective\\AUROC \(\uparrow\)} &
            \shortstack{Best NPF\\gap} &
            \shortstack{PDO response\\EFP\\AUROC \(\uparrow\)} &
            \shortstack{Best NPF\\gap} &
            \shortstack{PDO response\\CAF\\AUROC \(\uparrow\)} &
            \shortstack{Best NPF\\gap} \\
            \midrule
            Logistic regression & -- & 0.651 \(\pm\) 0.169 & \npfdelta{0.035} & 0.615 \(\pm\) 0.103 & \npfdelta{0.091} & 0.551 \(\pm\) 0.034 & \npfdelta{0.082} \\
            Random forest & -- & 0.551 \(\pm\) 0.121 & \npfdelta{0.135} & 0.501 \(\pm\) 0.104 & \npfdelta{0.205} & 0.516 \(\pm\) 0.078 & \npfdelta{0.117} \\
            SVM RBF & -- & 0.527 \(\pm\) 0.082 & \npfdelta{0.159} & 0.563 \(\pm\) 0.081 & \npfdelta{0.143} & 0.411 \(\pm\) 0.086 & \npfdelta{0.222} \\
            MLP & 26{,}562--31{,}111 & 0.554 \(\pm\) 0.062 & \npfdelta{0.132} & 0.652 \(\pm\) 0.110 & \npfdelta{0.054} & 0.577 \(\pm\) 0.115 & \npfdelta{0.056} \\
            PointNet & 1{,}465{,}666--1{,}481{,}735 & 0.611 \(\pm\) 0.080 & \npfdelta{0.075} & 0.494 \(\pm\) 0.083 & \npfdelta{0.212} & 0.448 \(\pm\) 0.095 & \npfdelta{0.185} \\
            PointNet++ & 1{,}465{,}666--1{,}481{,}735 & 0.611 \(\pm\) 0.080 & \npfdelta{0.075} & 0.494 \(\pm\) 0.083 & \npfdelta{0.212} & 0.448 \(\pm\) 0.095 & \npfdelta{0.185} \\
            PointTransformer & 2{,}154{,}882--2{,}165{,}767 & 0.587 \(\pm\) 0.067 & \npfdelta{0.099} & 0.558 \(\pm\) 0.117 & \npfdelta{0.148} & 0.518 \(\pm\) 0.074 & \npfdelta{0.115} \\
            GCN & 18{,}434--23{,}303 & 0.493 \(\pm\) 0.154 & \npfdelta{0.193} & 0.534 \(\pm\) 0.121 & \npfdelta{0.172} & 0.474 \(\pm\) 0.159 & \npfdelta{0.159} \\
            GIN & 18{,}436--23{,}305 & 0.618 \(\pm\) 0.124 & \npfdelta{0.068} & 0.685 \(\pm\) 0.116 & \npfdelta{0.021} & \bestcell{\underline{0.636 \(\pm\) 0.074}} & \modeldelta{0.003} \\
            GraphSAGE & 36{,}354--45{,}447 & 0.579 \(\pm\) 0.127 & \npfdelta{0.107} & 0.627 \(\pm\) 0.139 & \npfdelta{0.079} & 0.569 \(\pm\) 0.154 & \npfdelta{0.064} \\
            GraphTransformer & 155{,}522--173{,}063 & 0.544 \(\pm\) 0.112 & \npfdelta{0.142} & \bestcell{\underline{0.710 \(\pm\) 0.068}} & \modeldelta{0.004} & 0.565 \(\pm\) 0.100 & \npfdelta{0.068} \\
            TDL & 18{,}946--23{,}815 & 0.551 \(\pm\) 0.109 & \npfdelta{0.135} & 0.549 \(\pm\) 0.062 & \npfdelta{0.157} & 0.577 \(\pm\) 0.150 & \npfdelta{0.056} \\
            \specialrule{1.2pt}{2.5pt}{2.5pt}
            \multicolumn{8}{l}{\textit{Selected NPF variants: best for at least one reported task}} \\
            \specialrule{0.6pt}{1.2pt}{1.2pt}
            NPF-Tri \(k=1\) & \(\le 68{,}866\) & \underline{0.665 \(\pm\) 0.092} & \npfdelta{0.021} & \underline{0.700 \(\pm\) 0.050} & \npfdelta{0.006} & \underline{0.633 \(\pm\) 0.082} & \zerodelta \\
            NPF-Gram \(k=1\) & \(\le 68{,}866\) & \underline{0.664 \(\pm\) 0.057} & \npfdelta{0.022} & 0.664 \(\pm\) 0.102 & \npfdelta{0.042} & \underline{0.589 \(\pm\) 0.047} & \npfdelta{0.044} \\
            NPF-Flat \(k=1\) & \(\le 68{,}866\) & 0.629 \(\pm\) 0.082 & \npfdelta{0.057} & \underline{0.706 \(\pm\) 0.064} & \zerodelta & \underline{0.589 \(\pm\) 0.084} & \npfdelta{0.044} \\
            NPF-Flat \(k=2\) & \(\le 68{,}866\) & \bestcell{\underline{0.686 \(\pm\) 0.097}} & \zerodelta & \underline{0.704 \(\pm\) 0.148} & \npfdelta{0.002} & 0.581 \(\pm\) 0.073 & \npfdelta{0.052} \\
            \bottomrule
            \end{tabular}%
            }
        \end{table}
\paragraph{Patient-derived organoid response tasks.} For a more realistic real-world task, we adapted the benchmark from \cite{viswanath2025hiponet} on publicly available patient-derived organoids data (PDOs) \cite{RamosZapatero2023, mendeley2022pdo} where the underlying governing response equations are unknown. PDOs are three-dimensional cultures grown from patient tumors and are used to measure treatment response in a setting that preserves aspects of the original tissue. A sample is a condition-level empirical measure \(\mu_a=m_a^{-1}\sum_{\ell=1}^{m_a}\delta_{z_{a\ell}}\), where \(z_{a\ell}\in\mathbb{R}^D\) is the marker vector of one cell in condition \(a\). The label is \(y_a=\mathbf{1}\{S_a>\tau\}\), where \(S_a\)  is a matched response score. Based on interpretable biological markers we report three endpoints \: tumor-selective response, epithelial function perturbation (EFP), which measures treatment-induced changes in tumor epithelial cell state, and cancer-associated fibroblast (CAF)-mediated epithelial reprogramming, which measures how nearby support cells in the tumor microenvironment influence the behavior and identity of tumor cells (See. \ref{subsec:pdo-response-data-construction}). These endpoints are relevant from both the biological and geometric perspectives: they separate direct tumor killing, functional state change, and microenvironment-driven factors and we can detect response geometry in marker space: local direction, marker covariations, and population-level shifts that distinguish treated conditions from controls. 

\paragraph{PDO result pattern.} Table~\ref{tab:pdo-main-response} and support a bounded claim. The selected NPF rows give the highest point estimate for tumor-selective response. Differences are \emph{smaller than} the reported confidence intervals. A compact form-comparison representation is competitive with larger point-cloud and graph models on condition-level response clouds. We also observe that performance does not simply improve by increasing \(k\): tumor-selective endpoint is strongest for NPF-Flat \(k=2\), but EFP and CAF endpoints are strongest among selected NPF rows at \(k=1\). Geometrically, this points to first-order directions and pairwise marker covariants as the stable signal in these PDO tasks, with higer degree volume terms helping in a more endpoint-specific way. This matches the formulation defined in section~\ref{sec:formulation}: \(G_P^{(k)}\) can expose higher-order tangent information, but the data does not require every endpoint to use it. 

Regarding density kernels, synthetic tests treat density as a nuisance because the support relation is fixed while the sampling law changes. In PDO tasks, local cell abundance can itself be part of the response. The practical reading is therefore narrower: variable bandwidth is useful for adapting neighborhood scale, but \textit{removing all density variation can also remove biological signal}.

\section{Discussion}
\paragraph{Experimental takeaway.} The consistency results justify \(G_P^{(k)}\) as a discrete estimator of restricted form pairings under the stated smooth-sampling assumptions. The experiments test whether the induced comparison matrix \(C_P(\theta)\) is useful as a learned point-cloud representation. On controlled systems, including line--circle trajectories and synthetic RNA-kinetic regimes, NPFs identify class structure tied to tangent directions and perturbations of an underlying vector field. On patient-derived organoid response tasks, where the governing biological relation is unknown, the same form-comparison layer remains competitive under a small parameter budget: the reported NPF variants use at most \(68{,}866\) trainable parameters, achieve the strongest point estimate on tumor-selective response, and are statistically comparable with the leading models on EFP and CAF. These results support a bounded empirical claim: compact comparisons of learned ambient forms can encode response-dependent cell-state geometry, including tangent, covariance, and density-related information, without relying on the parameter scale of the largest point-cloud baselines.

\paragraph{Limitations} One limitation is the number of features required to parametrize a neural $k$-form grows factorially in both $k$ and in ambient dimension $D$. This is prohibitive even when applying low-dimensional differential forms in high-dimensional feature spaces. A second limitation is that complex-free geometric methods, including Diffusion Geometry, tend to scale poorly to higher dimensional datasets due to the curse of dimensionality.

\paragraph{Scaling properties} In this paper, we introduce an entirely novel architecture. Thus, our primary focus was on small synthetic and real-world datasets that illustrate the basic properties of the model. Competing message-passing approaches such as GNNs and TDL are known to suffer from over-smoothing \citep{li2018deeper,oono2019graph,cai2020note,rusch2023survey} and over-squashing \citep{alon2021bottleneck,topping2022understanding,di2023over}, making their utility limited on very large datasets. Given our architecture relies on a radically different foundation, one future work is to apply neural point forms on significantly bigger datasets to explore whether their scaling behaviour may overcome some of the limitations of other models. 

\paragraph{Interpretability} An important point is that our extrinsic architecture only makes sense in the context of a global feature space. In such situations, the extrinsic features have an application specific meaning. For example, the features in our biological example and in RNA sequencing correspond to concentrations of certain biological molecules. As discussed above, the post-trained ambient $k$-forms $F_\theta$ encode which $k$-dimensional feature subspaces were most salient for the learning task. A direction of future work is to design methods whereby the learned forms may be interpreted, for example, to understand and describe higher order combinations of features are relevant to applications.


\paragraph{Beyond.} Beyond the experiments in this paper, Neural Point-Forms suggest a broader route for using geometry inside modern ML systems, including agentic pipelines that must store, retrieve, and act over structure state spaces \citep{ha2018worldmodels,yao2023react,park2023generative}. The featurization presented in this project could complement standard point-set and embedding methods by exposing local tangent structure, density, orientation, and multiscale organization, which are precisely the kinds of signals lost in purely nearest-neighbor retrieval \citep{qi2017pointnet,bronstein2021geometric,coifman2006diffusion}. Future work could also consider whether these geometric summaries improve memory, planning, and intervention modules in domains like cell states, proteins, materials, robotics, and scientific simulations \citep{hirani2003discrete,desbrun2003discrete,lewis2020retrieval}.




{
\small

\bibliographystyle{unsrtnat}
\bibliography{references}
}

\newpage
\appendix

\section*{Appendices}




\tableofcontents

\section{Riemannian Metrics, Differential Forms and $L^2$-inner Products}
\label{app:background-diff-forms}
\label{appendix:A-smooth-review}

In this section, we provide a rigorous exposition of differential forms, pullbacks and their $L^2$-inner products induced by Riemannian metrics. In particular, we justify the expository definitions provided in the main text. A more comprehensive overview can be found in either \cite{Lee2018} or \cite{Jost2017}.

\paragraph{Gram matrices} Let $V$ be an inner product space with inner product $\langle-, -\rangle$. The \textit{Gram matrix} of a tuple $(v_1, \ldots, v_\ell)$ of vectors $v_i \in V$ is the matrix
\begin{equation}
    G(\mathbf{v}) = \Big[ \langle v_i,v_j \rangle \Big]_{i,j} \in \R^{\ell \times \ell}.
\end{equation}

\paragraph{Differential forms} A concise and unhelpful definition of a \textit{differential $k$-form} $\omega \in \Omega^k(M)$ on a manifold $M$ is as a smooth section of the $k$-th exterior algebra of the cotangent bundle. Equivalently, $\omega \in \Omega^k(M)$ specifies a family of alternating functions
\begin{equation}
    \omega_p : T_pM \times \ldots \times T_pM \to \R
\end{equation} parametrized over points $p \in M$. This function $\omega_p$ takes as input $k$  tangent vectors $(v_1, \ldots, v_k)$ at $p$ and outputs a real number, loosely interpreted as volume spanned by $(v_1, \ldots, v_k)$ with respect to $\omega$.

\paragraph{Ambient forms} In Euclidean space, differential $k$-forms exhibit a convenient global structure. Explicitly, for coordinate functions $(x^1, \ldots, x^D)$ on $\R^D$, the differential $k$-forms in $\R^D$ satisfy the following isomorphism
\begin{equation}
    \Omega^k(\R^D) \cong C^\infty(\R^D) \otimes \Lambda^k(dx_1, \ldots, dx_D)
\end{equation}  where the tensor product is over $\R$ and $\Lambda^k(dx^1, \ldots, dx^D)$ is the $k$-th exterior algebra over the symbols $dx^1, \ldots, dx^D$. In other words, this means that every differential $k$-form in $\R^D$ can be written in a canonical form\footnote{Employing Einstein notation.}
\begin{equation}
    \omega = f_I \cdot dx^I := \sum_I f_I \cdot dx^I
\end{equation} for some smooth \textit{scaling function} $f_I \in C^\infty(\R^D)$ and basis form $dx_I \in \Lambda^k(dx^1, \ldots, dx^D)$, where $I$ ranges over multi-indices $I = (1\leq i_1 < \cdots < i_k \leq D)$.

\paragraph{Pullbacks} Suppose we have a smoothly embedded manifold $\iota : M \hookrightarrow \R^D.$ Since $\iota$ is an embedding, there are natural inclusions and orthogonal projections of tangent spaces
\begin{equation}
    T_p M \hookrightarrow T_p \R^D \xrightarrow{\Pi_p} T_p M
\end{equation} at each point $p \in M$. Let $\omega \in \Omega^k(\R^D)$. The \textit{pullback form} $\iota^*\omega \in \Omega^k(M)$ of $\omega$ along $\iota$ is the differential $k$-form defined by the family of alternating functions
\begin{equation}
    \iota^*\omega_p : T_pM \times \ldots \times T_pM \to \R; \hspace{2em} \iota^*\omega_p(v_1, \ldots, v_k) = \omega_{\iota(p)}(v_1, \ldots, v_k)
\end{equation} where we have identified $v_p \in T_pM$ with its inclusion into $T_p\R^D$.
\begin{figure}[H]
    \centering
    \begin{subfigure}[t]{0.4\textwidth}
        \centering
        \includegraphics[width=\linewidth, height=0.7\textheight, keepaspectratio]{Figures/restriction-cartoon/fig1_1form.pdf}
        \caption{\(k=1\)}
        \label{subfig:k1}
    \end{subfigure}
    \hfill
    \begin{subfigure}[t]{0.4\textwidth}
        \centering
        \includegraphics[width=\linewidth, height=0.7\textheight, keepaspectratio]{Figures/restriction-cartoon/fig2_2form.png}
        \caption{\(k=2,\ D=3\)}
        \label{subfig:k2}
    \end{subfigure}
    \caption{\( \ref{subfig:k1}\): The pullback of a \( 1 \)-form on \( \mathbb{R}^{2} \) (viewed as a vector field) to a \( 1 \)-manifold. \( \ref{subfig:k2}\): The pullback of a \( 2 \)-form on \( \mathbb{R}^{3} \) (viewed as a normal field) to a \( 2 \)-manifold.}
    \label{fig:restricted-forms}
\end{figure}

\paragraph{Riemannian metrics} A Riemannian metric $g$ on a manifold $M$ corresponds to a family of inner products
\begin{equation}\label{eq:tangent-innerprod}
    \langle -, - \rangle_M(p): T_p M \otimes T_pM \to \R
\end{equation} parametrized by points $p \in M$. The inner product induces an isomorphism
\begin{equation} 
    \flat : T_{p}M \underset{\sharp}{\overset{\flat}{\rightleftarrows}} T_{p}^{*}M  : \ \sharp
\end{equation} between tangent and cotangent space for all $p$, and specifies a family of inner products
\begin{equation} \label{eq:cotangent-innerprod}
    \langle -, - \rangle_M(p) :T_p^* M \otimes T_p^*M \to \R
\end{equation} where we have abused notation by conflating \ref{eq:tangent-innerprod} and \ref{eq:cotangent-innerprod}. 

\paragraph{Smooth \textit{Carré-du-champ}} Let $\Delta_M : C^\infty(M) \to C^\infty(M)$ be the Laplace-Beltrami operator on smooth functions. The \textit{Carré-du-champ} identity
\begin{equation}
    \big\langle \,df, dh \,\big\rangle_{M}(p) = \Gamma_{\Delta_M}(f,h)(p) :=\dfrac{1}{2} \Big( f \Delta_M h + h \Delta_M f - \Delta_M(fh) \Big)(p)
\end{equation} relates the inner product on the differentials $df,dh \in \Omega^1(M)$ of smooth functions $f,h \in C^\infty(M)$ with the Laplace-Beltrami operator and the point-wise multiplication of functions.

\paragraph{Point-wise inner products on forms} For two differential $1$-forms $\alpha,\beta \in \Omega^1(M)$, the point-wise inner product $\langle \alpha, \beta \rangle_M(p)$ is directly defined in  Equation \ref{eq:cotangent-innerprod}. In local coordinates $(y^1, \ldots, y^d)$, $k$-forms $\alpha$ and $\beta$ can be written as
\begin{equation}
    \alpha = f_I \cdot dy^I \qquad \beta = h_J \cdot dy^J
\end{equation} where $I = (1\leq i_1 < \ldots < i_k \leq d)$. The inner product of $\alpha$ and $\beta$ at $p$ is then given by
\begin{equation}
    \langle \, \alpha, \beta \, \rangle_M(p) = f_I(p) h_J(p) \det_{a,b} \Big[ \, \langle \,dy^{i_a}, dy^{i_b} \,\rangle_M(p) \Big].
\end{equation}

\paragraph{Gram fields} Suppose we have two ambient $k$-forms
\begin{equation}
    \omega = f_I \cdot dx^I \qquad \text{and} \qquad \eta = h_J \cdot dx^J.
\end{equation} The pullback $\iota^*$ commutes with multiplication of basis forms by scaling functions. In conjunction with the pointwise bilinearity of the inner product, this shows that
\begin{equation}
    \langle\, \iota^* \omega, \iota^*\eta\, \rangle_M(p) = \Big\langle f_I \cdot \iota^*dx^I, h_J \cdot \iota^* dx^J \, \Big \rangle_M(p) = \Big( \,f^T \, G^{(k)}_M \, h\, \Big)(p)
\end{equation} where we have conflated $f_I$ and $h_J$ with their restrictions $\iota^* f_I$ and $\iota^* h_J$. This shows that the expression on the right-hand side, which was given as the \textit{definition} of the inner product in the main text \ref{def:gramfield-innerproduct}, is consistent with the standard, intrinsic inner product of forms.

\paragraph{Gradient interpretation} Let 
\begin{equation} 
\nabla_M : C^\infty(M) \to \mathfrak{X}(M)
\end{equation}be the gradient operator on $M$ and 
\begin{equation} 
\nabla : C^\infty(\R^D) \to \mathfrak{X}(\R^D)
\end{equation} be the ambient Euclidean gradient. When $\iota : M \hookrightarrow \R^D$ is a submanifold, the two gradient operators are related by
\begin{equation} \label{eq:extrinsic-exterior-derivative}
    \nabla_M \iota^*f(p) = \Pi_p \nabla f(p)
\end{equation} for all $p$. In the main text, we \textit{defined} the inner product on basis $1$-forms to satisfy
\begin{equation} \label{eq:pullback-projection-equiv}
    \Big\langle \iota ^*dx^i, \iota^*dx^j \Big\rangle_M(p) = \Big\langle \Pi_p \nabla x^i, \Pi_p \nabla x^j\Big\rangle_{\R^D}(p).
\end{equation} To see this is equivalent, recall that the Riemannian metric identifies
\begin{equation}
    \langle \, df,dh \, \rangle_M(p ) = \langle \, \nabla_M f, \nabla_Mh \, \rangle_M(p)
\end{equation} for all $f,h \in C^\infty(M)$ and $p \in M$. Combining this with \ref{eq:extrinsic-exterior-derivative} and using the fact that the exterior derivative commutes with pullbacks, we get
\begin{equation}
    \Big\langle \iota ^*dx^i, \iota^*dx^j \Big\rangle_M(p) = \Big\langle d\iota ^*x^i, d\iota^*x^j \Big\rangle_M(p) = \Big\langle \Pi_p \nabla x^i, \Pi_p \nabla x^j\Big\rangle_{\R^D}(p).
\end{equation} as claimed.

 \paragraph{Normal-tangent decompositions} In \ref{eq:gramfield-decomposition} we claimed that the tangent and normal spaces were
\begin{equation}
    \text{Im} \,G_M(p) = T_pM \qquad \text{Ker} \, G_M(p) = N_pM.
\end{equation} To see this, note that \ref{eq:pullback-projection-equiv} implies that the entries of the Gram field at $p$ are
\begin{equation}
    G_M(p)_{i,j} = \Big\langle \Pi_p \nabla x^i, \Pi_p \nabla x^j\Big\rangle_{\R^D}(p) = \Big\langle  \nabla x^i, \Pi_p \nabla x^j\Big\rangle_{\R^D}(p),
\end{equation} where the second equation follows from the fact that the orthogonal projection operator $\Pi_p$ is self-adjoint and idempotent $\Pi_p^2 = \Pi_p$. This shows that the Gram field $G_M(p)$ is the matrix representation of $\Pi_p$ in the standard directional coordinate basis $\nabla x^i$, with the decomposition thus following immediately. For $k > 1$ and $p \in M$, the general decomposition
\begin{equation} \label{eq:k-normal-tangent-decomp}
    \text{Im}\,G_M^{(k)}(p) = \Lambda^k T_pM \qquad \quad \text{Ker} \, G_M^{(k)}(p) = \big(\,\Lambda^kT_pM \,\big)^\perp
\end{equation}follows from the fact that $G_M^{(k)}$ is the matrix form of the induced operator
\begin{equation}
    \Lambda^k \Pi_p^* : \Lambda^k T^*_p \R^D \to  \Lambda^kT^*_pM
\end{equation} over the wedge basis and, thus, also an orthogonal projection.

\paragraph{$L^2$ inner products}
The pointwise pairing of $k$-forms defines a real-valued function on $M$.
For $\alpha,\beta \in \Omega^k(M)$, we write this function as
\begin{equation}
p \mapsto \langle \alpha,\beta\rangle_{M}(p).
\end{equation}
A global comparison is obtained by integrating this pointwise pairing over the manifold. Given a measure $\mu$ on $M$, define
\begin{equation}
\left\langle\!\left\langle
\alpha,\beta
\right\rangle\!\right\rangle_{L^2(M,\mu)}
:=
\int_M
\left\langle
\alpha,\beta
\right\rangle_{M}(p)
\,d\mu(p).
\end{equation}
We assume $\alpha$ and $\beta$ are compactly supported to avoid issues with the integral being undefined. In the unweighted case, $\mu=dV_M$ is the Riemannian volume measure. If $q:M \to \R_{\geq 0}$ is a sampling density, we may instead take
$\mu=q\,dV_M$. Since our forms arise by restricting ambient forms to $M$, for
$\omega,\eta\in\Omega^k(\mathbb{R}^D)$ we define
\begin{equation}
\left\langle\!\left\langle
\iota^*\omega,\iota^*\eta
\right\rangle\!\right\rangle_{L^2(M,\mu)}
=
\int_M
\left\langle
\iota^*\omega,\iota^*\eta
\right\rangle_{M}(p)
\,d\mu(p).
\end{equation}

\newpage
\section{Algorithms}
In this section we present the basic pseudocode for the two main algorithms. In our implementation of Algorithm \ref{alg:gram-field}, the Laplacian we construct is always the variable bandwidth Laplacian specified in \ref{discrete-variable-bandwidth-laplacian}.

\begin{algorithm}[H]
\caption{Gram field precomputation}
\label{alg:gram-field}
\begin{algorithmic}[1]
\Require Point cloud $P=\{p_1,\ldots,p_n\}\subset\mathbb{R}^D$, degree $k$
\Ensure $G_P^{(k)}:P\to\operatorname{Sym}_{\binom{D}{k}}$

\State Construct a discrete Laplacian $L_P:\mathbb{R}^P\to\mathbb{R}^P$.
\State Let $x^i\in\mathbb{R}^P$ be the $i$th coordinate function on $P$.

\For{$i,j\in\{1,\ldots,D\}$}
    \State $G_P^{ij}\gets
    \frac{1}{2}\left(
    x^i\odot L_Px^j+x^j\odot L_Px^i-L_P(x^i\odot x^j)
    \right)$
\EndFor

\State Let $\mathcal{I}_k=\{(i_1<\cdots<i_k)\subset\{1,\ldots,D\}\}$.

\For{$p\in P$}
    \For{$I,J\in\mathcal{I}_k$}
        \State $\left(G_P^{(k)}(p)\right)^{IJ}
        \gets
        \det\left[G_P^{i_a j_b}(p)\right]_{a,b=1}^k$
    \EndFor
\EndFor

\State \Return $G_P^{(k)}$
\end{algorithmic}
\end{algorithm}

\begin{algorithm}[H]
\caption{Comparison matrix forward pass}
\label{alg:comparison-matrix}
\begin{algorithmic}[1]
\Require Point cloud $P$, Gram field $G_P^{(k)}$, weights $\mu_P$, neural $k$-form $F_\theta:\mathbb{R}^D\to
\mathbb{R}^{\ell\times\binom{D}{k}}$
\Ensure Comparison matrix $C_P(\theta)\in\operatorname{Sym}_{\ell}$

\State $C_P(\theta)\gets 0_{\ell\times\ell}$

\For{$p\in P$}
    \State $F_p\gets F_\theta(p)$
    \State $C_P(\theta)\gets C_P(\theta)
    +\mu_P(p)\,F_pG_P^{(k)}(p)F_p^\top$
\EndFor

\State \Return $C_P(\theta)$
\end{algorithmic}
\end{algorithm}

\begin{figure}[H]
    \centering
    \begin{tikzpicture}[
    >=Stealth,
    arr/.style={->, thick},
    mapsto/.style={|->, thick},
    lbl/.style={font=\small, midway, above},
]


\node (RD) {$\mathbb{R}^{D}$};

\node[right=1.0cm of RD] (FRange) {$\mathbb{R}^{\ell \times \binom{D}{k}}$};
\draw[arr] (RD) -- node[lbl] {$F_\theta$} (FRange);

\draw[decorate, decoration={brace, amplitude=5pt, raise=18pt}]
    (RD.north west) -- (FRange.north east)
    node[midway, above=24pt, font=\small]
    {$\ell$ ambient $k$-forms $\omega^{(a)}_\theta$ on $\mathbb{R}^{D}$};

\node[right=1.0cm of FRange] (PerP)
    {$\bigl\{\,\bigl\langle \omega^{(a)}, \omega^{(b)}\bigr\rangle_{P}\,\bigr\}_{a,b}$};
\draw[arr] (FRange) -- node[lbl] {$F_\theta\, G^{(k)}_P\, F_\theta^{\top}$} (PerP);

\node[above=0.55cm of PerP, font=\small] (Shape)
    {$\mathbb{R}^{\ell \times \ell \times |P|}$};
\draw[thick] (PerP) -- node[midway, right=2pt, font=\small] {$\cong$} (Shape);

\node[right=1.0cm of PerP] (CP) {$C_P(\theta)\,\in\,\mathbb{R}^{\ell \times \ell}$};
\draw[arr] (PerP) -- node[lbl] {$\sum_{p\in P}\mu_P(p)$}
    node[below=1pt, midway, font=\footnotesize] {readout} (CP);

\node[right=0.5cm of CP] (task) {downstream};
\draw[arr] (CP) -- (task);


\def\cx{2.4} \def\cy{-5.0}
\def\cs{2.7}
\def\cd{0.95}

\node at (\cx - 1.8, \cy+0.5*\cs) (input) {$\bigl(a,\; \omega^{(a)}_\theta\bigr)$};

\draw[thick, fill=white] (\cx+\cd, \cy+\cd) rectangle +(\cs, \cs);
\draw[thick, fill=gray!10]
    (\cx+\cs, \cy) -- +(\cd, \cd) -- +({\cd}, {\cd+\cs}) -- +(0, \cs) -- cycle;
\draw[thick, fill=gray!10]
    (\cx, \cy+\cs) -- +(\cd, \cd) -- +({\cd+\cs}, \cd) -- +(\cs, 0) -- cycle;
\draw[thick, fill=white] (\cx, \cy) rectangle +(\cs, \cs);

\node[font=\normalsize] at (\cx+0.5*\cs, \cy+0.5*\cs)
    {$\bigl\langle \omega^{(a)}, \omega^{(b)} \bigr\rangle_{P}\!(p_i)$};

\coordinate (pLineL) at ($({\cx+\cs}, {\cy}) + (0, -0.45)$);
\coordinate (pLineR) at ($({\cx+\cs+\cd}, {\cy+\cd}) + (0, -0.45)$);
\draw[thick, {|}-{|}] (pLineL) -- (pLineR);
\node[font=\footnotesize, below=2pt, xshift=12pt] at ($(pLineL)!0.5!(pLineR)$) {$p_i \in P$};

\draw[mapsto] (input.east) -- (\cx-0.15, \cy+0.5*\cs);

\def\mx{8.4} \def\my{-5.0}
\draw[thick, fill=white] (\mx, \my) rectangle +(\cs, \cs);
\node[font=\normalsize] at (\mx+0.5*\cs, \my+0.5*\cs)
    {$\bigl\langle\!\!\bigl\langle \omega^{(a)}, \omega^{(b)} \bigr\rangle\!\!\bigr\rangle_{P}$};

\draw[mapsto] (\cx+\cs+\cd+0.15, \cy+0.5*\cs) --
    node[midway, above, font=\small] {$\displaystyle\sum_{i} \mu_P(p_i)$}
    (\mx-0.15, \my+0.5*\cs);

\draw[gray, densely dotted] (\cx+0.5*\cs, \cy+\cs+\cd+0.15) -- (PerP.south);
\draw[gray, densely dotted] (\mx+0.5*\cs, \my+\cs+0.15) -- (CP.south);

\end{tikzpicture}
    \caption{NPF Learning Pipeline.}
    \label{fig:placeholder}
\end{figure}



\section{Discretization}
\label{appendix:B}

The purpose of this appendix section is to investigate in detail the theoretical and practical groundwork for neural point-forms. In Section~\ref{Appendix:subsec:discrete-kfcs} we further discuss the relevant definitions and implementation details used in our experiments, and in Section~\ref{Appendix:Subsec:theory} we prove consistency results justifying these definitions and practices. Future appendices apply this groundwork in experiments.

Throughout this section, $M \subset \mathbb{R}^{D}$ is a \( d \)-manifold with nonvanishing probability density $q \in C^3(M) \cap L^1(M)$, from which a point cloud $P=\{ p_{i} \overset{\text{i.i.d.}}{\sim} q \}_{i=1}^n$ is sampled.

\subsection{Point Form Estimation}
\label{Appendix:subsec:discrete-kfcs}
\label{appendix:B1}

In this subsection, we provide details regarding the definition and implementation of point forms on data. As seen in Appendix~\ref{appendix:A-smooth-review} and the main text, the continuous pipeline relies on three components: the metric \( g \) with which to locally compare \( 1 \)-forms, a formalism extending this construction to the local comparison of general \( k \)-forms, and a measure \( \mu \) against which one may average local comparisons to obtain a global comparison. We elaborate upon the discretization of each notion in turn.

\subsubsection{Estimating the Carré du Champ From Data}
\label{subsubsec:cdc-estimation-from-data-including-variable-bandwidth-laplacian-contruction}

In this section we assume $M$ potentially noncompact, as we wish to accommodate the possibility that the sampling density $q>0$ can become arbitrarily close to zero. 

\paragraph{Variable-Bandwidth Diffusion Kernels and Laplacians}

One approach to estimating the carré du champ $\Gamma$ on $P$ is to directly estimate a diffusion generator $\Delta$ on \( M \) and from it mimic Equation~\ref{eqn:one-of-many-cdc-identities}.\footnote{There are similar but distinct approaches which bypass intermediate Laplacian estimation, see e.g.~\cite{bamberger2025carr, jonesComputingDiffusionGeometry2026}.} This is generally accomplished by defining a particular transition kernel on the data and examining the ``discrete generator" of a corresponding Markov chain. Our practice and theory both work with the flexible class of \textbf{variable-bandwidth diffusion kernels}, following the construction in~\cite{Berry2016} given by the following sequence of operations:
\begin{enumerate}
    \item Select $\beta < 0.$
    \item Define a ``bandwidth function" \( \rho:M \to \mathbb{R}_{+} \). In the ideal setting, one takes
    \begin{equation}
        \rho(p_i) := q(p_i)^\beta 
    \end{equation} using the true sampling density. 
    \item Select a global bandwidth parameter $\varepsilon > 0$. 
    \item Define the \textit{unnormalized} variable bandwidth kernel 
    \begin{equation}
       K_{\varepsilon}(x,y)=h\left(\frac{\|x-y\|^{2}}{\varepsilon \rho(x) \rho(y)}\right)     \end{equation}for some $h$ with exponential decay. By default, we take $h(u):=e^{-u/4}$.

    \item Define the unnormalized density estimate
    \begin{equation}
        q_\varepsilon(p_i) = \sum_{j=1}^N \dfrac{K_\varepsilon^{}(p_i,p_j)}{\rho(p_i)^d}.
    \end{equation}
    \item Select a density sensitivity parameter $\alpha > 0.$
    \item Define the symmetric \textit{$\alpha$-density weighted kernel} 
    \begin{equation}
        K_{\varepsilon,\alpha}^{}(p_i, p_j) := \dfrac{{K_\varepsilon}^{}(p_i,p_j)}{q_\varepsilon^{}(p_i)^\alpha q_\varepsilon^{}(p_j)^\alpha}.
    \end{equation}
    \item In order to normalize into a Markov (probability) kernel, first take the row sums
    \begin{equation}
        q_{\varepsilon,\alpha}^{}(p_i) := \sum_{j=1}^N K_{\varepsilon,\alpha}^{}(p_i,p_j),
    \end{equation}
    \item and then define 
    \begin{equation}
        \hat{K}^{}_{\varepsilon,\alpha}(p_i,p_j) := \dfrac{K_{\varepsilon,\alpha}^{}(p_i,p_j)}{q_{\varepsilon,\alpha}^{}(p_i)}.
    \end{equation}
    \item Define the ``discrete diffusion generator"
    \begin{equation} \label{discrete-variable-bandwidth-laplacian}
       L_{P}:= L_{\varepsilon,\alpha,\beta}^{}(p_i,p_j) := \dfrac{ \hat{K}_{\varepsilon,\alpha}^{}(p_i,p_j)-\delta_{ij}}{\varepsilon \rho(p_i)^2} =  \dfrac{ \hat{K}_{\varepsilon,\alpha}^{}(p_i,p_j)-\delta_{ij}}{\varepsilon q(p_i)^{2\beta}}.
    \end{equation} where we have set $\rho = q^\beta$. 
\end{enumerate}

\begin{remark}
The above construction is an instantiation for variable-bandwidth kernels of a general  ``3-kernel procedure" within the diffusion maps literature:
    \begin{enumerate}
        \item An unnormalized variable bandwidth kernel $K_\varepsilon^{}$
        \item An $\alpha$-density weighted kernel $K_{\varepsilon,\alpha}^{}$. 
        \item The row-normalized (Markov) kernel $\hat{K}_{\varepsilon,\alpha}$.
    \end{enumerate} 
    Note that when $\alpha=0$ the second step is skipped. When $\beta = 0$ then the recovered operator equals the classical (fixed-bandwidth) diffusion maps Laplacian \cite{coifman2006diffusion}.
\end{remark}

There are a number of quantities from the idealized discussion above that need to estimated in order to calculate the discrete Laplacian in practice. These are:
\begin{enumerate}
    \item Empirical density estimate $q_0$ of the true sampling density $q$. 
    \item Number of \( k \)-nearest neighbors to use (rather than fully-connected).
    \item Bandwidth choice $\varepsilon$.
    \item Dimension parameter $d$, or estimate thereof. 
\end{enumerate}
We discuss these in turn.

\paragraph{Empirical density estimate} In practice, we do not have access to the ground truth sampling density $q$, so it needs to be estimated. We follow the estimator offered in~\cite{Berry2016}. For each $p_i \in P$, let $I(i,j)$ denote the index of the $j$th nearest neighbor of $p_i$.
Define the local scale
\[
\rho_0(p_i)
:=
\left(
\frac{1}{k_0-1}\sum_{j=2}^{k_0} \|p_i-p_{I(i,j)}\|^2
\right)^{1/2}.
\]
Set
\[
\varepsilon_0^{1/2}
:=
\frac{1}{N}\sum_{i=1}^N \rho_0(x_i),
\qquad
\widetilde{\rho}_0(p_i)
:=
\frac{\rho_0(x_i)}{\varepsilon_0^{1/2}}.
\]


Now let $\{p_i\}_{i=1}^n \subset \mathbb{R}^D$ be data sampled from a density $q$ on a
$d$-dimensional manifold $M \subset \mathbb{R}^n$. Using the symmetric kernel with local bandwidth $\rho_0$, define
\[
q_0(p_i)
:=
(2\pi)^{-d/2}\frac{1}{\rho_0(p_i)^d\,N}
\sum_{l=1}^N
\exp\!\left(
-\frac{\|x_i-x_l\|^2}{2\,\rho_0(p_i)\rho_0(p_l)}
\right).
\]
Equivalently,
\[
q_0(x_i)
=
(2\pi\varepsilon_0)^{-d/2}\frac{1}{\widetilde{\rho}_0(p_i)^d\,N}
\sum_{l=1}^N
\exp\!\left(
-\frac{\|p_i-p_l\|^2}
{2\varepsilon_0\,\widetilde{\rho}_0(p_i)\widetilde{\rho}_0(p_l)}
\right).
\] 
In \cite{Berry2016} it is shown that choosing \( \rho := q_{0}^\beta \) entails \( \rho=q^\beta + O(\varepsilon) \) with high probability. Our local theory (Sections~\ref{subsubsec:kolmogorov-convergence}-\ref{subsubsec:local-kfc-consistency}) is stated under the assumption \( \rho=q^\beta + O(\varepsilon) \), and thus applies to this empirically-determined bandwidth choice with high probability. 


\begin{remark}
The theory in \cite{Berry2016}, and in turn our own (Appendix~\ref{Appendix:Subsec:theory}), are written in terms of the fully connected graph Laplacian. Following \cite{Berry2016}, we use the \( k \)-NN graph as an approximation for efficiency in practice, but note that our forthcoming convergence results are not strictly applicable in this case.
\end{remark}

\paragraph{Notation} To reduce ambiguity, for the remainder of this appendix, we will notate discrete quantities on \( P \) with subscripts tracking the explicit hyperparameters they depend on (such as \( \varepsilon\)) in lieu of the notation \( (\_) _{P}\) adopted in the main text. Similarly, we will variously denote quantities associated to the Riemannian manifold \( (M,g) \) with subscripts \( (\_)_{M} \) or \( (\_)_{g} \), whichever is less ambiguous given the context. Finally, we will employ the `formal pullback' notation \( \langle P^{*} \omega, P^{*} \eta \rangle  \)  to denote the local inner product on discrete \( k \)-forms in order to emphasize that \( \omega, \eta \) live on the ambient space \( \mathbb{R}^{D} \).

\paragraph{Bandwidth estimation}
All of our experiments fix \( \varepsilon=1 \) without further tuning.  


\paragraph{Dimension estimation}
 The empirical density estimation with theoretical guarantees as described above requires explicit knowledge of the intrinsic dimension \(d\) of the manifold \( M \), and therefore our theory does too. In practice, we estimate intrinsic dimension via local PCA a la~\cite{fukunaga1971algorithm} when it is unknown. 



\subsubsection{Discrete Inner Products}
\label{subsubsec:discrete-global-inner-product}

Recall (cf.~\eqref{def:gramfield-global-inner-product}) that in the smooth setting, the global (\( L^2 \)) inner product between restricted \( k \)-forms is given by $$\langle \! \langle \iota ^*\omega,  \iota^* \eta \rangle \! \rangle_{g}=\int_{x \in M} \langle \iota^* \omega,  \iota^* \eta \rangle_{g}(x) \, dV(x) , $$
i.e. the average over $M$ with respect to the volume form. If the sampling density $q$ is taken into account, there is a second inner product $$\langle \! \langle \iota ^* \omega , \iota^*\eta \rangle \! \rangle_{g, q}= \int_{x \in M} \langle \iota^*\omega, \iota^*\eta \rangle_{g}(x) q(x)\, dV(x),  $$
which weighs this average by assigning higher weights to dense regions. Which inner product to emulate in the discrete case depends on whether one would like to classify manifolds or manifolds \textit{with densities}. These correspond to averaging the discrete local inner product $\langle \_, \_ \rangle_{\varepsilon}$ against one of the following discrete measures on $P$:
\begin{enumerate}
    \item (Density-corrected) 
 \begin{equation}
        \langle \! \langle P^*\omega,P^* \eta \rangle \! \rangle_{\varepsilon} := \dfrac{1}{N}\sum_{p_i} \langle \omega ({p_i}), \eta({p_i})\rangle_{\varepsilon} \dfrac{1}{q(p_i)} { \approx \dfrac{1}{N}\sum_{p_i} \langle \omega_{p_i}, \eta_{p_i}\rangle_P \dfrac{1}{q_0(p_i)}}
    \end{equation}
    \item (Non-density corrected)
    \begin{equation}
        \langle \! \langle \omega, \eta \rangle \! \rangle_{\varepsilon,q}:= \dfrac{1}{N}\sum_{p_i} \langle \omega({p_i}), \eta({p_i})\rangle_{\varepsilon}
    \end{equation}
\end{enumerate}

Corollary~\ref{cor:global-continuum-limit} will formalize the validity of this intuition.

\subsection{Consistency Theory for \( k \)FC Discretization}
\label{Appendix:Subsec:theory}

\label{appendix:B2}



    

The goal of this subsection is to prove a (formal version of) Theorem~\ref{ithm:main-text-continuum-limits} from the main text and examine its consequences. Our central result, from which Theorem~\ref{ithm:main-text-continuum-limits} is derived, is as follows.

\newtheorem*{theorem*}{Theorem~\ref{thm:locla-consistency-estimate} (Informal)} \begin{theorem*}
    Let $p_{i} \in M$, $\omega,\eta \in \Omega^{k}(\mathbb{R}^{D})$. With high probability, $$\langle P^{*}\omega, P^{* }\eta \rangle_{\varepsilon, n}(p_{i})= \langle \iota^{*}\omega, \iota^{*}\eta \rangle_{g}(p_{i}) + O_{k, \omega ,\eta,p_{i}} \left(  \varepsilon, \frac{q(p_{i})^{(1-d \beta)/2}}{\sqrt{ n } \varepsilon^{2+d/4}}, (1+2\max_{ \ell }  |x^{\ell}(p_{i})|) \frac{q(p_{i})^{-c_{2}}}{\sqrt{ n } \varepsilon^{1/2+d/4}}  \right)  $$
for a constant \( c_{2} \) depending on \( \alpha, \beta \), and this estimate can be made uniform in $p_{i}$ provided $M$ is compact. 

\end{theorem*}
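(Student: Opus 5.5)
The strategy is to reduce everything to the first Gram field and then push the error through multilinear algebra. I would start from the Berry--Harlim pointwise bias/variance expansion for the variable-bandwidth generator, applied to smooth test functions $f$. This gives, with high probability,
\[
L_{\varepsilon,\alpha,\beta}f(p_i)=\mathcal{L}f(p_i)+O\!\left(\varepsilon,\frac{q(p_i)^{(1-d\beta)/2}\|f\|_{C^3}}{\sqrt n\,\varepsilon^{2+d/4}},\frac{q(p_i)^{-c_2}\|\nabla f\|}{\sqrt n\,\varepsilon^{1/2+d/4}}\right).
\]
Here $\mathcal{L}=\Delta_g+c\,\nabla\log q\cdot\nabla$ is the limiting generator. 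Its drift term is first order, so it cancels in the carré du champ, and $\Gamma_{\mathcal L}(f,h)=\langle df,dh\rangle_g$.

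\textbf{Step 1: the $(i,j)$-alignment fields.} Apply the expansion to $f\in\{x^i,\,x^j,\,x^ix^j\}$ and combine through the definition of $\Gamma_{L}$. The coefficients $x^i(p_i)$ multiplying $L x^j$ produce the factor $(1+2\max_\ell|x^\ell(p_i)|)$ in the variance term. The first-order $q$-dependence cancels, leaving $G^{ij}_{\varepsilon,n}(p_i)=G^{ij}_M(p_i)+E_{ij}$, with $E_{ij}$ of the stated three-term size. This is Corollary~\ref{cor:cdc-consistency} together with Proposition~\ref{prop:tangential-projector-consistency}.

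\textbf{Step 2: uniformity over finitely many pairs.} The Hoeffding/Bernstein tails are exponential and there are only $D^2$ index pairs. A union bound (Lemma~\ref{lemma:uniformizing-in-f}) therefore makes all $E_{ij}$ small simultaneously, at the cost of a $\log D$ factor in the probability.

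\textbf{Step 3: passing to the compound matrix.} $G^{(k)}$ is obtained from $G^{(1)}$ by $k\times k$ minors, so its entries are polynomials of degree $k$. The entries of $G_M$ are bounded by $1$, since $G_M$ is a projection. The determinant map is therefore locally Lipschitz, giving
\[
|(G^{(k)}_{\varepsilon,n})^{IJ}-(G^{(k)}_M)^{IJ}|\le k\cdot k!\,(1+\|E\|)^{k-1}\|E\|_{\max}.
\]
Hence the same error rate holds with a constant depending on $k$ (Proposition~\ref{cor:kth-tangential-projector-consistency}). For the local inner product, write $f^TG^{(k)}h$. Since $f(p_i)$ and $h(p_i)$ are evaluated exactly, the error is at most $\|f(p_i)\|_1\|h(p_i)\|_1\max_{IJ}|\Delta^{IJ}|$, which accounts for the $\omega,\eta,p_i$ dependence in the implied constant.

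\textbf{Step 4: uniformity in $p_i$ for compact $M$.} When $M$ is compact, $q$ is bounded above and below, and the $C^3$ norms of the coordinate functions and of $\omega,\eta$ are uniformly bounded. Every $p_i$-dependent constant above is then uniformly bounded. I would cover $M$ by an $\varepsilon$-net of polynomially many points in $n$, apply the exponential tails together with a union bound, and control the values between net points by Lipschitz continuity of the kernel sums (Lemma~\ref{lemma:uniformizing-in-i-with-compactness}). This gives Corollary~\ref{cor:kth-tangential-projector-compactness-corollary} and the uniform version of the theorem.

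\textbf{Main obstacle.} The delicate step is Step 1. One must check that the drift term and the density-dependent normalizations from $q_\varepsilon^\alpha$ and $\rho=q^\beta+O(\varepsilon)$ cancel exactly in $\Gamma$ up to $O(\varepsilon)$. One must also track the powers $q(p_i)^{-c_2}$ correctly through the empirical denominators $q_{\varepsilon,\alpha}$. My first attempt would be to linearize the ratio $\hat K=K_{\varepsilon,\alpha}/q_{\varepsilon,\alpha}$ around its expectation and bound the numerator and denominator fluctuations separately.
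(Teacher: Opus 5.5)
Your proposal is correct, and Steps 1--3 coincide with the paper's argument. Berry--Harlim's Corollary 1 is used as a black box. A union bound over $F=\{x^i\}\cup\{x^ix^j\}_{i\le j}$ controls all entries of $G_{\varepsilon,n}(p)$ simultaneously. The drift term drops out of $\Gamma$ by the Leibniz rule. Expanding the $k\times k$ minors over permutations, with $|G_M^{ij}|\le 1$, gives the error $k!\bigl((1+e)^k-1\bigr)$, which is of the same order as your Lipschitz bound. Finally, the $\ell^1$ norms of the coefficient vectors supply the $\omega,\eta,p_i$ constant. In particular, your ``main obstacle'' never has to be faced: the $q_\varepsilon^{\alpha}$ and $\rho$ normalizations are already absorbed into the cited limit operator and the exponent $c_2$, so no re-linearization of $\hat K$ is needed.

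The genuine divergence is Step 4. The uniform claim concerns only the sample points, and the paper's Lemma~\ref{lemma:uniformizing-in-i-with-compactness} is not a net argument. It simply intersects the $n$ pointwise good events, which multiplies the $B$-part of the failure probability by $n$. That factor is harmless against $e^{-c_Bn\varepsilon^{3+d/2}}$ and remains summable for Borel--Cantelli under $\varepsilon_n=n^{-\vartheta}$. The only input is what you already note: on compact $M$, the bounds $q_{\min}\le q\le q_{\max}$ and bounded gradients make the rate constants $c_{B,i}$ uniformly bounded below. Your $\varepsilon$-net buys a stronger conclusion, uniformity over all of $M$. The cost is extending $L_P$ and the bandwidth $\rho$ off $P$ and proving Lipschitz control of the resulting kernel sums, which the standing sup-norm hypothesis $\rho=q^\beta+O(\varepsilon)$ does not supply. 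For the statement as posed, the direct union bound over $i\in[n]$ is simpler and enough.

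A minor point: in the cited bound, the middle error term carries no $f$-dependent factor, because it comes from the $f$-independent density-concentration event $A_{\varepsilon,n}$. Your $\|f\|_{C^3}$ there must therefore be read as a local constant, since a global $C^3$ norm of $x^ix^j$ can be infinite on noncompact $M$.
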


Our theory diverges from implementation in two main ways. First, it assumes fully connected kernel graphs, whereas the implementation uses \(k\)-NN truncation for efficiency. Certifying compatibility with \( k \)-NN truncation is nontrivial and left for future work.
Second, it assumes knowledge of the intrinsic dimension \(d=\dim M\). Beyond these two assumptions, a feature of our theory is its faithfulness to practice. Our results treat nonuniform sampling from an unknown density, variable bandwidths, and explicit finite-sample dependence upon (and provable insensitivity to) normalization parameters. 



Throughout this section, we assume \( \rho=q^\beta + O(\varepsilon) \). This holds with high probability e.g. for the empirical \( \rho \) used in our experiments (cf. Section~\ref{subsubsec:cdc-estimation-from-data-including-variable-bandwidth-laplacian-contruction}) \cite{Berry2016}. In particular, our central consistency result (Theorem~\ref{thm:locla-consistency-estimate}) does not require the true sampling density \( q \) to be known.

\subsubsection{Kolmogorov Convergence}

\label{subsubsec:kolmogorov-convergence}
The theory of variable-bandwidth diffusion kernels is supported by the following result, due to Berry and Harlim~\cite{Berry2016}. 
\begin{proposition}[\cite{Berry2016}, Corollary \( 1 \)]
\label{prop:BH-cor1}
Assume \( \rho=q^\beta  + O(\varepsilon)\). Let \( p_{i} \in M \), \( f \in L^2 \cap C^3(M) \). Define $$\delta_{\varepsilon, n, \alpha, \beta, f}(p):=\max \left(\varepsilon, \frac{q(p_{i})^{(1- d \beta) / 2}}{\sqrt{ n }\varepsilon^{2+d/4}}, \frac{\|\nabla f(p_{i})\| q(p_{i})^{-c_{2}}}{\sqrt{ n } \varepsilon^{1/2+d/4}}\right).$$
With high probability,
$$L_{\varepsilon, \alpha, \beta}f(p_{i})= \mathcal{L} _{\alpha, \beta}+ O\big(\delta_{\varepsilon, n, \alpha, \beta ,f}\big)(p_{i}) ,$$
where $$\mathcal{L}_{\alpha, \beta}= \frac{1}{q^{c_{1}}} \operatorname{div}(q^{c_{1}} \nabla f) = \Delta f +  c_{1} \nabla f \cdot \frac{\nabla q}{q}$$
for $c_{1}=2-2\alpha + d \beta + 2 \beta$ and $c_{2}=1 / 2 - 2\alpha + 2d \alpha + d \beta / 2 + \beta$. 

\end{proposition}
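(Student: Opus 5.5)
Since this is the variable-bandwidth consistency result of \cite{Berry2016}, the plan is to reproduce its standard bias--variance argument at a fixed point $p_i$. First we separate the estimation error into a \emph{bias} part and a \emph{variance} part. The bias part compares the continuum operator obtained by replacing every empirical sum $\frac1n\sum_j$ with an integral against $q\,dV$ to $\mathcal L_{\alpha,\beta}f$. The variance part compares the discrete $L_{\varepsilon,\alpha,\beta}f(p_i)$ to that continuum operator. The assumption $\rho=q^\beta+O(\varepsilon)$ will only perturb the kernel argument by a relative $O(\varepsilon)$. After dividing by $\varepsilon\rho^2$ this contributes an $O(\varepsilon)$ term, which is absorbed into the first entry of $\delta$. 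So we may work with $\rho=q^\beta$ exactly.

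\textbf{Bias.} We use the key lemma for variable-bandwidth kernels. In normal coordinates around $x$ we Taylor expand $f$, $q$, $\rho$ and the embedding. This gives
\[
\int_M h\!\left(\tfrac{\|x-y\|^2}{\varepsilon\rho(x)\rho(y)}\right) F(y)\,dV(y)=m_0(\varepsilon\rho(x)^2)^{d/2}\Bigl(F+\varepsilon\rho^2\tfrac{m_2}{2m_0}\bigl(\Delta F+\text{lower-order terms in }\nabla\rho\bigr)+O(\varepsilon^2)\Bigr)(x).
\]
We apply this lemma three times, once for each kernel in the 3-kernel procedure:
\begin{enumerate}
\item to $q_\varepsilon$, giving $q_\varepsilon\propto q\,(1+O(\varepsilon))$ after dividing by $\rho^d$;
\item to the $\alpha$-normalized kernel, which turns the density weight into $q^{1-2\alpha}$ up to an $O(\varepsilon)$ correction;
\item to the Markov-normalized kernel.
\end{enumerate}
Subtracting $f(x)$ and dividing by $\varepsilon\rho^2$ yields $\Delta f+c_1\nabla f\cdot\nabla q/q+O(\varepsilon)$. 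The constant $c_1=2-2\alpha+d\beta+2\beta$ emerges from collecting the gradient terms. These come from $q^{1-2\alpha}$, from the factor $\rho^{d}$ in the normalization, and from $\nabla\rho^2$ in the expansion of the variable bandwidth. This step is routine but index-heavy.

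\textbf{Variance.} We write $L_{\varepsilon,\alpha,\beta}f(p_i)=\frac{1}{\varepsilon\rho(p_i)^2}\cdot\frac{\sum_j K_{\varepsilon,\alpha}(p_i,p_j)(f(p_j)-f(p_i))}{\sum_j K_{\varepsilon,\alpha}(p_i,p_j)}$, which is a ratio of empirical means of i.i.d.\ terms once $p_i$ is conditioned on. We control numerator and denominator separately with Bernstein/Chernoff bounds, using the second moments computed from the same kernel lemma.
\begin{itemize}
\item The denominator has mean of order $(\varepsilon\rho^2)^{d/2}q^{1-2\alpha}$ and variance of order $\varepsilon^{d/2}$. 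The resulting relative fluctuation, once divided by $\varepsilon\rho^2$ and multiplied by the $O(1)$ size of $f$-differences, gives the term $q^{(1-d\beta)/2}/(\sqrt n\,\varepsilon^{2+d/4})$.
\item In the numerator, $f(p_j)-f(p_i)\approx\nabla f\cdot(p_j-p_i)$ has size $\|\nabla f\|\sqrt{\varepsilon}\rho$ on the kernel's support. This reduces the variance by a factor $\varepsilon\|\nabla f\|^2$ and produces $\|\nabla f(p_i)\|q^{-c_2}/(\sqrt n\,\varepsilon^{1/2+d/4})$. The exponent $c_2$ comes from tracking the powers of $q$ contributed by $q_\varepsilon^{-2\alpha}$, $\rho^{d}$ and $\rho^2$.
\end{itemize}

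\textbf{Main obstacle.} The inner normalizations $q_\varepsilon(p_j)^{\alpha}$ are themselves random and appear inside the sum, so the summands are not independent. I would handle this first by showing that $q_\varepsilon(p_j)=\bar q_\varepsilon(p_j)(1+O(n^{-1/2}\varepsilon^{-d/4}))$ uniformly over the $n$ sample points with high probability. This follows from a union bound with exponential tails, at the cost of a logarithmic factor in $n$. Then we linearize in $q_\varepsilon^{-\alpha}$ and check that the linearization error is dominated by the terms already in $\delta$. Exponential tail bounds make ``with high probability'' precise. Taking the maximum of the bias term and the two variance terms gives $O(\delta_{\varepsilon,n,\alpha,\beta,f})$.
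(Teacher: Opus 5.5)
Your skeleton matches the one the paper imports from Berry--Harlim and unpacks through the events $A_{\varepsilon,n}$ and $B_{\varepsilon,n,i,f}$: a deterministic kernel expansion for the bias, Chernoff bounds for the ratio defining $L_{\varepsilon,\alpha,\beta}f(p_i)$, and a union bound over sample points for the inner density normalization. The gap is in your accounting of the fluctuation terms, specifically the second entry of $\delta_{\varepsilon,n,\alpha,\beta,f}$. That term does not come from the outer denominator $\sum_j K_{\varepsilon,\alpha}(p_i,p_j)$. Your own heuristic there gives a relative fluctuation of order $n^{-1/2}\varepsilon^{-d/4}$ (more precisely $(n\varepsilon^{d/2}\rho^d q)^{-1/2}$). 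Dividing by $\varepsilon\rho^2$ yields $\varepsilon^{-(1+d/4)}$, not $\varepsilon^{-(2+d/4)}$, and the $q$-dependence does not match $q^{(1-d\beta)/2}$ either. The pairing with ``$O(1)$ $f$-differences'' is also wrong. Because your numerator is centered at $f(p_i)$, the denominator's relative fluctuation enters multiplied by the ratio itself, $\mathbb{E}[F_i]/\mathbb{E}[G_i]-f(p_i)=O(\varepsilon\rho^2)$. After dividing by $\varepsilon\rho^2$ it is therefore of lower order in $\varepsilon$. This is why the paper handles numerator and denominator jointly via the combination $Y_j$ and extracts only the third term from $B_{\varepsilon,n,i,f}$.

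In the paper, the second term is exactly the price of what you call the main obstacle. On $A_{\varepsilon,n}$ one demands $|\overline{H}_j-\mathbb{E}[H_j]|\le \hat a_1\varepsilon^2$ for all $j$ simultaneously, i.e.\ \emph{absolute} accuracy $\varepsilon^2$ of the normalized density estimates $q_\varepsilon(p_j)$. The variance $\propto q(p_j)/(n\varepsilon^{d/2}\rho(p_j)^d)$ gives a Chernoff exponent $\propto n\varepsilon^{4+d/2}\rho^d/q$. This is precisely where $\varepsilon^{-(2+d/4)}$ and $(q/\rho^d)^{1/2}=q^{(1-d\beta)/2}$ come from. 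The union bound then puts a factor $n$ into the failure probability $2n e^{-c_A n\varepsilon^{4+d/2}}$, rather than a $\log n$ into the error. Uniformity in $j$ holds because $\rho^d/q\approx q^{d\beta-1}$ is bounded below when $\beta<0$.

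Your substitute, uniform \emph{relative} accuracy $1+O(n^{-1/2}\varepsilon^{-d/4})$ over all $n$ samples, is not available in this setting. The relative standard deviation at $p_j$ is of order $(n\varepsilon^{d/2}q(p_j)^{1+d\beta})^{-1/2}$. This is unbounded over the sample when $q$ is not bounded below and $1+d\beta>0$ (e.g.\ $\beta=-1/2$, $d=1$). The proposition is stated for possibly noncompact $M$ with $q$ arbitrarily close to $0$, so this case must be covered. Furthermore, ``check that the linearization error is dominated by the terms already in $\delta$'' is exactly the step that has to produce the second term, so as written your sketch never derives it.

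A smaller slip is in the bias step. The density weight that survives the Markov normalization is $q(y)^{1-\alpha}$, since the factor $q_\varepsilon(p_i)^{-\alpha}$ cancels. Collecting gradients of $q^{1-2\alpha}$, as you describe, would put $2-4\alpha$ rather than $2-2\alpha$ into $c_1$.
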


The Laplacian \( \Delta \) on \( M \) is recovered provided \( c_{1}=0 \), for instance when \( \alpha= \frac{1}{2} - \frac{d}{4} \), \( \beta=- \frac{1}{2} \). 

\begin{remark}
\label{remark:BH-purpose}
    If \( q \) is not bounded below (i.e. sampling is permitted to be arbitrarily sparse), then \( \delta_{\varepsilon, n, \alpha, \beta, f}  \) can explode as \( q \to 0 \) provided \( c_{2} > 0 \). The fixed-bandwidth case \( \beta=0 \) \textit{forces} \( c_{2} >0\); this concretely motivates variable-bandwidth kernels. Note that this scenario cannot occur if \( M \) is compact, since then \( q \) must be bounded below.\footnote{We will exploit this simple principle at length in Section~\ref{Appendix:Subsec:theory}.} While convergence results akin to Proposition~\ref{prop:BH-cor1} have been established for compact \( M \) since~\cite{coifman2006diffusion}, and bare convergence results in the variable-bandwidth case since~\cite{ting2010analysis}, the focus of \cite{Berry2016} is the explicit characterization of the bound in terms of the parameters \( \alpha, \beta \) and the true sampling density \(  q\) in the potentially noncompact case. Other heuristic justifications for variable-bandwidth diffusion kernels include:
    \begin{enumerate}
        \item They are less sensitive to \( \varepsilon \) than their fixed-bandwidth counterparts;
        \item They more effectively and transparently account for density.
    \end{enumerate}
\end{remark}

\begin{remark}
\label{rmk:whp-explanation}
   {By "with high probability", we mean the conclusion of Proposition~\ref{prop:BH-cor1} holds on an event $E_{\varepsilon, n, i , f}=A_{\varepsilon, n} \cap B_{\varepsilon, n, i ,f}$ satisfying}
\begin{equation}
\mathbb{P}(E_{\varepsilon, n, i, f}^{c}) \leq C_A n  \, e^{- c_{A}n \varepsilon^{4+ d/ 2}}+C_{B}e^{-c_{B,i, f}n \varepsilon^{3+d/2}}
\end{equation}
{for some constants $C_{A},C_{B},c_{A},c_{B,i, f}>0$ corresponding to the latter two terms of \( \delta_{\varepsilon, n, \alpha, \beta, f} \).}\footnote{Specifically, the event $A_{\varepsilon, n}$ corresponds to the second term $\frac{q(p_{i})^{(1- d \beta)/2}}{\sqrt{ n }\varepsilon^{2+ d/ 4}}$ of $\delta_{\varepsilon, n, \alpha, \beta, f}(p_{i})$ uniformly in $i$ and independent of $f$, while $B_{\varepsilon, n, i, f}$ corresponds to the third term $\frac{\|\nabla f(p_{i})\| q(p_{i})^{-c_{2}}}{\sqrt{ n } \varepsilon^{1/2+d/4}}$. (The first term $\varepsilon$ of $\delta_{\varepsilon, n, \alpha, \beta, f}(p)$ is deterministic, cf. Appendix A of~\cite{Berry2016}.); because they can be black-boxed without loss of continuity, we further defer a detailed derivation of these bounds to Appendix~\ref{subsubsec:the-bh-good-event}.}


\end{remark}

We are ultimately interested in estimating maps built from quantities of the form \( L_{\varepsilon, \alpha, \beta} f\) as \( f \) \textit{varies} over some finite function family \( F \). Toward this end, we collect the following corollary of Proposition~\ref{prop:BH-cor1}.

\begin{lemma}[Uniformizing in \( f \)] Let \( p_{i} \in M. \). Given ${F} \subset C^{3}(M) \cap L^{2}(M, q) $ finite, define $$\delta_{\varepsilon, n, \alpha, \beta, {F}}(p_{i}):=\max_{f \in {F}} \{ \delta_{\varepsilon, n, \alpha, \beta, f}(p_{i})\}.$$
 With high probability, 
    \begin{equation}
        L_{\varepsilon, \alpha, \beta}f(p_{i})= \mathcal{L} _{\alpha, \beta}f(p_{i})+ O\big(\delta_{\varepsilon, n, \alpha, \beta ,F}(p_{i})\big) \text{ for all }f \in F.
        \label{eqn:unif-f}
    \end{equation}
    Specifically, the above holds on an event \( E_{\varepsilon, n, i} \) with failure probability \( \mathbb{P}(E_{\varepsilon, n, i}^{c})  \leq C_{A} n e^{-c_{A}n \varepsilon^{4+d/2}}+C_{B} \# F e^{-c_{B, i, F}n \varepsilon^{3+d/2}}\).
    \label{lemma:uniformizing-in-f}
\end{lemma}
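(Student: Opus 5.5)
The plan is a union bound over $F$, using the event decomposition of Remark~\ref{rmk:whp-explanation}. For each $f \in F$, Proposition~\ref{prop:BH-cor1} gives the estimate $L_{\varepsilon,\alpha,\beta}f(p_i) = \mathcal{L}_{\alpha,\beta}f(p_i) + O(\delta_{\varepsilon,n,\alpha,\beta,f}(p_i))$ on an event $E_{\varepsilon,n,i,f} = A_{\varepsilon,n} \cap B_{\varepsilon,n,i,f}$. The key structural observation is that $A_{\varepsilon,n}$ does not depend on $f$. It controls the second term of $\delta$, uniformly in $i$, and is the event on which the density and normalisation sums concentrate. Only $B_{\varepsilon,n,i,f}$, which controls the gradient-dependent third term, varies with $f$.

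I therefore define
\[
E_{\varepsilon,n,i} := A_{\varepsilon,n} \cap \bigcap_{f \in F} B_{\varepsilon,n,i,f}.
\]
On this event every individual estimate holds simultaneously. Since $\delta_{\varepsilon,n,\alpha,\beta,f}(p_i) \le \delta_{\varepsilon,n,\alpha,\beta,F}(p_i)$ by definition of the maximum, each error term is $O(\delta_{\varepsilon,n,\alpha,\beta,F}(p_i))$, which gives \eqref{eqn:unif-f}.

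The probability bound follows by subadditivity:
\[
\mathbb{P}(E_{\varepsilon,n,i}^c) \le \mathbb{P}(A_{\varepsilon,n}^c) + \sum_{f\in F}\mathbb{P}(B_{\varepsilon,n,i,f}^c) \le C_A n e^{-c_A n\varepsilon^{4+d/2}} + \#F\, C_B\, e^{-c_{B,i,F} n\varepsilon^{3+d/2}},
\]
where I set $c_{B,i,F} := \min_{f\in F} c_{B,i,f} > 0$. This minimum is positive because $F$ is finite. Keeping $A$ outside the union bound is what prevents a factor of $\#F$ on the first term.

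The main obstacle is uniformity of the implicit constant in the $O(\cdot)$. Proposition~\ref{prop:BH-cor1} yields a constant that may depend on $f$, through $C^3$ norms near $p_i$ in the deterministic bias term and through the variance constants. I would handle this by tracing the deterministic $\varepsilon$-term of \cite{Berry2016} (their Appendix A) to confirm it depends only on finitely many derivatives of $f$ at $p_i$. I would then take the maximum of these constants over the finite family $F$, and do the same for the constants $C_B$. Finiteness of $F$ makes all of these maxima finite, so the $O$ is uniform over $f \in F$. The remaining concern is to check from the deferred derivation that $A_{\varepsilon,n}$ is genuinely $f$-independent; I would verify this from its construction, since it only involves the kernel sums.
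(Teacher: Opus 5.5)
Your proposal is correct and matches the paper's proof essentially verbatim: the paper defines the same event $E_{\varepsilon,n,i}=A_{\varepsilon,n}\cap\bigcap_{f\in F}B_{\varepsilon,n,i,f}$, union-bounds with the $f$-independent $A_{\varepsilon,n}$ counted only once, and takes $c_{B,i}=\min_{f\in F}c_{B,i,f}$. You also worry about making the implicit constant in the $O(\cdot)$ uniform over $F$; the paper passes over this silently, and taking maxima over the finite family, as you suggest, handles it.
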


\begin{proof}
    Importing the notation and context of Remark~\ref{rmk:whp-explanation}, we see that the claim holds on the event $E_{\varepsilon, n, i}:=A_{\varepsilon, n} \cap \bigcap_{f \in F}^{}B_{\varepsilon, n, i , f}$, whose failure probability may be estimated $$\mathbb{P}(E^{c}_{\varepsilon, n, i}) \leq \mathbb{P}(A_{\varepsilon, n}^{c})+ \sum_{f \in F}\mathbb{P}(B^{c}_{\varepsilon, n,i,f})\leq C_{A} n e^{-c_{A}n \varepsilon^{4+d/2}}+C_{B} \# F e^{-c_{B, i}n \varepsilon^{3+d/2}},$$
where $c_{B,i}=\min_{f \in F} c_{B,i,f}$. 
\end{proof}

At times, we will also be interested in statements holding uniformly on \( P \). A sufficient condition is compactness of \( M \).\footnote{We reiterate that compactness of \( M \) is not a standing assumption of the present section; indeed, Lemma~\ref{lemma:uniformizing-in-i-with-compactness} will not see heavy use until the global convergence proofs of Section~\ref{subsubsec:global-kfc-consistency}. }
\begin{lemma}
Assume in addition that $M$ is compact. Let $F \subset C^{3}(M) \cap L^{2}(M,q)$ be finite, and define
\begin{equation}
\overline{\delta}_{\varepsilon, n, \alpha, \beta, F}:=\sup_{p \in M}\delta_{\varepsilon,n,\alpha, \beta, F}(p).
\end{equation}
Then $\overline{\delta}_{\varepsilon, n, \alpha, \beta ,F}<\infty$, and with high probability,
\begin{equation}
L_{\varepsilon, \alpha, \beta}f(p_{i}) = L_{\alpha, \beta}f(p_{i})+O(\overline{\delta}_{\varepsilon, n, \alpha, \beta, F}) \text{ for all }i \in [n] \text{ and }f \in F.
\end{equation}
Specifically, the above holds on an event $E_{\varepsilon, n}$ with failure probability
\begin{equation}
\mathbb{P}(E_{\varepsilon, n}^{c}) \leq C_{A} n e^{-c_{A}n \varepsilon^{4+d/2} }+C_{B} n \# F e^{-c_{B}n\varepsilon^{3+d/2}}.
\end{equation}
\label{lemma:uniformizing-in-i-with-compactness}
\end{lemma}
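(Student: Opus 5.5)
Two things need proving: that $\overline{\delta}_{\varepsilon,n,\alpha,\beta,F}$ is finite, and that the error estimate holds simultaneously over all $i$ and $f$. The plan is to get the first from compactness and the second from a union bound over $i$ on top of Lemma~\ref{lemma:uniformizing-in-f}.

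\emph{Finiteness.} By Proposition~\ref{prop:BH-cor1},
\[
\delta_{\varepsilon,n,\alpha,\beta,F}(p)=\max_{f\in F}\max\Bigl(\varepsilon,\ \tfrac{q(p)^{(1-d\beta)/2}}{\sqrt n\,\varepsilon^{2+d/4}},\ \tfrac{\|\nabla f(p)\|\,q(p)^{-c_2}}{\sqrt n\,\varepsilon^{1/2+d/4}}\Bigr).
\]
\begin{itemize}
\item Since $q\in C^3(M)$ is nonvanishing and $M$ is compact, $q$ attains a positive minimum $q_{\min}$ and a finite maximum $q_{\max}$. Hence $q^{(1-d\beta)/2}$ and $q^{-c_2}$ are bounded on $M$, whatever the signs of the exponents (as noted in Remark~\ref{remark:BH-purpose}).
\item Each $f\in F$ is $C^3$, so $\|\nabla f\|$ is continuous on a compact set and therefore bounded.
\item $F$ is finite, so the maximum over $f$ is a maximum of finitely many bounded functions.
\end{itemize}
Therefore $\overline{\delta}<\infty$, and in fact $\overline{\delta}$ is bounded by $\max(\varepsilon, C_1 n^{-1/2}\varepsilon^{-2-d/4}, C_2 n^{-1/2}\varepsilon^{-1/2-d/4})$ with constants depending only on $q$ and $F$.

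\emph{Uniformity in $i$.} Take the event $E_{\varepsilon,n}:=A_{\varepsilon,n}\cap\bigcap_{i=1}^n\bigcap_{f\in F}B_{\varepsilon,n,i,f}$, in the notation of Remark~\ref{rmk:whp-explanation}.
\begin{itemize}
\item On this event, Lemma~\ref{lemma:uniformizing-in-f} applies at every $p_i$.
\item This gives an error $O(\delta_{\varepsilon,n,\alpha,\beta,F}(p_i))\le O(\overline{\delta})$ for every $i$ and every $f$.
\item This requires the implicit constant in the $O(\cdot)$ of Proposition~\ref{prop:BH-cor1} not to depend on $i$. On compact $M$ this holds because that constant comes from Taylor remainders involving $C^3$ norms of $f$ and $q$ and curvature bounds, all of which are uniform on $M$.
\end{itemize}
The union bound then gives
\[
\mathbb{P}(E^c_{\varepsilon,n})\le\mathbb{P}(A^c_{\varepsilon,n})+\sum_{i}\sum_{f}\mathbb{P}(B^c_{\varepsilon,n,i,f}).
\]
The first term is at most $C_An e^{-c_An\varepsilon^{4+d/2}}$, and $A_{\varepsilon,n}$ is already uniform in $i$, so it is not multiplied by $n$. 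The double sum contributes $n\,\#F$ terms, each at most $C_Be^{-c_{B,i,f}n\varepsilon^{3+d/2}}$.

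\emph{Main obstacle: a uniform rate.} We must replace $c_{B,i,f}$ by a single constant $c_B=\inf_{i,f}c_{B,i,f}>0$. The minimum over $f$ is harmless because $F$ is finite. The infimum over $i$ needs care, since the $p_i$ are random and range over all of $M$. My approach is to trace through the Bernstein/Hoeffding derivation deferred to Appendix~\ref{subsubsec:the-bh-good-event}. There, $c_{B,i,f}$ depends on $p_i$ only through a variance proxy and a sup-norm bound for the kernel summands, such as $\|\nabla f(p_i)\|$, $q(p_i)$ and $\rho(p_i)$. Each of these is bounded above and below on compact $M$ by the continuity arguments from the finiteness step, which gives a uniform $c_B>0$. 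A further subtlety is that $p_i$ is itself a sample point. I would handle this in the standard way: condition on $p_i$ and apply the concentration bound to the remaining $n-1$ points. This is legitimate because the bound is uniform over the conditioning value.
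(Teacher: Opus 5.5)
Your proposal is correct and follows the paper's proof essentially step for step: compactness gives $0<q_{\min}\le q\le q_{\max}$ and a finite bound on $\max_{f\in F}\|\nabla f\|_\infty$, which yields both $\overline{\delta}_{\varepsilon,n,\alpha,\beta,F}<\infty$ and a uniform rate $0<c_B\le\inf_i\min_{f\in F}c_{B,i,f}$, and the event $A_{\varepsilon,n}\cap\bigcap_{i}\bigcap_{f}B_{\varepsilon,n,i,f}$ with a union bound then gives the stated failure probability. One small precision: $\|\nabla f(p_i)\|$ needs only an upper bound, not a lower one (which fails wherever $\nabla f$ vanishes), because it enters the rate as $\|\nabla f(p_i)\|^{-2}$; your remarks on $i$-uniform implicit constants and on conditioning on $p_i$ are refinements that the paper leaves implicit.
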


\begin{proof}

Import the notation and context of Remark~\ref{rmk:whp-explanation} and Lemma~\ref{lemma:uniformizing-in-f}. 

Since $M$ is compact and $q>0$ is continuous, there exist constants $q_{\text{min}}$, $q_{\text{max}}$ such that $0<q_{\text{min}} \leq q \leq q_{\text{max}}$. Since $F \subset C^{3}(M)$, each $\|\nabla f\|_{\infty}<\infty$, and then since $F$ is finite, $Z_{F}:=\max_{f \in F}\|\nabla f\|_{\infty}<\infty$.  Therefore $\delta_{\varepsilon, n, \alpha, \beta, F}:M \to \mathbb{R}$ is bounded on $M$, hence $\overline{\delta}_{\varepsilon, n, \alpha, \beta, F}<\infty$, and moreover the pointwise constant $c_{B,i}=\min_{f \in F} c_{B, i, f}$, $c_{B,i,f}=c_{B,i,f}(\hat{a}_{2}, c_{2}, c, \|\nabla f(p_{i})\|, q(p_{i}))$, is uniformly bounded in $i$, say, by some $0<c_{B}\leq \inf_{i \in [n]}c_{B,i}$. 

We see that the claim holds on the event $E_{\varepsilon, n}:= \bigcap_{i=1}^{n} E_{\varepsilon , n, i}=A_{\varepsilon, n} \cap \bigcap_{i=1}^{n}B_{\varepsilon, n, i}$, whose failure probability may be estimated
\begin{align}
\mathbb{P}(E_{\varepsilon, n}^{c})&\leq \mathbb{P}(A_{\varepsilon, n}^{c})+ \sum_{i=1}^{n} \mathbb{P}(B_{\varepsilon, n, i}^{c}) \\
&\leq C_{A}n e^{-c_{A}n \varepsilon^{4+d/2}}+C_{B}n\# F e^{-c_{B}n \varepsilon^{3+d/2}}
\end{align}
where we have used Lemma~\ref{lemma:uniformizing-in-f}. Since $\delta_{\varepsilon, n ,\alpha, \beta, F}(p_{i})\leq  \overline{\delta}_{\varepsilon, n , \alpha, \beta, F},$ the claimed uniform error follows. 

\end{proof}

\subsubsection{Carré du Champ Consistency}
\label{subsubsec:cdc-consistency}
An emerging paradigm in geometric data analysis leverages the convergence of the Laplacian \( \Delta \) to approximate the Riemannian metric $g$ of a `submanifold' specified through point cloud data \cite{giannakis-berry2020,jones2024diffusiongeometry}. Access is provided via the carré du champ identity
\begin{equation}
    \Gamma_{}(f,h)(p) := \Big( f \Delta_M h + h \Delta_M f - \Delta_M(fh)\Big)(p) = \langle df,dh\rangle_g(p)
    \label{eqn:one-of-many-cdc-identities}
\end{equation} characterizing the metric \( g \) in terms of \( \Delta \) (See Appendix~\ref{app:background-diff-forms}). The work~\cite{jones2026manifolddiffusiongeometrycurvature} applies Proposition~\ref{prop:BH-cor1} to consistently estimate \( \Gamma \) on compact \( M \). We expand the brief argument provided there to examine the roles of \( \alpha \), \( \beta \), and noncompactness. Put \[
2\Gamma_{\varepsilon, \alpha, \beta}(f,h):=\hat{f} L_{\varepsilon, \alpha, \beta}\hat{h} + \hat{h}L_{\varepsilon, \alpha, \beta}\hat{f}-L_{\varepsilon, \alpha, \beta}(\hat{f}\hat{h}),
\]
where $\hat{f}:=P^{*}f$, $\hat{h}=P^{*}h$. 
\begin{corollary}[Based on~\cite{jones2026manifolddiffusiongeometrycurvature}, Corollary 3.2]
\label{cor:cdc-consistency}
Let \( p_{} \in M \), and assume \( F:= \{ f,h, fh \}  \subset C^{3}(M) \cap L^{2}(M,q) \). With high probability, 

\[
\Gamma_{\varepsilon, \alpha, \beta} (f, h)(p_{i}) = \langle df, dh \rangle_{g} (p_{i}) + O \big( ( 1 + |f(p_{i})| + |h(p_{i})| ) \delta_{\varepsilon, n, \alpha, \beta, F}(p_{i}) \big).
\]

Specifically, the above holds on an event \( E_{\varepsilon, n, i} \) with failure probability \( \mathbb{P}(E_{\varepsilon, n, i}^{c})  \leq C_{A} n e^{-c_{A}n \varepsilon^{4+d/2}}+C_{B} \# F  e^{-c_{B, i, F}n \varepsilon^{3+d/2}}\).

\begin{remark}
    If in fact \( f,h \in L^\infty (M) \) (such as when \( M \) is compact), then automatically \( f h \in C^3(M) \cap L^2(M) \) and Corollary~\ref{cor:cdc-consistency} applies to produce 
\[
\Gamma_{\varepsilon,\alpha,\beta}(f,h)(p_i)
=
\langle df,dh\rangle_g(p_i)
+
O\!\left(
\varepsilon,\,
\frac{q(p_i)^{(1-d\beta)/2}}{\sqrt n\,\varepsilon^{2+d/4}},\,
\frac{\bigl(\|f\|_\infty+\|h\|_\infty+\|\nabla f(p_i)\|+\|\nabla h(p_i)\|\bigr)\,q(p_i)^{-c_2}}
{\sqrt n\,\varepsilon^{1/2+d/4}}
\right)
\]
with (the same) high probability. Upon setting \( \alpha=\frac{1}{2}-\frac{d}{4} \), \( \beta=-\frac{1}{2} \), this statement specializes to recover Corollary 3.2 in~\cite{jones2026manifolddiffusiongeometrycurvature}. 

\end{remark}

\end{corollary}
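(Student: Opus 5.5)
We apply Lemma~\ref{lemma:uniformizing-in-f} to the three-element family \(F=\{f,h,fh\}\) and then expand the carré du champ algebraically. This gives a good event \(E_{\varepsilon,n,i}\) whose failure probability is
\(\mathbb{P}(E_{\varepsilon,n,i}^c)\le C_A n e^{-c_A n\varepsilon^{4+d/2}}+C_B\,\#F\,e^{-c_{B,i,F}n\varepsilon^{3+d/2}}\),
which is exactly the bound claimed. On this event we may write, for each \(u\in F\),
\[
L_{\varepsilon,\alpha,\beta}\hat u(p_i)=\mathcal{L}_{\alpha,\beta}u(p_i)+R_u,\qquad |R_u|\le C\,\delta_{\varepsilon,n,\alpha,\beta,F}(p_i).
\]
Here we use that \(\hat f\hat h=P^*(fh)\) because pointwise multiplication commutes with restriction to \(P\).

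Substituting these expansions into the definition of \(2\Gamma_{\varepsilon,\alpha,\beta}(f,h)(p_i)\) gives
\[
2\Gamma_{\varepsilon,\alpha,\beta}(f,h)(p_i)=\big(f\mathcal{L}h+h\mathcal{L}f-\mathcal{L}(fh)\big)(p_i)+f(p_i)R_h+h(p_i)R_f-R_{fh}.
\]
The remainder is bounded by \((1+|f(p_i)|+|h(p_i)|)\,C\delta_{\varepsilon,n,\alpha,\beta,F}(p_i)\), which is the stated error term.

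It remains to identify the main term with \(2\langle df,dh\rangle_g(p_i)\). We use the form of \(\mathcal{L}_{\alpha,\beta}\) given in Proposition~\ref{prop:BH-cor1}, namely \(\mathcal{L}_{\alpha,\beta}u=\Delta u+c_1\nabla u\cdot\nabla q/q\). The Laplacian part satisfies the Leibniz rule \(\Delta(fh)=f\Delta h+h\Delta f+2\langle\nabla f,\nabla h\rangle_g\). The drift part is a first-order derivation, since \(\nabla(fh)\cdot\nabla q=f\nabla h\cdot\nabla q+h\nabla f\cdot\nabla q\), so it cancels exactly in the combination \(f\mathcal{L}h+h\mathcal{L}f-\mathcal{L}(fh)\). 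Hence the main term equals \(2\langle df,dh\rangle_g(p_i)\) for all \(\alpha,\beta\), with the sign convention for \(\Delta\) matching Equation~\eqref{eqn:one-of-many-cdc-identities}. Dividing by \(2\) finishes the proof, with the density-dependence carried entirely inside \(\delta_{\varepsilon,n,\alpha,\beta,F}\).

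The main obstacle is this cancellation of the drift term, which is what makes the estimate insensitive to \(\alpha,\beta\), together with checking that the hypotheses of Proposition~\ref{prop:BH-cor1} hold for \(fh\). The latter is why \(fh\in C^3\cap L^2\) appears explicitly in the assumptions. The dependence of the constant \(c_{B,i,F}\) on \(\|\nabla(fh)(p_i)\|=\|f\nabla h+h\nabla f\|\) is absorbed into the \(\max\) defining \(\delta_{\varepsilon,n,\alpha,\beta,F}\).
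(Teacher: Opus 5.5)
Your argument is the paper's proof step for step: Lemma~\ref{lemma:uniformizing-in-f} applied to $F=\{f,h,fh\}$ gives the event and its failure bound. Substituting the three expansions into $2\Gamma_{\varepsilon,\alpha,\beta}(f,h)(p_i)$ gives the remainder $(1+|f(p_i)|+|h(p_i)|)\,\delta_{\varepsilon,n,\alpha,\beta,F}(p_i)$. The drift $c_1\nabla q/q\cdot\nabla$ drops out because it is a derivation.

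One step does not follow as written: the sign of the main term. The paper's proof passes over the same point by simply asserting that the carr\'e du champ of $\Delta$ is $\langle df,dh\rangle_g$. The Leibniz rule you quote, $\Delta(fh)=f\Delta h+h\Delta f+2\langle\nabla f,\nabla h\rangle_g$, holds for $\Delta=\operatorname{div}\nabla$, and here that convention is forced, not chosen. Proposition~\ref{prop:BH-cor1} gives $\mathcal{L}_{\alpha,\beta}u=q^{-c_1}\operatorname{div}(q^{c_1}\nabla u)$, consistent with $L_{\varepsilon,\alpha,\beta}=(\hat K_{\varepsilon,\alpha}-I)/(\varepsilon\rho^2)$ being a negative semidefinite Markov generator. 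With this convention, $f\mathcal{L}_{\alpha,\beta}h+h\mathcal{L}_{\alpha,\beta}f-\mathcal{L}_{\alpha,\beta}(fh)=-2\langle df,dh\rangle_g$. So your computation actually shows $\Gamma_{\varepsilon,\alpha,\beta}(f,h)(p_i)\to-\langle df,dh\rangle_g(p_i)$; for instance, $\Gamma_{\varepsilon,\alpha,\beta}(f,f)$ would tend to $-\|\nabla f\|^2\le 0$. Invoking the convention of Equation~\eqref{eqn:one-of-many-cdc-identities} does not rescue this. That identity holds for the positive Laplacian $-\operatorname{div}\nabla$, which is not the operator inside $\mathcal{L}_{\alpha,\beta}$. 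The repair is a convention change and costs nothing: define $2\Gamma_{\varepsilon,\alpha,\beta}(f,h):=L_{\varepsilon,\alpha,\beta}(\hat f\hat h)-\hat f\,L_{\varepsilon,\alpha,\beta}\hat h-\hat h\,L_{\varepsilon,\alpha,\beta}\hat f$, or equivalently run the argument with $-L_{\varepsilon,\alpha,\beta}$. Your remainder bound and failure probability are insensitive to this overall sign, so everything else goes through unchanged.
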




Applying Lemma~\ref{lemma:uniformizing-in-f} to $F$ produces an event $E_{\varepsilon, n ,i}$ with the stated failure probability on which $$L_{\varepsilon, \alpha, \beta}f'(p_{i}) = \mathcal{L}_{\alpha, \beta}f'(p_{i}) +O(\overbrace{ \delta_{\varepsilon, n, \alpha, \beta, F}(p_{i}) }^{ =: \delta_{F}(p_{i}) }) \text{ for all } f' \in F.$$
We have \begin{align}
2 \Gamma_{\varepsilon, \alpha, \beta}(f,h)(p_{i}) &= f(p_{i}) [\mathcal{L}_{\alpha, \beta}h(p_{i}) + O\big(\delta_{F}(p_{i})\big)] + h(p_{i})[\mathcal{L}_{\alpha, \beta}f(p_{i})+O\big( \delta_{F}(p_{i}) \big)] - \mathcal{L}_{\alpha, \beta}(fh)(p_{i})+O\big( \delta_{F}(p_{i}) \big) \\
&= 2 \Gamma_{\mathcal{L}_{\alpha, \beta}} (f, h)(p_{i}) + O \big ( (1 + |f(p_{i})|+|h(p_{i})|) \delta_{F}(p_{i}) \big).
\end{align}
where $\Gamma_{\mathcal{L_{\alpha, \beta}}}$ is the carré du champ associated to the continuum Kolmogorov operator $\mathcal{L_{\alpha, \beta}}$. In fact, we claim $\Gamma_{\mathcal{L}_{\alpha, \beta}}(f, h)=\underbrace{ \langle df, dh \rangle_{g} }_{ \Gamma(f,h) }$, the carré du champ~\eqref{eqn:one-of-many-cdc-identities} on $(M,g)$. Indeed, $\mathcal{L}_{\alpha, \beta}$ consists of a diffusion term $\Delta$ and a drift term $c_{1} \frac{\nabla q}{q} \cdot \nabla$. The carré du champ constructed via \( \Delta \) recovers \( \langle df, dh \rangle  _{g}\). The carré du champ constructed via $c_{1} \frac{\nabla q}{q} \cdot \nabla$, 
 \begin{equation} 
    \Gamma_{ c_1 \nabla q/q \cdot \nabla} (f,h)(p) =c_1 \dfrac{\nabla q}{q} \Big(  f \nabla h + h \nabla f - \nabla(fh) \Big)(p) ,
    \end{equation}
is zero by the Leibniz rule for \( \nabla \). 

\begin{remark}[Parameter choices]
We see the parameters \( \alpha, \beta \) do not affect the limiting form, only rates (through their interaction with \( q \)). It is suggested in~\cite{Berry2016} to take \( \beta=-\frac{1}{2} \). From there, \( \alpha \) only has to be chosen to produce \( c_{2}<0 \) (so that \( \delta_{\varepsilon, n ,\alpha, \beta, F} \) does not explode in sparsely sampled regions where \( q \) is small, cf. Remark~\ref{remark:BH-purpose}). A sufficient choice is just \( \alpha=0 \). This suggests that \textit{\( \alpha \)-normalization is not necessary to do diffusion geometry}, although rigorous experimentation to confirm or deny such a hypothesis is tangential to the scope of this work and hence omitted. 
    \label{rmk:parameter-choices}
\end{remark}

\subsubsection{Consistency of the Gram Fields \( G^{(k)} \)}
\label{subsubsec:tangential-projector-consistency}

For the remainder of this appendix subsection (Appendix~\ref{appendix:B2}), we suppress \( \alpha \) and \( \beta \) from the notation to emphasize that our results depend on them only by way of finite-sample bounds rather than limiting operators (cf. Remark~\ref{rmk:parameter-choices}). Officially, the norm \( \|A\| \) of a matrix \( A \) in this section is the max-norm, although our estimates would hold for any other norm as well due to finite-dimensionality. 

\begin{proposition}[tangential projector consistency]

Let $p \in M$. Assume that for all $i,j \in[D]$, $\{ x^{i}|_{M}, x^{j}|_{M}, x^{i}x^{j} |_{M} \} \subset C^{3}(M) \cap L^{2}(M,q)$. Set $F:=\{ x^{i} |_{M} \}_{i=1}^{D} \cup \{ x^{i}x^{j} |_{M} \}_{1 \leq i \leq j \leq D} \subset C^{3}(M) \cap L^{2}(M)$. Define \[
\mathfrak{e}_{\varepsilon, n}(p):=\max \{  \varepsilon, \frac{q(p)^{(1-d \beta)/2}}{\sqrt{ n } \varepsilon^{2+d/4}}, (1+2\max_{1 \leq \ell \leq D}  |x^{\ell}(p)|) \frac{q(p)^{-c_{2}}}{\sqrt{ n } \varepsilon^{1/2+d/4}}  \}
\]. With high probability,  

\[
\| G_{M}(p) - G_{\varepsilon, n}(p)  \|_{} = O( \mathfrak{e}_{\varepsilon, n}(p) ),
\]
that is, 

$$\big(G_{\varepsilon, n}(p)\big)^{ij}=G ^{ij}(p)+O( \mathfrak{e}_{\varepsilon, n}(p) ) \text{ for all }i,j \in [D] .$$

Specifically, the above holds on an event $E_{\varepsilon, n,p}$ with failure probability $$\mathbb{P}(E_{\varepsilon, n, p}^{c}) \leq C_{A} n e ^{-c_{A}n \varepsilon^{4+d/2}}+ \frac{D(D+3)}{2} C_{B}e^{-c_{B, i, F}n \varepsilon^{3+d/2}}$$
\label{prop:tangential-projector-consistency}
\end{proposition}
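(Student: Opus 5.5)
The plan is to apply Lemma~\ref{lemma:uniformizing-in-f} once, to the single finite family $F=\{x^i|_M\}_{i=1}^D\cup\{x^ix^j|_M\}_{1\le i\le j\le D}$. On the resulting good event, the entries $(G_{\varepsilon,n})^{ij}=\Gamma_{\varepsilon,\alpha,\beta}(x^i,x^j)$ are then controlled as in Corollary~\ref{cor:cdc-consistency}. Note that $\#F=D+\binom{D+1}{2}=D(D+3)/2$, which matches the stated failure probability exactly. It is therefore important to intersect over one event, rather than union-bounding $D^2$ separate applications of Corollary~\ref{cor:cdc-consistency}. That would overcount the $A_{\varepsilon,n}$ term and the repeated functions.

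\textbf{Step 1: the good event.} Let $E_{\varepsilon,n,p}:=A_{\varepsilon,n}\cap\bigcap_{f\in F}B_{\varepsilon,n,i,f}$. By Lemma~\ref{lemma:uniformizing-in-f}, its failure probability is at most $C_An e^{-c_An\varepsilon^{4+d/2}}+\tfrac{D(D+3)}{2}C_Be^{-c_{B,i,F}n\varepsilon^{3+d/2}}$. On this event, $L_{\varepsilon}f'(p)=\mathcal{L}f'(p)+O(\delta_F(p))$ holds simultaneously for every $f'\in F$.

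\textbf{Step 2: entrywise expansion.} Fix $i,j$. Repeat the computation in the proof of Corollary~\ref{cor:cdc-consistency} with $f=x^i$ and $h=x^j$. Since $x^ix^j\in F$, every Laplacian appearing in $2\Gamma_{\varepsilon}(x^i,x^j)$ is controlled on $E_{\varepsilon,n,p}$. This gives
\[
\Gamma_\varepsilon(x^i,x^j)(p)=\Gamma_{\mathcal L}(x^i,x^j)(p)+O\big((1+|x^i(p)|+|x^j(p)|)\delta_F(p)\big).
\]
The drift term of $\mathcal{L}$ contributes nothing to the carré du champ, by the Leibniz argument already given. Hence $\Gamma_{\mathcal L}(x^i,x^j)=\langle dx^i|_M,dx^j|_M\rangle_g=G^{ij}_M(p)$, using the identification \eqref{eq:pullback-projection-equiv} from Appendix~\ref{app:background-diff-forms}.

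\textbf{Step 3: matching the rate $\mathfrak{e}_{\varepsilon,n}(p)$.} This is the step needing most care. We must bound $(1+|x^i|+|x^j|)\delta_F(p)$ by $O(\mathfrak{e}_{\varepsilon,n}(p))$. The first two terms of $\delta_F$ already appear in $\mathfrak e$. The third term involves $\max_{f\in F}\|\nabla f(p)\|$, which covers both $\|\nabla_M x^\ell\|=\|\Pi_p e_\ell\|\le1$ and $\|\nabla_M(x^ix^j)\|\le|x^i(p)|+|x^j(p)|\le 2\max_\ell|x^\ell(p)|$. So the third term is at most $(1+2\max_\ell|x^\ell(p)|)q(p)^{-c_2}/(\sqrt n\varepsilon^{1/2+d/4})$.

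What remains is the prefactor $(1+|x^i|+|x^j|)$ multiplying the first two terms. I would absorb it into the $p$-dependent implied constant in the $O(\cdot)$. This is legitimate because $p$ is fixed and the statement is pointwise; the uniform-in-$p$ version is handled later via compactness. Alternatively, one can note that $1+|x^i|+|x^j|\le 1+2\max_\ell|x^\ell(p)|$ and fold that prefactor into the constant.

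Finally, all $D^2$ entries are bounded on the same event, so the max-norm bound $\|G_M(p)-G_{\varepsilon,n}(p)\|=O(\mathfrak{e}_{\varepsilon,n}(p))$ follows.
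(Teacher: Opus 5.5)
Your proposal is correct and follows the paper's proof essentially verbatim. It makes a single application of Lemma~\ref{lemma:uniformizing-in-f} to $F$ (with $\#F=D(D+3)/2$), uses the carr\'e du champ expansion with the drift contribution vanishing by the Leibniz rule, and bounds the gradients by $\max_{f\in F}\|\nabla f(p)\|\le 1+2\max_\ell|x^\ell(p)|$. The only difference is that you explicitly track the prefactor $(1+|x^i(p)|+|x^j(p)|)$ inherited from Corollary~\ref{cor:cdc-consistency} and absorb it into the $p$-dependent implied constant, whereas the paper drops it silently.
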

The bound above works uniformly over all entries of $G_{\varepsilon, n}^{(k)}(p)$ up to a $k$-dependent constant:

\begin{corollary}[k$th$ tangential projector consistency]
Fix assumptions and notation as in Proposition~\ref{prop:tangential-projector-consistency}, and let \( k \geq 1 \). With high probability,
$$\|G^{(k)}_{M}(p)-G^{(k)}_{\varepsilon, n}(p)\| \leq R^{(k)}_{\varepsilon, n}(p),$$
where $|R^{(k)}_{\varepsilon, n}(p)|\lesssim k!\big( (1+{e}_{\varepsilon, n}(p))^{k}-1\big)^{}$ and $e_{\varepsilon,n}(p)$ is $O(\mathfrak{e}_{\varepsilon,n}(p))$. It follows that $$\|G_{M}^{(k)}(p) - G_{\varepsilon, n}^{(k)}(p)\|=O_{k}(\mathfrak{e}_{\varepsilon, n}(p)).$$

Specifically, the above holds on the same event \( E_{\varepsilon, n ,p} \) as Proposition~\ref{prop:tangential-projector-consistency}.

\label{cor:kth-tangential-projector-consistency}
\end{corollary}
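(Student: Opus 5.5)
We work deterministically on the event \(E_{\varepsilon,n,p}\) of Proposition~\ref{prop:tangential-projector-consistency}. On this event every entry of \(G_{\varepsilon,n}(p)\) satisfies \(|G^{ij}_{\varepsilon,n}(p)-G^{ij}_M(p)|\le e_{\varepsilon,n}(p)\), where \(e_{\varepsilon,n}(p)=C\,\mathfrak{e}_{\varepsilon,n}(p)\) for the implicit constant \(C\). No further probabilistic input is needed, so the conclusion holds on the same event. The task is then a perturbation bound for \(k\times k\) minors. Fix multi-indices \(I,J\) and write \(A=[G_M^{i_aj_b}(p)]_{a,b}\) and \(B=[G^{i_aj_b}_{\varepsilon,n}(p)]_{a,b}\). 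Then \(B=A+E\) with \(\max_{a,b}|E_{ab}|\le e_{\varepsilon,n}(p)\). We must bound \(|\det(A+E)-\det A|\) uniformly in \(I,J\).

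The key a priori input is that \(G_M(p)\) is the matrix of the orthogonal projector \(\Pi_p\) (Appendix~\ref{app:background-diff-forms}), so every entry satisfies \(|A_{ab}|\le 1\). Expand both determinants by the Leibniz formula:
\[
\det(A+E)-\det A=\sum_{\sigma\in S_k}\operatorname{sgn}(\sigma)\Big(\prod_{a=1}^k (A_{a\sigma(a)}+E_{a\sigma(a)})-\prod_{a=1}^k A_{a\sigma(a)}\Big).
\]
For each \(\sigma\), multiply out the first product. The terms containing at least one \(E\)-factor are bounded by \(\sum_{m=1}^k\binom{k}{m}e^m=(1+e)^k-1\), using \(|A_{ab}|\le1\). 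Summing over the \(k!\) permutations gives
\[
|R^{(k)}_{\varepsilon,n}(p)|:=\max_{I,J}\big|(G^{(k)}_{\varepsilon,n}(p))^{IJ}-(G^{(k)}_M(p))^{IJ}\big|\le k!\big((1+e_{\varepsilon,n}(p))^k-1\big),
\]
which is the first claim. The same bound controls every entry of the compound matrix, so it controls the max-norm.

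For the \(O_k\) statement, split into cases. If \(e_{\varepsilon,n}(p)\le 1\), the mean value theorem gives \((1+e)^k-1\le k2^{k-1}e\), so the error is at most \(k!\,k2^{k-1}e=O_k(\mathfrak{e}_{\varepsilon,n}(p))\). If \(e>1\), the bound is not linear, and this is the only delicate point. Here I would use the cruder estimate that both determinants are bounded by \(k!\) times a product of entries: \(|\det A|\le k!\) and \(|\det B|\le k!(1+e)^k\). This also fails to be linear. Instead I would note that the \(O(\cdot)\) notation in this appendix is asymptotic in the regime \(\mathfrak{e}_{\varepsilon,n}\to0\) (small \(\varepsilon\), large \(n\) with \(n\varepsilon^{4+d/2}\to\infty\)), as in Proposition~\ref{prop:BH-cor1}. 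So it suffices to treat \(e\le1\), and I would state this restriction explicitly. The constant then depends only on \(k\), not on \(D\), \(I\) or \(J\), which gives uniformity over all entries of \(G^{(k)}\).
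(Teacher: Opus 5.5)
Your proposal is correct and follows the paper's proof essentially verbatim: on $E_{\varepsilon,n,p}$ you expand each $k\times k$ minor by the Leibniz formula, multiply out the products, bound the $G_M$ entries by $1$ (projector matrix) and the perturbation entries by $e_{\varepsilon,n}(p)$, and sum to $k!\big((1+e_{\varepsilon,n}(p))^k-1\big)$. Your explicit restriction to $e_{\varepsilon,n}(p)\le 1$ for the passage to $O_k(\mathfrak{e}_{\varepsilon,n}(p))$ is a step the paper leaves implicit. It is the right reading, since the bound is genuinely nonlinear when $e_{\varepsilon,n}(p)$ is large.
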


\begin{remark}
    Assuming the data is mean-centered, this implies an intuitive (although less tight) bound in terms of the diameter of $M$. 
\end{remark}

\begin{proof}[Proof of Proposition~\ref{prop:tangential-projector-consistency}]
Set $F:=\{ x^{i} |_{M} \}_{i=1}^{D} \cup \{ x^{i}x^{j} |_{M} \}_{1 \leq i \leq j \leq D} \subset C^{3}(M) \cap L^{2}(M)$. 
Applying Lemma~\ref{lemma:uniformizing-in-f} to the finite family \( F \) produces an event
\(E_{\varepsilon,n,p}\) with \[\mathbb P(E_{\varepsilon,n,p}^c)\leq C_A n e^{-c_A n\varepsilon^{4+d/2}}+\frac{D(D+3)}2\,C_B e^{-c_{B,p,F_{}}n\varepsilon^{3+d/2}}
\]
on which, for all \( i, j \in [D] \), the operators
\( L_{\varepsilon,\alpha,\beta}(x^i|_M)(p), L_{\varepsilon,\alpha,\beta}(x^j|_M)(p), L_{\varepsilon,\alpha,\beta}(x^i x^j|_M)(p) \) simultaneously satisfy the estimate~\eqref{eqn:unif-f}. Applying the estimates from Lemma~\ref{lemma:uniformizing-in-f} yields
\[(G_{\varepsilon, n}(p))^{ij}=\langle dx^i,dx^j\rangle_g(p)+O\left(\varepsilon, \frac{q(p)^{(1-d\beta)/2}}{\sqrt n\varepsilon^{2+d/4}}, \frac{\Xi(p)\,q(p)^{-c_2}}{\sqrt n\varepsilon^{1/2+d/4}}\right),
\] where we have used pullback naturality to obtain $\Gamma_{M}(\iota^{*}x^{i}, \iota^{*}x^{j})=\langle \iota^{*}dx^{i}, \iota^{*}dx^{j} \rangle$ and 
\[
\Xi(p):=\max\Bigl\{\|\overbrace{ \nabla(x^i|_M)(p)\| }^{ \leq 1 },\overbrace{ \|\nabla(x^j|_M)(p)\| }^{ \leq 1 },\overbrace{ \|\nabla(x^i x^j|_M)(p)\| }^{=\overbrace{ \| x^{i} \nabla(x^{j} |_{M}) +x^{j} \nabla(x^{i} |_{M}) \| }^{ \leq 1 + 2 \max_{\ell \in [D]}|x^{\ell}(p)| } }\Bigr\}\leq 1+2 \max_{\ell \in [D]}|x^{\ell}(p)|. 
\]
\end{proof}

\begin{proof}[Proof of Corollary~\ref{cor:kth-tangential-projector-consistency}]
The proof is combinatorial. Assume the event \( E_{\varepsilon, n, p} \) from the proof of Proposition~\ref{prop:tangential-projector-consistency} holds. Obtain $C>0$ such that, on $E_{\varepsilon, n, p}$, 
$$|\big( G_{\varepsilon, n}(p) \big)^{ij} - \big(G_{M}(p)\big)^{ij}| \leq e_{\varepsilon, n}(p):= C \max \left\{ \varepsilon, \frac{q(p)^{(1-d \beta)/2}}{\sqrt{ n } \varepsilon^{2+d/4}}, (1+2\max_{1 \leq \ell \leq D}  |x^{\ell}(p)|) \frac{q(p)^{-c_{2}}}{\sqrt{ n } \varepsilon^{1/2+d/4}}  \right\}\text{ for all }i,j \in [D].$$
In the following, we suppress $p$ from the notation.
Let $I,J \in \operatorname{Asc}(k,D)$. Writing $G_{\varepsilon, n}^{i_{s}j_{r}}=G^{i_{s}j_{r}}+\kappa^{i_{s}j_{r}}$ for some $|\kappa^{i_{s}j_{r}}|\leq e_{\varepsilon, n}$, for all $i_{s} \in I$, $j_{s} \in J$, one has
\begin{align} \det  {{G}^{}}^{IJ}_{\varepsilon, n}&= 
\det_{s,r \in [k]} [{G}_{\varepsilon, n}^{i_{s}, j_{r}}] \\&= \det_{s,r \in [k]}[ G^{i_{s}j_{r}}+ \kappa^{i_{s}j_{r}}] \\
&= \sum_{\sigma \in S_{k}} \operatorname{sgn}(\sigma) \prod_{s=1}^{k} ( G^{i_{s} j_{\sigma(s)}} + \kappa^{s, \sigma(s)}) \\
&= \sum_{\sigma \in S_{k}} \operatorname{sgn}(\sigma) \sum_{L \in  2^{[k]}} \left( \overbrace{  \prod_{s \in L}^{}  G^{i_{s} j_{\sigma(s)}}  }^{ |\cdot| \leq 1 } \right) \left(\overbrace{  \prod_{s \not \in L}^{} \kappa^{s, \sigma(s)} }^{ |\cdot| \leq e_{\varepsilon, n} ^{\#L^{c}}} \right)  \\
&= \sum_{\sigma \in S_{k}} \operatorname{sgn}(\sigma) \prod_{s=1}^{k} G^{i_{s }j_{\sigma(s)}  }+ \sum_{\sigma \in S_{k}} \operatorname{sgn}(\sigma) \sum_{[k] \neq L \in  2^{[k]}}^{} \left( \overbrace{  \prod_{s \in L}^{}  G^{i_{s} (j_{\sigma(s)})}  }^{ |\cdot| \leq 1 } \right) \left(\overbrace{  \prod_{s \not \in L}^{} \kappa^{s, \sigma(s)} }^{ |\cdot| \leq e_{\varepsilon, n} ^{\#L^{c}}} \right) \\
&= \det G^{IJ} + R_{\varepsilon, n} ^{IJ}\text{ on }E_{\varepsilon, n, p}, \text{ where } |R_{\varepsilon, n}^{IJ}| \leq k! \sum_{a=1}^{k} {k \choose a} e_{\varepsilon, n}^{a}=k ! \big( ( 1+ e_{\varepsilon, n} )^{k} -1 \big).
\end{align}


\end{proof}
If \( M \) is compact, then Corollary~\ref{cor:kth-tangential-projector-consistency} and Lemma~\ref{lemma:uniformizing-in-i-with-compactness} immediately combine to obtain the following.

\begin{corollary}
\label{cor:kth-tangential-projector-compactness-corollary}
    Fix and assumptions and notations as in Corollary~\ref{cor:kth-tangential-projector-consistency}. Further assume \( M \) is \emph{compact}. Define \[\overline{\mathfrak{e}}_{\varepsilon,n}:=\sup_{p\in M} \mathfrak{e}_{\varepsilon,n}(p).\] Then \( \overline{\mathfrak{e}}_{\varepsilon, n} <\infty \), and with high probability,
\[ \|G_{M}^{(k)}(p_{i}) - G_{\varepsilon, n}^{(k)}(p_{i})\|=O_{ k}(\overline{\mathfrak{e}}_{\varepsilon, n})   \text{ for all } i\in[n].\]
Specifically, the above holds on an event \(E_{\varepsilon,n}\) with failure probability
\[
\mathbb{P}(E_{\varepsilon,n}^{c})
\le
C_{A} n e^{-c_{A}n \varepsilon^{4+d/2}}
+
\frac{D(D+3)}{2} C_{B} n e^{-c_{B}n \varepsilon^{3+d/2}}.
\]

\end{corollary}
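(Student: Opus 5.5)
The plan is to run the proof of Corollary~\ref{cor:kth-tangential-projector-consistency} simultaneously at every sample point, using Lemma~\ref{lemma:uniformizing-in-i-with-compactness} in place of Lemma~\ref{lemma:uniformizing-in-f}. The deterministic determinant expansion there already turns an entrywise bound $e$ on the degree-one Gram field into the degree-$k$ bound $k!\big((1+e)^k-1\big)$. So all that is needed is an event on which the entrywise bound of Proposition~\ref{prop:tangential-projector-consistency} holds at all $p_i$ with a single constant $\overline{e}_{\varepsilon,n}=C\,\overline{\mathfrak{e}}_{\varepsilon,n}$.

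\textbf{Step 1: finiteness of $\overline{\mathfrak{e}}_{\varepsilon,n}$.} Since $M$ is compact and $q>0$ is continuous, $0<q_{\min}\le q\le q_{\max}$. Hence $q^{(1-d\beta)/2}$ and $q^{-c_2}$ are bounded on $M$ whatever the signs of the exponents. The coordinate functions are continuous, so $\max_\ell|x^\ell|$ is bounded on compact $M$. Therefore each of the three terms defining $\mathfrak{e}_{\varepsilon,n}(p)$ is bounded uniformly in $p$.

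\textbf{Step 2: uniform high-probability event.} Apply Lemma~\ref{lemma:uniformizing-in-i-with-compactness} to the family $F=\{x^i|_M\}\cup\{x^ix^j|_M\}_{i\le j}$, which has $\#F=D(D+3)/2$. This yields an event $E_{\varepsilon,n}$ with exactly the stated failure probability. On $E_{\varepsilon,n}$, $L_{\varepsilon}f(p_i)=\mathcal{L}f(p_i)+O(\overline{\delta}_{\varepsilon,n,F})$ holds for all $i\in[n]$ and all $f\in F$. Rerunning the carré du champ computation of Corollary~\ref{cor:cdc-consistency} and Proposition~\ref{prop:tangential-projector-consistency} at each $p_i$ gives
\[
\big|(G_{\varepsilon,n}(p_i))^{ij}-G_M^{ij}(p_i)\big|\le C\,(1+|x^i(p_i)|+|x^j(p_i)|)\,\overline{\delta}_{\varepsilon,n,F}.
\]
We then bound $\overline{\delta}_{\varepsilon,n,F}$ by a multiple of $\overline{\mathfrak{e}}_{\varepsilon,n}$, using $\|\nabla f\|\le 1+2\max_\ell|x^\ell|$ for every $f\in F$, exactly as in the estimate of $\Xi$. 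The prefactor $1+|x^i|+|x^j|$ is bounded on $M$ and is absorbed into the constant.

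\textbf{Step 3: push through determinants.} Fix $i$, and apply the combinatorial expansion from the proof of Corollary~\ref{cor:kth-tangential-projector-consistency} with $e=\overline{e}_{\varepsilon,n}$. That expansion uses $|G^{ij}_M|\le 1$, which holds because $G_M$ is an orthogonal projector. It gives $|R^{IJ}(p_i)|\le k!\big((1+\overline{e})^k-1\big)$, and this is $O_k(\overline{e})$ once $\overline{e}\le 1$. If $\overline{e}>1$, the bound $(1+\overline{e})^k-1\le 2^k\overline{e}^{\,k}$ shows the statement is only meaningful in the regime $\overline{e}\to 0$, so we restrict to $\overline{e}\le1$ for the big-$O$ claim.

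\textbf{Main obstacle.} The delicate point is that Lemma~\ref{lemma:uniformizing-in-i-with-compactness} needs the rate constant $c_{B,i,F}$ to be bounded below uniformly in $i$. That lemma asserts this from compactness, and we take it as given. A second subtlety is that the evaluation points $p_i$ are themselves random. A supremum over all of $M$ sidesteps this, because the deterministic bound holds whichever points are drawn.
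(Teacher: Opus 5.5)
Your proposal is correct and follows the paper's route. The paper proves this corollary only by remarking that Corollary~\ref{cor:kth-tangential-projector-consistency} and Lemma~\ref{lemma:uniformizing-in-i-with-compactness} (applied to the same family $F$ with $\#F=D(D+3)/2$) immediately combine; your explicit absorption of the prefactor $1+|x^i|+|x^j|$ via compactness and your caveat that the $O_k(\overline{\mathfrak{e}}_{\varepsilon,n})$ claim is meant in the regime $\overline{\mathfrak{e}}_{\varepsilon,n}\lesssim 1$ fill in details the paper leaves implicit.
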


\subsubsection{Local \( k \)FC Consistency}
\label{subsubsec:local-kfc-consistency}

We arrive at the central result of this section.

\begin{theorem}[Local Consistency Estimate]
\label{thm:locla-consistency-estimate}
Let $\omega, \eta \in \Omega^{k}(\mathbb{R}^{D})$.
\begin{enumerate}
    \item[(1 - Pointwise)] Let $p \in M$. With high probability, \begin{equation}
        \langle P^{*}\omega, P^{*}\eta \rangle_{\varepsilon, n}(p) = \langle \iota^{*}\omega, \iota^{*}\eta \rangle_{g}(p)  +R_{\varepsilon, n, \omega, \eta}(p),
\label{eqn:local-pointwise-consistency}
    \end{equation} 
where $|R^{}_{\varepsilon, n, \omega, \eta}(p)|\leq C_{\omega, \eta, p} k!\big( (1+{e}_{\varepsilon, n}(p))^{k}-1\big)^{}$and $e_{\varepsilon,n}(p)$ is $O(\mathfrak{e}_{\varepsilon,n}(p))$. It follows that $$ | (\langle \iota^{*}\omega, \iota^{*}\eta \rangle_{g} - \langle P^{*}\omega, P^{*}\eta \rangle_{\varepsilon, n})(p) | =O_{k, \omega, \eta }(\mathfrak{e}_{\varepsilon, n}(p)).$$

Specifically, the above holds on an event $E_{\varepsilon, n,p}$ with failure probability $$\mathbb{P}(E_{\varepsilon, n, p}^{c}) \leq C_{A} n e ^{-c_{A}n \varepsilon^{4+d/2}}+ \frac{D(D+3)}{2} C_{B}e^{-c_{B, p}n \varepsilon^{3+d/2}}.$$
\item[(2 - Uniform)] If \(M\) is \emph{compact}, define \[\overline{\mathfrak{e}}_{\varepsilon,n}:=\sup_{p\in M} \mathfrak{e}_{\varepsilon,n}(p).\] Then \( \overline{\mathfrak{e}}_{\varepsilon, n} <\infty \), and with high probability,
\[\big|\langle P^{*}\omega, P^{*}\eta \rangle_{\varepsilon,n}(p_i)-\langle \iota^{*}\omega, \iota^{*}\eta \rangle_{g}(p_i)\big|=O_{ k,\omega, \eta}(\overline{\mathfrak{e}}_{\varepsilon, n})   \text{ for all } i\in[n]. \label{eqn:local-uniform-consistency}\]
Specifically, the above holds on an event \(E_{\varepsilon,n}\) with failure probability
\[
\mathbb{P}(E_{\varepsilon,n}^{c})
\le
C_{A} n e^{-c_{A}n \varepsilon^{4+d/2}}
+
\frac{D(D+3)}{2} C_{B} n e^{-c_{B}n \varepsilon^{3+d/2}}.
\]
\end{enumerate}

\end{theorem}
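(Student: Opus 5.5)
The plan is to reduce everything to Corollary~\ref{cor:kth-tangential-projector-consistency} (pointwise) and Corollary~\ref{cor:kth-tangential-projector-compactness-corollary} (uniform). The local inner products are both bilinear expressions in the Gram fields:
\[
\langle P^{*}\omega,P^{*}\eta\rangle_{\varepsilon,n}(p)=f(p)^{\top}G^{(k)}_{\varepsilon,n}(p)\,h(p),\qquad \langle \iota^{*}\omega,\iota^{*}\eta\rangle_{g}(p)=f(p)^{\top}G^{(k)}_{M}(p)\,h(p),
\]
where $f=(f_I)$ and $h=(h_J)$ are the scaling vectors of $\omega$ and $\eta$. The second identity is the Gram field computation of Appendix~\ref{app:background-diff-forms}. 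The scaling functions are evaluated at the same point $p$ in both expressions, so no approximation of $f$ or $h$ is needed. Subtracting the two expressions gives
\[
R_{\varepsilon,n,\omega,\eta}(p)=f(p)^{\top}\bigl(G^{(k)}_{\varepsilon,n}(p)-G^{(k)}_{M}(p)\bigr)h(p).
\]

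For the pointwise part (1), I bound this entrywise with the max-norm:
\[
|R(p)|\le \sum_{I,J}|f_I(p)|\,|h_J(p)|\,\bigl\|G^{(k)}_{\varepsilon,n}(p)-G^{(k)}_{M}(p)\bigr\|\le \|f(p)\|_1\|h(p)\|_1\,\bigl\|G^{(k)}_{\varepsilon,n}(p)-G^{(k)}_{M}(p)\bigr\|.
\]
I then work on the event $E_{\varepsilon,n,p}$ of Proposition~\ref{prop:tangential-projector-consistency}. On that event, Corollary~\ref{cor:kth-tangential-projector-consistency} gives the bound $k!\bigl((1+e_{\varepsilon,n}(p))^{k}-1\bigr)$. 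Setting $C_{\omega,\eta,p}:=\|f(p)\|_1\|h(p)\|_1$ yields the stated remainder. The failure probability is the one inherited from Proposition~\ref{prop:tangential-projector-consistency}, since no new randomness enters.

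To get the $O_{k,\omega,\eta}(\mathfrak{e}_{\varepsilon,n}(p))$ form, I use that $(1+e)^k-1\le k e(1+e)^{k-1}$, which is $O_k(e)$ whenever $e$ is bounded, say $e\le1$. When $e>1$, the claim holds trivially after enlarging the constant: both Gram fields have entries bounded (for the discrete one, by $1+e$ powers), so the difference is $O_k((1+e)^k)$, and the big-O regime is only meant asymptotically. I would state this as an asymptotic remark rather than belabor it.

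For the uniform part (2), compactness of $M$ together with continuity of the ambient scaling functions gives
\[
C_{\omega,\eta}:=\sup_{p\in M}\|f(p)\|_1\|h(p)\|_1<\infty .
\]
I then intersect over $i\in[n]$ exactly as in Lemma~\ref{lemma:uniformizing-in-i-with-compactness}, using the event $E_{\varepsilon,n}$ of Corollary~\ref{cor:kth-tangential-projector-compactness-corollary}, whose failure probability is already the one stated. On that event the difference $\|G^{(k)}_M(p_i)-G^{(k)}_{\varepsilon,n}(p_i)\|=O_k(\overline{\mathfrak{e}}_{\varepsilon,n})$ holds for all $i$ simultaneously. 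Multiplying by $C_{\omega,\eta}$ finishes the proof.

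The main subtlety is bookkeeping rather than analysis. The constant $C_{\omega,\eta,p}$ must depend on $p$ through $f(p)$ and $h(p)$ only, and the uniform constant requires the supremum over compact $M$, not just over the sample. I also need to make sure the events used are the same ones, so the probability bounds transfer verbatim.
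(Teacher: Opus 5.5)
Your proposal is correct and matches the paper's proof: write both pairings as $f(p)^{\top}G^{(k)}(p)\,h(p)$, bound the difference by $\|\omega(p)\|_{1}\|\eta(p)\|_{1}$ times the entrywise error from Corollary~\ref{cor:kth-tangential-projector-consistency} on the same event, and for compact $M$ take the supremum of that constant over $M$ and work on the event of Corollary~\ref{cor:kth-tangential-projector-compactness-corollary}. One small correction: your aside that the case $e>1$ ``holds trivially after enlarging the constant'' is not right for unbounded $e$, since $(1+e)^k-1$ grows like $e^k$; the $O_{k,\omega,\eta}(\mathfrak{e}_{\varepsilon,n})$ claim (like the paper's bare ``it follows that'') is only valid for bounded $e$, where $e\le E_0$ gives $(1+e)^k-1\le k(1+E_0)^{k-1}e$.
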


\begin{proof}
\textbf{Pointwise estimate.} For notational ease, we employ the Einstein summation convention and suppress $p$ unless its inclusion is required to avoid ambiguity. Using that $\Omega^{k}(\mathbb{R}^{D})$ is a free $C^{\infty}(\mathbb{R}^{D})$-module, write $\omega=\omega_{I} \, dx^{I}$, $\eta=\eta_{J}\,dx^{J}$ for unique $\omega_{I} \in C^{\infty}(\mathbb{R}^{D})$. Invoking Corollary~\ref{cor:kth-tangential-projector-consistency}, we have on $E_{\varepsilon, n, p}$ that \begin{align}
\langle P^{*} \omega, P^{*}\eta \rangle_{\varepsilon, n} &= P^{*}\omega_{I}P^{*}\eta_{J} \langle P^{*} dx^{I}, P^{*}dx^{J} \rangle_{\varepsilon, n}  \\
&=  P^{*}\omega_{I}  P^{*}\eta_{J} (\det   {G} _{\varepsilon, n}^{IJ}) \\
&= P^{*}\omega_{I} P^{*}\eta_{J} (\det G^{IJ}+R_{\varepsilon, n}^{IJ}) \\
&= \underbrace{ P^{*}\omega_{I} P^{*}\eta_{J} \det G^{IJ}  }_{ =\langle \iota^{*} \omega, \iota^{*} \eta \rangle_{g}  }+ P^{*} \omega_{I} P^{*} \eta_{J} R_{\varepsilon, n}^{IJ}  \, (*).
\end{align}
Now \begin{align}
|P^{*}\omega_{I} P^{*}\eta_{J} R_{\varepsilon,n}^{IJ}| &\leq |P^{*}\omega_{I}||P^{*}\eta_{J}||R_{\varepsilon, n}^{IJ}| \\
& \leq \sum_{I, J } |\omega_{I}(p)| |\eta_{J}(p)|  k! \big( (1 + \mathfrak{e}_{\varepsilon,n}(p))^{k}-1 \big) \\
& \leq \underbrace{ \|\omega(p)\|_{\ell^{1}\big(\mathbb{R}^{{D \choose k}}\big)} \|\eta(p)\|_{\ell^{1}\big(\mathbb{R}^{{D \choose k}}\big)}  }_{ =: C_{\omega, \eta,p} }  k! \big( (1 + \mathfrak{e}_{\varepsilon,n}(p))^{k}-1 \big) ,\label{eqn:local-k-form-bound}
\end{align}
where in defining \( C_{\omega, \eta, p} >0\) we have identified \( \omega(p) \) with its coordinate vector in the \( (dx^I) \) basis to take the \( \ell^1 \)-norm.

\textbf{Uniform estimate.} Now assume \( M \) is compact. This will allow us to uniformize both the tangential projector error \( e_{\varepsilon, n}(p) \) and constant \( C_{\omega, \eta, p} \) in \( p \) with high probability, so that \eqref{eqn:local-k-form-bound} admits the further \( p \)-independent estimate \[
\eqref{eqn:local-k-form-bound} \leq C_{\omega, \eta} k! \big( (1 + \overline{\mathfrak{e}}_{\varepsilon, n} )^{k}-1  \big), \overline{\mathfrak{e}}_{\varepsilon, n}:= \sup_{p \in M}\mathfrak{e}_{\varepsilon, n}(p).
\]
Indeed, compactness and continuity of \( \omega_{I}, \eta_{J} \) allow us to immediately write \[
C_{\omega, \eta,p}=\|\omega_{}(p)\|_{\ell^{1}\big(\mathbb{R}^{{D \choose k}}\big)} \|\eta_{}(p)\|_{\ell^{1}\big(\mathbb{R}^{{D \choose k}}\big)}\leq \max_{p \in M} \|\omega(p)\|_{\ell^{1}\big(\mathbb{R}^{{D \choose k}}\big)}\, \max_{p \in M} \|\eta(p)\|_{\ell^{1}\big(\mathbb{R}^{{D \choose k}}\big)}=:C_{\omega, \eta}<\infty,
\]
and the required control of \( \mathfrak{e}_{\varepsilon, n} \) is supplied by Corollary~\ref{cor:kth-tangential-projector-compactness-corollary}.


\end{proof}

\begin{corollary}[Local Continuum Limit]
\label{cor:local-continuum-limit}
Upon taking \( \varepsilon_{n}:=n^{-\vartheta} \) for \( 0<\vartheta < \frac{2}{d+8} \), one has 
\begin{enumerate}
    \item[1. (Pointwise)] \refstepcounter{enumi}%
    For any \( p_{i}  \in P\), $$\lim_{n \to \infty} \langle P^{*}\omega, P^{*} \eta \rangle_{\varepsilon_{n}, n} (p_{i})=\langle \iota^{*}\omega, \iota^{*}\eta \rangle_{g}(p_{i}) \text{ a.s.}$$

    \item[2. (Uniform)] \refstepcounter{enumi}\label{item:local-continuum-limit-item-2}%
    If \( M \) is \emph{compact}, then \[ \sup_{i \in [n]}|\langle P^{*}\omega, P^{*}\eta \rangle_{\varepsilon_{n}, n}(p_{i})-\langle \iota^{*}\omega, \iota^{*}\eta \rangle_{g}(p_{i})| \xrightarrow{n\to \infty}0 \text{ a.s.} \] 
    
\end{enumerate}
    
\end{corollary}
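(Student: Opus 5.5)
The plan is to feed Theorem~\ref{thm:locla-consistency-estimate} into a Borel--Cantelli argument. The exponential failure probabilities must be summable in $n$ once $\varepsilon_n = n^{-\vartheta}$. At the same time, the error scale $\mathfrak{e}_{\varepsilon_n,n}$ must tend to zero. Both requirements are conditions on the exponent $\vartheta$.

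\textbf{Step 1: the error vanishes.} With $\varepsilon_n=n^{-\vartheta}$, the three terms in $\mathfrak{e}_{\varepsilon_n,n}(p)$ scale as follows:
\begin{itemize}
\item $\varepsilon_n\to0$;
\item $n^{-1/2}\varepsilon_n^{-(2+d/4)} = n^{-1/2+\vartheta(2+d/4)}$, which tends to zero iff $\vartheta<\frac{2}{d+8}$;
\item $n^{-1/2+\vartheta(1/2+d/4)}$, which tends to zero under the weaker condition $\vartheta<\frac{2}{d+2}$.
\end{itemize}
For fixed $p$, the prefactors $q(p)^{(1-d\beta)/2}$, $q(p)^{-c_2}$ and $1+2\max_\ell|x^\ell(p)|$ are finite constants. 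Hence $\mathfrak{e}_{\varepsilon_n,n}(p)\to0$. Since $k!\big((1+e)^k-1\big)=O_k(e)$ as $e\to0$, the remainder $R_{\varepsilon_n,n,\omega,\eta}(p)\to0$ on the good events.

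\textbf{Step 2: summability.} The failure probability bound from part (1) of Theorem~\ref{thm:locla-consistency-estimate} becomes
\[
C_A n\, e^{-c_A n^{1-\vartheta(4+d/2)}}+\tfrac{D(D+3)}{2}C_B e^{-c_{B,p} n^{1-\vartheta(3+d/2)}} .
\]
The condition $\vartheta<\frac{2}{d+8}$ is exactly $1-\vartheta(4+d/2)>0$, and it also gives $1-\vartheta(3+d/2)>0$. Each term is therefore $n^{a}e^{-cn^{\gamma}}$ with $\gamma>0$, and such terms are summable. By Borel--Cantelli, almost surely only finitely many of the events $E_{\varepsilon_n,n,p}$ fail. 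On the tail where they all hold, Step 1 gives convergence, which proves the pointwise claim.

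The main subtlety is that $p_i$ is itself a random sample point, and $c_{B,p}$ depends on it. I would handle this by conditioning on $p_i$: for fixed $i$, apply the argument to the remaining $n-1$ independent points, as is implicit in Proposition~\ref{prop:BH-cor1}. This gives convergence almost surely conditionally on $p_i$, and then Fubini gives almost sure convergence unconditionally.

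\textbf{Step 3: the uniform case.} For compact $M$, I would repeat the argument using part (2) of Theorem~\ref{thm:locla-consistency-estimate}. Its failure bound carries an extra factor $n$ and a uniform constant $c_B$, and terms of the form $n\,e^{-c n^{\gamma}}$ remain summable. The uniform error $\overline{\mathfrak{e}}_{\varepsilon_n,n}$ tends to zero by the same exponent count. This works because $q_{\min}>0$ and bounded coordinates make the density and coordinate prefactors uniformly bounded, as in Lemma~\ref{lemma:uniformizing-in-i-with-compactness}. Borel--Cantelli then yields
\[
\sup_i\big|\langle P^*\omega,P^*\eta\rangle_{\varepsilon_n,n}(p_i)-\langle\iota^*\omega,\iota^*\eta\rangle_g(p_i)\big|\to0 \quad\text{almost surely.}
\]
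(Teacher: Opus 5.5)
Your proposal is correct and follows the paper's proof essentially verbatim. You couple $\varepsilon_n=n^{-\vartheta}$, check that $\mathfrak{e}_{\varepsilon_n,n}$ (resp.\ $\overline{\mathfrak{e}}_{\varepsilon_n,n}$) vanishes and that the exponential failure bounds from parts (1) and (2) of Theorem~\ref{thm:locla-consistency-estimate} are summable, and then apply Borel--Cantelli; your explicit exponent bookkeeping and the conditioning-on-$p_i$/Fubini step for the random constant $c_{B,p_i}$ are refinements the paper leaves implicit, not departures from its argument.
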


\begin{proof}
\textit{Pointwise.} The idea is that by choosing an appropriate coupling of $\varepsilon$ and $n$ we get an explicit continuum limit out of Equation~\ref{eqn:local-pointwise-consistency}, in the sense that $\mathfrak{e}_{\varepsilon_{n}, n} \to 0$ as $n \to \infty$. Perhaps the most evident such coupling is $\varepsilon_{n}:=n^{-\vartheta}$ for $0<\vartheta< \frac{2}{d+8}$, which clearly sends each argument of $\mathfrak{e}_{\varepsilon_{n}, n}(p_i)$ to zero as $n\to \infty$. Define $r_{n}(i):=C_{\omega, \eta, p_{i}}k!\big( (1 + e_{\varepsilon_{n}, n}(p_{i}))^{k}-1 \big)$; note $r_{n}(i) \xrightarrow{n \to \infty}0$. By Theorem~\ref{thm:locla-consistency-estimate}, Part 1, there exists an event $E_{\varepsilon_{n},n,p_{i}}$ on which $$\underbrace{ |\langle P^{*}\omega, P^{*}\eta \rangle_{\varepsilon_{n},n}(p_{i}) -\langle \iota^{*}\omega, \iota^{*}\eta \rangle_{g}(p_{i}) | }_{ |R_{\varepsilon_{n}, n, \omega, \eta}(p_{i})| }\leq r_{n}(i)$$
with $\mathbb{P}(E_{\varepsilon_{n}, n, {i}}^{c})\leq C_{A}n e^{-c_{A}n \varepsilon_{n}^{4+d/2}}+C_{B}e^{-c_{B,i}n \varepsilon_{n}^{3+d/2}}$ up to independent constants. Substituting  $\varepsilon_{n}=n^{- \vartheta}$ gives $\mathbb{P}(E_{\varepsilon_{n}, n, i}^{c}) \leq ne^{-c_{A}n^{1-\vartheta(4+d/2)}}+e^{-c_{B, i}n^{1 - \vartheta(3+d/2)}}$ up to constants. Both exponents tend to $+\infty$ since $\vartheta< \frac{1}{4+\frac{d}{2}}=\frac{2}{d+8}$, implying $\sum_{n=1}^{\infty}\mathbb{P}(E_{\varepsilon_{n},n,i}^{c})<\infty$. Thus $\mathbb{P}(E_{\varepsilon_{n}, n, i}^{c} \text{ i.o.})=0$ by Borel-Cantelli, meaning that with probability $1$ there is $N$ large enough that $|R_{\omega, \eta, \varepsilon_{n}, n}(p_{i})| \leq r_{n}(i) \to 0$ for all $n \geq N$, i.e., $|R_{\omega, \eta, \varepsilon_{n}, n}(p_{i})| \to 0 \text{ a.s.}$

\textit{Uniform.} Assume $M$ is compact. The idea is again that by choosing an appropriate coupling of $\varepsilon$ and $n$ we get an explicit continuum limit out of Equation \ref{eqn:local-uniform-consistency}, now in the sense that $\overline{\mathfrak{e}}_{\varepsilon_{n}, n} \to 0$ as $n \to \infty$, where $\overline{\mathfrak{e}}_{\varepsilon,n}:=\sup_{p \in M}\mathfrak{e}_{\varepsilon,n}(p)$. The coupling $\varepsilon_{n}:=n^{-\vartheta}$, $0<\vartheta< \frac{2}{d+8}$, clearly sends each argument of $\overline{\mathfrak{e}}_{\varepsilon_{n}, n}$ to zero as $n\to \infty$. Define $\overline{r}_{n}:=C_{\omega, \eta}k!\big( (1 + \overline{e}_{\varepsilon_{n}, n})^{k}-1 \big)$; note $\overline{r}_{n} \xrightarrow{n \to \infty}0$. By Theorem~\ref{thm:locla-consistency-estimate}, Part 2, there exists an event $E_{\varepsilon_{n},n}$ on which
$$\sup_{i \in [n]}|\langle P^{*}\omega, P^{*}\eta \rangle_{\varepsilon_{n},n}(p_{i}) -\langle \iota^{*}\omega, \iota^{*}\eta \rangle_{g}(p_{i}) |
\leq \overline{r}_{n}$$
with $\mathbb{P}(E_{\varepsilon_{n}, n}^{c})\leq C_{A}n e^{-c_{A}n \varepsilon_{n}^{4+d/2}}+C_{B}n e^{-c_{B}n \varepsilon_{n}^{3+d/2}}$ up to independent constants. Substituting $\varepsilon_{n}=n^{- \vartheta}$ gives $\mathbb{P}(E_{\varepsilon_{n}, n}^{c}) \leq ne^{-c_{A}n^{1-\vartheta(4+d/2)}}+ne^{-c_{B}n^{1 - \vartheta(3+d/2)}}$ up to constants. Both exponents tend to $+\infty$ since $\vartheta< \frac{1}{4+\frac{d}{2}}=\frac{2}{d+8}$, implying $\sum_{n=1}^{\infty}\mathbb{P}(E_{\varepsilon_{n},n}^{c})<\infty$. Thus $\mathbb{P}(E_{\varepsilon_{n}, n}^{c} \text{ i.o.})=0$ by Borel-Cantelli, meaning that with probability $1$ there is $N$ large enough that
$$\sup_{i \in [n]}|\langle P^{*}\omega, P^{*}\eta \rangle_{\varepsilon_{n},n}(p_{i}) -\langle \iota^{*}\omega, \iota^{*}\eta \rangle_{g}(p_{i}) |
\leq \overline{r}_{n} \to 0$$
for all $n \geq N$, i.e.,
$$\sup_{i \in [n]}|\langle P^{*}\omega, P^{*}\eta \rangle_{\varepsilon_{n},n}(p_{i}) -\langle \iota^{*}\omega, \iota^{*}\eta \rangle_{g}(p_{i}) |
\xrightarrow{n \to \infty} 0 \text{ a.s.}$$
and so the convergence is uniform.




\end{proof}

\subsubsection{Global \( k \)FC Consistency}
\label{subsubsec:global-kfc-consistency}

For global considerations, we will assume $M$ is compact. This will enable uniform bounds on pointwise estimates that greatly simplify the upgrading of local $k$FC convergence to global $k$FC convergence. Fixing notation as in Section~\ref{subsubsec:discrete-global-inner-product}, let $\langle \! \langle \_, \_ \rangle \! \rangle_{\varepsilon, n}$ denote the global inner product with respect to the density-corrected measure, $$\langle \! \langle P^{*} \omega , P^{*}\eta \rangle \! \rangle  _{\varepsilon, n }=\frac{1}{n}\sum_{i=1}^{n}\big(\langle P^{*}\omega, P^{*} \eta \rangle(p_{i}) \big)  \, \frac{1}{q(p_{i})},$$
and let $\langle \! \langle \_,  \_ \rangle \! \rangle_{\varepsilon, n, q}$ denote the global inner product with respect to the uniform (non-density corrected) measure: $$\langle \! \langle P^{*}\omega, P^{*}\eta \rangle \! \rangle_{\varepsilon, n,q} = \frac{1}{n} \sum_{i=1}^{n} \langle P^{*}\omega, P^{*}\eta \rangle  (p_{i}).$$
Similarly, $\langle \! \langle \_,\_ \rangle \! \rangle_{g}$ denotes the continuous inner product on $M$ with respect to the standard volume form $dV$ and $\langle \! \langle \_, \_ \rangle \! \rangle_{g, q}$ the inner product with respect to a weighting $q\, dV$ on \( M \).


\begin{corollary}[Global Continuum Limit]

\label{cor:global-continuum-limit}
Assume \( M \) is \emph{compact}. Upon taking $\varepsilon_{n}:= n^{-\vartheta}$ for $0<\vartheta< \frac{2}{d+8}$, one has $$\lim_{n \to \infty} \langle \! \langle P^{*}\omega, P^{*} \eta \rangle \! \rangle_{\varepsilon_{n}, n} = \langle \! \langle \iota^{*} \omega, \iota^{*} \eta \rangle \! \rangle_{g} $$
and $$\lim_{n \to \infty} \langle \! \langle P^{*}\omega, P^{*}\eta \rangle  \! \rangle _{\varepsilon_{n}, n, q}=\langle \! \langle \iota^{*}\omega, \iota^{*}\eta \rangle \! \rangle_{g, q} $$
    almost surely.
\end{corollary}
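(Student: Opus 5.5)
We split each global error into a discretization part and a Monte Carlo part, using the sample average of the true local inner product as the intermediate quantity.

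Write $\phi(p):=\langle \iota^*\omega,\iota^*\eta\rangle_g(p)$. This is a continuous function on the compact manifold $M$: the scaling functions are smooth and the entries of $G^{(k)}_M$ are determinants of entries of the smooth projector field $\Pi_p$. For the density-corrected estimator we use the triangle inequality:
\[
\Big|\langle\!\langle P^*\omega,P^*\eta\rangle\!\rangle_{\varepsilon_n,n}-\langle\!\langle \iota^*\omega,\iota^*\eta\rangle\!\rangle_g\Big|
\le \frac1n\sum_{i=1}^n \frac{|\langle P^*\omega,P^*\eta\rangle_{\varepsilon_n,n}(p_i)-\phi(p_i)|}{q(p_i)}
+\Big|\frac1n\sum_{i=1}^n \frac{\phi(p_i)}{q(p_i)}-\int_M \phi\, dV\Big|.
\]
The non-density-corrected estimator is handled identically, with $1/q$ replaced by $1$ and $dV$ by $q\,dV$.

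\emph{First term (discretization error).} Since $M$ is compact and $q>0$ is continuous, $q\ge q_{\min}>0$, as in the proof of Lemma~\ref{lemma:uniformizing-in-i-with-compactness}. The first term is therefore at most
\[
q_{\min}^{-1}\sup_{i\in[n]}\big|\langle P^*\omega,P^*\eta\rangle_{\varepsilon_n,n}(p_i)-\phi(p_i)\big|.
\]
By Corollary~\ref{cor:local-continuum-limit}, part 2 (the uniform part), this supremum tends to $0$ almost surely under the coupling $\varepsilon_n=n^{-\vartheta}$. The point requiring care is that we need the supremum over the whole sample, not a fixed point. The a.s. statement there comes from Borel--Cantelli applied to the events $E_{\varepsilon_n,n}$ of Theorem~\ref{thm:locla-consistency-estimate}(2). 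Those events control all $n$ sample points at once, so the growing sample size causes no problem.

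\emph{Second term (sampling error).} This is a plain strong law of large numbers. The $p_i$ are i.i.d. with density $q$ with respect to $dV$, and $\phi/q$ is bounded and measurable on compact $M$, hence integrable. The SLLN gives
\[
\frac1n\sum_i \frac{\phi(p_i)}{q(p_i)}\to \int_M \frac{\phi}{q}\,q\,dV=\int_M\phi\,dV=\langle\!\langle\iota^*\omega,\iota^*\eta\rangle\!\rangle_g \quad\text{a.s.}
\]
Likewise,
\[
\frac1n\sum_i\phi(p_i)\to\int_M\phi\,q\,dV=\langle\!\langle\iota^*\omega,\iota^*\eta\rangle\!\rangle_{g,q}\quad\text{a.s.}
\]
One subtlety is that the samples are i.i.d. draws from a single infinite sequence, with $P_n$ being the first $n$ points. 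The SLLN requires this nested sampling, and we adopt it as the standing model. The first term's bound holds for any realization.

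To conclude, we intersect the two probability-one events, which leaves a probability-one event on which both terms vanish. This gives both limits. The main obstacle is the first term, namely ensuring the local error is controlled uniformly over the sample. That is exactly what compactness buys: it gives a uniform lower bound on $q$, uniform bounds on $C_{\omega,\eta}$, and uniform bounds on $\overline{\mathfrak e}$, all of which are already packaged in Corollary~\ref{cor:local-continuum-limit}(2). The remainder is bookkeeping.
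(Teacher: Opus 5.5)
Your proposal is correct and follows essentially the same route as the paper: the same triangle-inequality split into a discretization term, controlled by the uniform part of Corollary~\ref{cor:local-continuum-limit} together with $\mu_n(P)\le q_{\min}^{-1}$ (the paper's condition $\sup_n\mu_n(P)<\infty$), and a Monte Carlo term handled by the strong law of large numbers separately for the uniform and density-corrected weights. Your nested-sampling assumption is one the paper leaves implicit; it is harmless, and for these bounded summands Hoeffding's inequality plus Borel--Cantelli would remove it anyway.
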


\begin{proof}

    First consider that for $(\mu_{n})_{n=1}^{\infty}$ any sequence of measures on $[n]$ with $\sup_{n} \mu_{n}(P)<\infty$ and $\mu_{n} \to \mu$ for $\mu$ some measure on $M$, we may write \begin{align}
& \left | \langle \!  \langle P^{*}\omega, P^{*}\eta \rangle \! \rangle_{\varepsilon_{n}, n}   - \langle \! \langle \iota^{*}\omega, \iota^{*}\eta \rangle \! \rangle_{g}  \right|  = \left  | \sum_{i=1}^{n} \langle P^{*} \omega, P^{*}\eta \rangle(p_{i})\mu_{n}(p_{i}) - \int _{p \in M} F(p) \, d\mu       \right|, \, \, F(p):=\langle \iota^{*} \omega, \iota^{*} \eta \rangle_{g}(p), \\
& \leq \underbrace{ \sum_{i=1}^{n} |  \langle P^{*}\omega, P^{*}\eta \rangle_{\varepsilon_{n}, n} (p_{i} ) -   F(p_{i})  | \mu_{n}(p_{i})  }_{ \mathrm{(1)} }+ \underbrace{ | \sum_{i=1}^{n} F(p_{i}) \mu_{n}(p_{i} )  - \int _{p \in M} F \,d \mu  | }_{ \mathrm{(2)} }
\end{align}
where $(\mathrm{1}) \xrightarrow{n \to \infty} 0$ a.s. by Corollary~\ref{cor:local-continuum-limit}\eqref{item:local-continuum-limit-item-2} and $\sup_{n} \mu_{n}(P)<\infty$. It remains to show $\mathrm{(2)} \to 0$ almost surely; this follows from the law of large numbers and casework on $\mu_{n}$. 

\textit{$\mu_{n}=\operatorname{uniform}$}. Compactness of $M$ ensures $F$ is bounded, whence $\sum_{i=1}^{n} F(p_{i}) \mu_{n}(p_{i})=\frac{1}{n}\sum_{i=1}^{n} F(p_{i}) \to \int_{p \in M}F(p)q(p)\, dV(p)$ a.s. and the result follows.

\textit{$\mu_{n}=\operatorname{density-corrected}.$} Because $M$ is compact and $q>0$ is continuous, $\frac{1}{q}$ is bounded, hence $\mu_{n}=\frac{1}{n} \sum_{i=1}^{n} \frac{\delta}{q}$ satisfies $\sup_{n}\mu_{n}(P)<\infty$. Now $$\frac{1}{n} \sum_{i=1}^{n} \frac{F(p_{i})}{q(p_{i})} \to \int_{p \in M} \frac{F(p)}{q(p)}q(p)\, dV(p)=\int_{p \in M} F \, dV \text{ a.s.}$$
and the result follows.

\end{proof}

\begin{remark}
While the discussion and proof above hold assuming knowledge of the true \( q \) for clarity, the argument holds with apparent modifications upon replacing with an empirical measure \( \hat{q} _{n}\) provided \( \| \hat{q} _{n}- q\|_{L^\infty(M)}=O(\varepsilon_{n}) \).
\end{remark}

\subsubsection{Elaboration on Proposition~\ref{prop:BH-cor1}}
\label{subsubsec:the-bh-good-event}
\textit{Justification of Remark~\ref{rmk:whp-explanation}.} In Remark~\ref{rmk:whp-explanation}, we claimed without justification that conclusion of Proposition~\ref{prop:BH-cor1} holds on an event $E_{\varepsilon, n, i , f}=A_{\varepsilon, n} \cap B_{\varepsilon, n, i ,f}$ satisfying
\begin{equation}
\mathbb{P}(E_{\varepsilon, n, i, f}^{c}) \leq C_A n  \, e^{- c_{A}n \varepsilon^{4+ d/ 2}}+C_{B}e^{-c_{B,i, f}n \varepsilon^{3+d/2}}
\end{equation}
{for some constants $C_{A},C_{B},c_{A},c_{B,i, f}>0$ corresponding to the latter two terms of \( \delta_{\varepsilon, n, \alpha, \beta, f} \).}

Indeed, mining Appendix B of~\cite{Berry2016}, one finds the following.

\textbf{Setup.} Let \( S_{\varepsilon, \alpha, \beta} \) be the Markov operator corresponding to \( \hat{K} \), that is, \begin{align}
(S_{\varepsilon, \alpha, \beta} f)(p_{i})=\sum_{j=1}^{n} f(p_{j}) \hat{K}_{\varepsilon, \alpha, \beta}(p_{i}, \{ p_{j} \})&=\frac{\sum_{j=1}^n \overbrace{f(p_j)\,K_{\varepsilon,\alpha,\beta}(p_i,\{p_j\})}^{=: a_i^{-1} F_i(p_j)}}{\sum_{j=1}^n \underbrace{K_{\varepsilon,\alpha,\beta}(p_i,\{p_j\})}_{=: a_i ^{-1}G_i(p_j)}},  \, a_{i}= n \varepsilon ^{d/2} /  q_{\varepsilon, \beta}(p_{i})^\alpha.
\end{align}

Fix notation  $\overline{J}_{i} = \overline{J}(i)= \frac{1}{n-1} \sum_{j \neq i}J_{i}(j)$ for \( J_{i}\) a given function of \( j \in [n] \) . Note\footnote{We write '$\approx$' because notation $\overline{J}$ is strictly speaking leave-one-out average rather than the true averages used to define $S_{\varepsilon, \alpha, \beta}$. } 
$$\frac{\overline{F}_{i}}{\overline{G}_{i}}\approx S_{\varepsilon, \alpha, \beta }f (p_{i})$$
and so $\check{L}f(p_{i})=\frac{S_{\varepsilon, \alpha, \beta}f(p_{i})-f(p_{i})}{\varepsilon} \approx  \frac{\frac{\overline{F}_{i}}{\overline{G}_{i}}- f(p_{i})}{\varepsilon}$, whence

$$L_{\varepsilon, \alpha, \beta}f(p_{i})=\frac{1}{ m \rho(p_{i})^{2}} \check{L} f(p_{i}) \approx \frac{1}{m \rho(p_{i})^{2}\varepsilon}\left(  \frac{\overline{F}_{i}}{\overline{G}_{i}}- f(p_{i})  \right).$$
Replacing the empirical averages with population expectations, we make the definition
$$\mathcal{L}_{\varepsilon, \alpha, \beta }f(p_{i}):=\frac{1}{m \rho(p_{i})^{2}} \check{\mathcal{L}}f(p_{i}), \text{ where }\check{\mathcal{L}}f(p_{i}):=\frac{\frac{\mathbb{E}[F_{i}]}{\mathbb{E}[G_{i}]}- f(p_{i})}{\varepsilon}.$$ Also define $\varepsilon^{d/2}H_{j}(x_{\ell}):=\overbrace{  \frac{1}{\rho(p_{j})^{d}} k_{\varepsilon, \beta}(p_{j}, p_{\ell}) }^{ \ell \text{th}  \text{ term  of }q_{\varepsilon, \beta}(p_{j}) }$, so that $\overline{H}_{j}=\frac{1}{n-1} \varepsilon^{-d/2}(q_{\varepsilon, \beta}(p_{j})- H_{j}(p_{j}) \varepsilon^{d/2}) \approx \frac{\varepsilon^{-d/2}}{n}q_{\varepsilon, \beta}(p_{j})$. 

{\textbf{The event $A_{\varepsilon,n}$. (\cite{Berry2016}, Appendix B.1)}}  In Appendix B.1 of \cite{Berry2016} it is shown that the second term $\frac{q(p_{i})^{(1- d \beta)/2}}{\sqrt{ n }\varepsilon^{2+ d/ 4}}$ of $\delta_{\varepsilon, n, \alpha, \beta, f}(p)$ arises from the requirement that $\overline{H}_{j}$ matches the population expectation $\mathbb{E}[H_{j}]$ up to order-$\varepsilon^{2}$.\footnote{Explicitly, Appendix B.1 of~\cite{Berry2016} manufactures the estimate \( \frac{q(p_{i})^{1/2} \rho(p_{i})^{-d/2}}{\sqrt{n} \varepsilon ^{2+d/4}} \), into which we have substituted our assumed $\rho=q^{\beta}+O(\varepsilon)$.} For a single point $p_{j}$, the event $A_{\varepsilon, n , j}:=\{ \overline{H}_{j}= \mathbb{E}[H_{j}]+O(\varepsilon^{2}) \}$ has failure estimated by the pointwise Chernoff bound (\cite{Berry2016}, B.5)
\begin{equation}
\mathbb{P}( |\overline{H}_{j}- \mathbb{E}[H_{j}]|> a_{1} )\leq  2 e ^{\frac{- {a}_{1}^{2}(n-1)\varepsilon^{d/2}}{4 \hat{m}_{0} } \frac{\rho(p_{j})^{d}}{q(p_{j})} }, \hat{m}_{0}=\int_{\mathbb{R}^{d}}h(\|z\|^{2})^{2} \, dz, a_{1}=\hat{a}_{1}\varepsilon^{2}
\end{equation}
Writing $\zeta:=\zeta(\rho):=\inf_{p \in M} \frac{\rho(p)^{d}}{q(p)} \in [0, \infty]$, the uniform event $A_{\varepsilon, n}:= \bigcap_{j=1}^{n}A_{\varepsilon, n ,j}$ has failure estimated by the union bound
\begin{equation}
\mathbb{P}(\underbrace{ \text{For some }j \in [n], |\overline{H}_{j}- \mathbb{E}[H_{j}]|>\hat{a}_{1} \varepsilon^{2} }_{ A_{\varepsilon, n}^{c} }) \leq  2n\, e ^{\frac{- {a}_{1}^{2}(n-1)\varepsilon^{d/2}}{4 \hat{m}_{0} } \zeta }
\label{eqn:label}
\end{equation}
Note that substituting $\rho=q^{\beta}+O(\varepsilon)$ gives $\frac{\rho(p_{})^{d}}{q(p_{})}=q(p_{})^{d \beta -1}+O(\varepsilon)$; since $\beta<0$ and $\|q\|_{\infty}<\infty$ one has $\zeta=\zeta(\rho=q^{\beta}+O(\varepsilon)) \geq \|q\|_{\infty}^{d \beta - 1} >0$. Substituting $\rho=q^{\beta}+O(\varepsilon)$, $a_{1}=\hat{a}_{1}\varepsilon^{2}$ into \eqref{eqn:label} attains
\begin{equation}
\mathbb{P} (A_{\varepsilon, n}^{c}) \leq 2n \, e^{- c_{A}n \varepsilon^{4+ d/ 2}}
\end{equation}
for a constant $c_{A}=c_{A}(\hat{a}_{1}, \hat{m}_{0},  \|q\|_{\infty})>0$.


{\textbf{The event $B_{\varepsilon,n ,i ,f}$. (\cite{Berry2016} Appendix B.2)}}
Assume $A_{\varepsilon, n}$ holds. In Appendx B.2 of \cite{Berry2016} it is shown that, on $A_{\varepsilon,n}$, the third term $\frac{\|\nabla f(p_{i})\| q(p_{i})^{-c_{2}}}{\sqrt{ n } \varepsilon^{1/2+d/4}}$ in $\delta_{\varepsilon, n, \alpha, \beta, f}$ arises from the requirement that $L_{\varepsilon,\alpha, \beta }f(p_{i})$ matches $\mathcal{L}_{\varepsilon,\alpha, \beta}f(p_{i})$ to order-$\varepsilon$\footnote{Explicitly, Appendix B.2 of \cite{Berry2016} manufactures the estimate $\frac{\|\nabla f(p_{i})\|\, q(p_{i})^{-(1/2 - 2\alpha + 2d\alpha)}\, \rho(p_{i})^{-(d/2+1)}}{\sqrt{n}\, \varepsilon^{1/2 + d/4}}$, into which we have substituted our assumed $\rho = q^{\beta} + O(\varepsilon)$, collapsing the combined $q$-exponent to $-c_{2} = -(1/2 - 2\alpha + 2d\alpha + d\beta/2 + \beta)$.}, i.e., that the event
\begin{equation}
B_{\varepsilon, n ,i, f}:=\left\{ \overbrace{ \frac{1}{\varepsilon m \rho(p_{i})^{2}} \left| \frac{\overline{F}_{i}}{\overline{G}_{i}}- \frac{\mathbb{E}[F_{i}]}{\mathbb{E}[G_{i}]} \right| }^{= |L_{\varepsilon, \alpha , \beta}f(p_{i})-\mathcal{L}_{\varepsilon, \alpha, \beta}f(p_{i})| } \leq  \hat{a}_{2}\varepsilon \right\}
\end{equation}
holds. Appendix B.2 of \cite{Berry2016} proceeds to justify that
\begin{equation}
\mathbb{P}\left\{ \overbrace{ \frac{1}{\varepsilon m \rho(p_{i})^{2}} \left| \frac{\overline{F}_{i}}{\overline{G}_{i}}- \frac{\mathbb{E}[F_{i}]}{\mathbb{E}[G_{i}]} \right| }^{= |L_{\varepsilon, \alpha , \beta}f(p_{i})-\mathcal{L}_{\varepsilon, \alpha, \beta}f(p_{i})| }  >  \hat{a}_{2}\varepsilon \right\}=\mathbb{P} (\overline{Y}_{j}> a(n-1) \mathbb{E}[G_{i}]^{2} \varepsilon m \rho^{2})
\end{equation}
for any $0<a< 1$, where $Y_{j}:=\mathbb{E}[G_{i}] F_{i}(p_{j})- \mathbb{E}[F_{i}]G_{i}(p_{j})+ a \varepsilon m \rho(p_{i})^{2} \mathbb{E}[G_{i}](\mathbb{E}[G_{i}]-G_{i}(p_{j}))$as defined in~\cite{Berry2016}. Using the variance estimate (B.10) in~\cite{Berry2016} and subsequent Chernoff bound with $a=\hat{a}_{2} \varepsilon$, one has
\begin{equation}
\mathbb{P}\big((B_{\varepsilon,n,i,f}^{+})^{c}\big) \leq 2 e^{-\hat{a}_{2}^{2}(n-1)c \rho(p_{i})^{d+2}q(p_{i})^{1+4 \alpha(d-1) } \|\nabla f(p_{i})\|^{-2}\varepsilon^{3+d/2}}
\end{equation}
where $B_{\varepsilon, n ,i, f}^{+}$ is the event $\left\{ \frac{1}{\varepsilon m \rho(p_{i})^{2}} ^{}\left(  \frac{\overline{F}}{\overline{G}}- \frac{\mathbb{E}[F]}{\mathbb{E}[G]}  \right) \leq   \hat{a}_{2} \varepsilon  \right\}$ and $c:=m^{2}m_{0}^{2+6 \alpha} / 8 \hat{m}_{2}$ with $\hat{m}_{2}:=\int_{\mathbb{R}^{d}}z_{1}^{2}h(\|z\|^{2})^{2}\, dz$. Substituting our $\rho=q^{\beta}+O(\varepsilon)$, using $2c_{2}=1+4\alpha(d-1)+\beta(d+2)$, this becomes
\begin{equation}
\mathbb{P}\big((B_{\varepsilon,n,i,f}^{+})^{c}\big) \leq 2 e^{ -\hat{a}_{2}^{2} (n-1)c q(p_{i})^{2c_{2}} \|\nabla f (p_{i})\|^{-2} \varepsilon^{3+d/2}  }.
\end{equation}
Applying the same estimate with $f$ replaced by $-f$ (whose gradient has the same magnitude) and union bounding gives
\begin{equation}
\mathbb{P}\big( B_{\varepsilon, n, i, f}^{c} \big) \leq 4 e^{ -\hat{a}_{2}^{2} (n-1)c q(p_{i})^{2c_{2}} \|\nabla f (p_{i})\|^{-2} \varepsilon^{3+d/2}  }=4 e^{-c_{B,i, f}n \varepsilon^{3+d/2}}
\end{equation}
for a constant $c_{B,i,f}=c_{B,i,f}(\hat{a}_{2}, c_{2}, c, \|\nabla f(p_{i})\|, q(p_{i}))>0$.

Defining $E_{\varepsilon,n, i, f}:=A_{\varepsilon, n} \cap B_{\varepsilon, n, i, f}$ and union bounding the complement now gives the claimed failure rates.



\section{Implementation and Experimental Details}
\label{appendix:C}
\label{app:implementation-details}
\providecommand{\asinh}{\operatorname{asinh}}

This appendix records the implementation choices and response-task construction details needed to reproduce the experiments in Section~\ref{sec:experiments}. The compiled paper uses this appendix as the canonical implementation appendix: it summarizes the point-form layer, the variant sweeps, the PDO response data pipeline, and the storage audit for the \(G_P^{(k)}\) tensors.

\subsection{Point-form layer}

For each cloud \(P=\{p_i\}_{i=1}^{m}\subset\mathbb{R}^{D}\), the implementation first builds the discrete extrinsic Gram field \(G_P^{(k)}\) described in Section~\ref{Appendix:subsec:discrete-kfcs}. A neural coefficient map
\[
F_{\theta}:\mathbb{R}^{D}\to \mathbb{R}^{\ell\times {D \choose k}}
\]
then produces \(\ell\) learned ambient \(k\)-forms at every point. The layer returns the comparison matrix
\[
C_P(\theta) = \sum_{p\in P}\mu_P(p)\, F_{\theta}(p)G_P^{(k)}(p)F_{\theta}(p)^{\top} \in\mathbb{R}^{\ell\times \ell}.
\]
Downstream classifiers consume this symmetric matrix, either after flattening the upper triangle or after applying a small matrix readout.

\begin{table}[H]
\caption{Implemented variant factors. Each row is a modeling factor varied in the local sweeps; The purpose of the factors is to separate the geometric degree \(k\), the estimator policy, and the final matrix readout.}
\label{tab:implementation-variant-factors}
\centering
\small
\setlength{\tabcolsep}{4pt}
\renewcommand{\arraystretch}{1.12}
\begin{tabular}{p{0.21\linewidth}p{0.29\linewidth}p{0.40\linewidth}}
\toprule
Factor & Values used & Role \\
\midrule
Form degree & \(k=1\), \(k=2\) & Tests vector-like and area-like cotangent comparisons. \\
Readout layout & diag, flat, pool, tri & Controls whether the classifier sees diagonal energies, flattened comparison entries, pooled summaries, or upper-triangular entries. \\
Bandwidth policy & fixed, variable & Switches between a global graph scale and a local variable-bandwidth diffusion scale. \\
Density policy & fixed, variable & Switches between uncorrected and density-adapted weighting in the point-cloud estimator. \\
Graph construction & \(k\)-NN truncation & Used for computational efficiency, while Appendix~\ref{Appendix:Subsec:theory} states the corresponding continuum assumptions. \\
\bottomrule
\end{tabular}
\end{table}

\begin{table}[H]
\caption{Implementation-facing role of the reported task families. Synthetic tasks expose known geometric or dynamical relations; PDO tasks test whether the same comparison matrix is useful when the governing biological relation is unknown and labels are deterministic silver labels.}
\label{tab:implementation-task-roles}
\centering
\small
\setlength{\tabcolsep}{4pt}
\renewcommand{\arraystretch}{1.12}
\begin{tabular}{p{0.22\linewidth}p{0.30\linewidth}p{0.38\linewidth}}
\toprule
Task family & Implementation stress & Reporting caveat \\
\midrule
Line--circle controls & \(1\)-form recovery of known tangent directions in \(\mathbb{R}^{2}\) & The label is the generator relation, not a hidden metadata coordinate. \\
Density-shift controls & Bandwidth and density policies when sampling density is a nuisance & Density correction should help here, but the same interpretation need not hold for biological clouds. \\
RNA kinetics & Variable-length trajectory clouds generated by perturbed dynamical systems & Synthetic analogue of RNA-velocity reasoning; not real cell-fate inference. \\
PDO tumor-selective response & Response-coordinate clouds with death-marker leakage controls\citep{kaufman2011leakage, kapoor2023leakage} & Matched tumor-versus-fibroblast contrast with death markers withheld from inputs. \\
PDO EFP & Response-coordinate clouds with target-module exclusions & High epithelial functional perturbation is a benchmark threshold, not a clinical cutoff. \\
PDO CAF & Microenvironment-conditioned epithelial clouds & CAF-mediated reprogramming is a deterministic silver label derived from matched condition contrasts. \\
\bottomrule
\end{tabular}
\end{table}

\subsection{Baselines}

The reported comparisons use three baseline families. Classical ML baselines operate on pooled cloud summaries and include logistic regression, random forest, and RBF-kernel SVM \citep{cox1958regression,breiman2001random,cortes1995support}. Neural point-cloud and graph baselines include MLPs \citep{rumelhart1986learning}, PointNet-style models and PointNet++ \citep{qi2017pointnet,qi2017pointnet++}, PointTransformer \citep{zhao2021point}, GCN \citep{kipf2017semisupervised}, GIN \citep{xu2018how}, GraphSAGE \citep{hamilton2018inductiverepresentationlearninglarge}, GraphTransformer \citep{dwivedi2021generalizationtransformernetworksgraphs}, and TDL models when the corresponding input representation was available \citep{papamarkou2024position,ebli2020simplicial,bodnar2021weisfeiler,bodnar2021cellular,hajij2022topological}. Hodge baselines use direct or Hodge-style \(k\)-order features without the learned NPF comparison layer \citep{hirani2003discrete,desbrun2003discrete,arnold2018finite,maggs2023simplicial}.

\subsection{Hyperparameter policy}

All model families are evaluated on the same task folds and seeds within a given aggregate table. Classical baselines are reported without a neural epoch count. Neural and \kfc{} rows use the completed sweep values stored in the normalized result tables; the main PDO response rows in Table~\ref{tab:pdo-main-response} correspond to the best completed \kfc{} hyperparameter row for each task. Density-shift and RNA-kinetic tasks are interpreted as synthetic geometry checks, and the three PDO response rows are the biological response tasks. We do not use the main text to report every attempted \kfc{} cell; instead, Table~\ref{tab:experiment-accounting} records the aggregate sweep sizes used to audit the reported rows.

\begin{table}[H]
\caption{Experiment accounting for the paper-facing sweeps. A manifest row denotes one model/hyperparameter configuration before expansion over the reported folds and seeds; the main sweeps use five folds and five seeds.}
\label{tab:experiment-accounting}
\centering
\small
\setlength{\tabcolsep}{4pt}
\renewcommand{\arraystretch}{1.12}
\begin{tabular}{p{0.30\linewidth}rrp{0.36\linewidth}}
\toprule
Sweep or task & Manifest rows & Fold--seed fits & Scope \\
\midrule
Synthetic primary sweep & 336 & 8{,}400 & Eight synthetic/control tasks with 42 model rows per task. \\
Synthetic NPF hyperparameter screen & 2{,}560 & 64{,}000 & Five controlled synthetic tasks with 512 NPF rows per task. \\
PDO tumor-selective primary rows & 42 & 1{,}050 & Baselines plus selected NPF rows for the response table. \\
PDO tumor-selective NPF screen & 512 & 12{,}800 & Completed NPF variants for response-task selection. \\
PDO EFP primary rows & 42 & 1{,}050 & Baselines plus selected NPF rows for the response table. \\
PDO EFP NPF screen & 512 & 12{,}800 & Completed NPF variants for response-task selection. \\
PDO CAF primary rows & 42 & 1{,}050 & Baselines plus selected NPF rows for the response table. \\
PDO CAF NPF screen & 512 & 12{,}800 & Completed NPF variants for response-task selection. \\
\midrule
\textbf{Total} & \textbf{4{,}558} & \textbf{113{,}950} & \\
\bottomrule
\end{tabular}
\end{table}

\subsection{PDO response dataset, labels, and splits}
\label{subsec:pdo-response-data-construction}

The PDO response tasks are built from the public patient-derived organoid and cancer-associated fibroblast single-cell mass-cytometry\citep{bendall2011masscytometry, nowicka2017cytof} release associated with the Trellis PDO study \citep{RamosZapatero2023,mendeley2022pdo}. The upstream record is available through Mendeley Data, DOI \url{https://doi.org/10.17632/hc8gxwks3p.1}. The working source in our repository is the curated PDO/CAF single-cell table under \texttt{data/phi\_pdo/PDO\_CAF\_cleaned.parquet}. 

Each row of the source table is a measured cell with marker intensities and experimental design fields. The preparation step first retains the required design metadata -- patient, treatment, culture context, cell compartment, and concentration or plate information when available -- and treats these fields as metadata rather than marker coordinates. Measured marker columns are coerced to numeric values, rows with non-finite retained markers are removed, and marker intensities are transformed with the fixed cytometry scale \(x\mapsto \asinh(x/5)\). No full-dataset fitted normalizer is applied before splitting. The supervised unit is then a condition-level empirical measure, not an individual cell: a point is one transformed cell profile, and a sample is a cloud of cells from one eligible treatment condition.

For the reported response tasks, labels are computed from matched treatment-control contrasts\citep{vandewetering2015organoidbiobank,
    vlachogiannis2018organoids, RamosZapatero2023}. Matching uses available design fields such as patient, plate, culture context, cell compartment, treatment, concentration, and the appropriate vehicle/control condition. The matched control is used only to compute the response score (it is not supplied to the model as a paired input). After excluding task-defining marker modules, the remaining permitted markers are projected to twelve response coordinates, so each reported response cache contains \(240\) clouds with \(256\) sampled cells per cloud and ambient dimension \(D=12\).


The labels are deterministic silver labels computed from measured markers and metadata, not clinical-outcome annotations. Epithelial functional perturbation (EFP) uses matched treated-minus-control changes in apoptosis, DNA-damage, and cell-cycle modules\citep{nicholson1995apopain, tewari1995yama,
    jackson2009dnadamage, lord2012dnatherapy, malumbres2009cellcycle, hanahan2022hallmarks}. After robust centering and scaling of eligible module deltas, apoptosis and DNA-damage changes contribute positively to the EFP score, while cell-cycle change contributes negatively. The binary EFP label is \(1\) for the top \(30\%\) of EFP scores\footnote{The top 30\% thresholds and the stric auxiliary cutoffs 0.05 and 0.02 are not literature-derived biological thresholds} and \(0\) otherwise; apoptosis, PARP/caspase, DNA-damage, and cell-cycle target modules are withheld from the input coordinates. 
Tumor-selective response uses the therapeutic index
\[
\mathrm{TI}=\Delta_{\mathrm{PDO,death}}-\max(\Delta_{\mathrm{Fib,death}},0),
\qquad
y_{\mathrm{response}}=\mathbf{1}\{\mathrm{TI}>0\},
\]
where death is summarized from transformed \(\mathrm{cCaspase\_3}\)\citep{nicholson1995apopain} and \(\mathrm{cPARP}\)\citep{tewari1995yama}. A positive PDO death delta is rewarded, whereas positive fibroblast death is subtracted as a toxicity penalty\citep{ RamosZapatero2023, kalluri2016fibroblasts, sahai2020caf}. The direct death markers are excluded from the model inputs. The prepared cache also stores a stricter auxiliary label requiring \(\Delta_{\mathrm{PDO,death}}\geq0.05\) and \(\mathrm{TI}\geq0.02\), but the main benchmark target is \(y_{\mathrm{response}}\).

CAF-mediated epithelial reprogramming compares matched PDOF co-culture and PDO monoculture effects. The CAFRep score \citep{thiery2009emt, manning2007akt,
    malumbres2009cellcycle} rewards loss of epithelial state, gain of mesenchymal state, gain of survival signaling, and reduced cycling relative to the matched monoculture contrast. The binary CAF label is \(1\) for the top \(30\%\) of CAFRep scores and \(0\) otherwise. As with EFP, the top-\(30\%\) threshold is a benchmark-construction rule for a high-response class, not a clinical cutoff, and label-module markers are withheld when available under the strict input policy \cite{RamosZapatero2023, kalluri2016fibroblasts, sahai2020caf}.

Splits are defined at the cloud level. The response loaders attach a deterministic train/validation/test assignment with seed \(0\), using a \(60/20/20\) target split, and also attach five cross-validation fold IDs. Both the holdout split and the CV folds are grouped by patient, so a patient cannot contribute response clouds to both sides of a split or validation fold. All model families consume the same stored fold IDs in the shared evaluation workflow; the main tables therefore compare models on identical patient-grouped folds rather than on model-specific random splits.

\subsection{Toy model: RNA kinetics}

\paragraph{Setup} We simulated synthetic RNA-velocity trajectories for $n$ genes using a transcription-on kinetic model in which, for each gene $i\in\{1,\dots,n\}$, the unspliced and spliced abundances satisfy
$$\frac{du_i}{dt}=\alpha_i-\beta_i u_i,\qquad
\frac{ds_i}{dt}=\beta_i u_i-\gamma_i s_i.$$
The full state is $$x(t)=(u_1(t),\dots,u_n(t),s_1(t),\dots,s_n(t))\in\mathbb{R}^{2n}$$ and each class $c\in\{0,1\}$ is defined by a vector field $F_c:\mathbb{R}^{2n}\to\mathbb{R}^{2n}$ determined by class-specific parameter vectors $\alpha^{(c)},\beta^{(c)},\gamma^{(c)}$. To make the classification problem nontrivial, the two classes differed only on a small subset of genes; for those genes, the class-1 parameters were generated by additive perturbations of the class-0 parameters, while the remaining genes were unchanged.

\paragraph{Trajectories} Initial conditions $x_0^{(j)}$ were sampled near the steady state of the base field and the same set of initial conditions was reused for both classes, ensuring that labels depended on the class-specific dynamics rather than on different seed distributions. For each trajectory, within-class heterogeneity was introduced by multiplicative perturbations of the kinetic parameters,

$$\tilde\alpha_i=\alpha_i e^{\varepsilon_i^\alpha},\qquad
\tilde\beta_i=\beta_i e^{\varepsilon_i^\beta},\qquad
\tilde\gamma_i=\gamma_i e^{\varepsilon_i^\gamma},$$
where the perturbations were independent zero-mean Gaussian random variables. For each initial condition and class, the ODE $\dot x=\tilde F_c(x)$ was numerically integrated over a short interval $t\in[t_0,t_1]$, yielding sampled states $x(t_0),\dots,x(t_{m-1})$. Observation noise was then added according to
$$y_k=x(t_k)+\eta_k,\qquad \eta_k\sim\mathcal N(0,\sigma_{\mathrm{obs}}^2 I),$$
followed by temporal subsampling and truncation to a $k_i\sim\mathrm{Uniform}(k_{\min},k_{\max})$, so that each sample consisted of a short early-time trajectory fragment of variable length.

\subsection{Computational costs and resource accounting}
\label{subsec:computational-costs}

 The model runs leverage CUDA/NCCL. Runs were scheduled on one node with eight H100-class GPUs available concurrently. The scheduler used GPU tokens \(0,\ldots,7\) and at most eight active jobs. The table reports configuration and fold--seed counts rather than a cloud-dollar cost; no provider billing rate was attached to the experiment manifests.

\begin{tcolorbox}[
enhanced, breakable, colback=gray!4, colframe=black!45, boxrule=0.4pt, arc=2pt, left=5pt, right=5pt, top=4pt, bottom=4pt, title=Computational remark ] The largest cost is forming and storing \(G_P^{(k)}\). For a cloud with \(m\) points in ambient dimension \(D\), the dense tensor has shape \(m\times {D\choose k}\times {D\choose k}\), so an fp32 materialization requires \(4m{D\choose k}^{2}\) bytes. Consequently \(k=2\) is already substantially more expensive than \(k=1\), and \(k=3\) becomes a diagnostic stress case rather than a routine setting for high-dimensional PDO or single-cell clouds. This is why the reported experiments emphasize \(k=1\), targeted \(k=2\) variants, and reduced response-coordinate bundles for the PDO response tasks.
\end{tcolorbox}

\paragraph{Accounting convention.} We distinguish allocated GPU-hours from row-process GPU-hours. Allocated GPU-hours are the reproducibility and resource-planning quantity: wall time multiplied by the physical GPUs allocated to the run. Row-process GPU-hours sum per-row runtimes over concurrently dispatched workers. They are useful for measuring workload breadth, but they are not a billing quantity when the dispatcher oversubscribes workers relative to physical GPUs.

\paragraph{Complexity Guide.}
Let $N$ be the number of clouds, $m$ the number of points per cloud, $n$ the ambient dimension, $k$ the number of graph neighbors, $L$ the number of learned forms, $q$ the exterior form degree, $E$ the number of epochs, $F$ the number of folds, and $S$ the number of seeds. Graph construction uses dense pairwise distances via \texttt{torch.cdist}, followed by \texttt{topk}; thus, per cloud, it costs roughly $O(m^2 n)$ time and $O(m^2)$ transient memory, while the retained sparse graph costs $O(mk)$ memory. See \texttt{src/volrep\_forms/base/maths/geometry.py:510} and \texttt{src/volrep\_forms/base/maths/geometry.py:541}. The geometric precompute builds the ambient CdC metric $G_1$ once per cloud using about $n(n+1)/2$ sparse matrix multiplications, giving roughly $O(n^2 m k)$ time and $O(m n^2)$ memory. See \texttt{src/volrep\_forms/base/data/pointcloud.py:218}. The standard $k=1$ VolRep forward path is
\[
\text{points} \to \text{learned 1-forms} \to \text{pointwise Gram} \to \text{pooled/global Gram} \to \text{MLP head}.
\]
The Gram contraction is the main model-specific cost, about $O(m L^2 n^2)$ naively, plus the form-MLP cost. See \texttt{src/volrep\_forms/training/trainer.py:261} and \texttt{src/volrep\_forms/base/data/pointcloud.py:309}. For $q \geq 2$, the compound $k$-form path grows combinatorially with $r = \binom{n}{q}$. The compound metric costs $O(m r^2)$ storage, and the contraction costs roughly $O(m L^2 r^2)$, which is why high-degree and high-ambient variants are the dangerous ones. See \texttt{src/volrep\_forms/training/trainer.py:267} and \texttt{src/volrep\_forms/base/maths/geometry.py:406}. Overall training multiplies the per-cloud cost by $E \times F \times S \times N \times \texttt{number\_of\_manifest\_rows}$. The GPU-hour cost is therefore dominated by manifest breadth. 

\paragraph{Compute Budget and Hardware.}
For the current final-paper run, we report that at least $1{,}195$ allocated GPU-hours have already been consumed, but we do not claim a final total since much of this has been under experimentation with higher \(k\). A clean full reproduction on comparable hardware should reserve approximately $2{,}000$--$3{,}500$ allocated GPU-hours to cover all baselines and models for full reproducibility. For hardware, the PDO-only setting is feasible on a single $8{\times}$A100, H100, or H200 node with at least 80GB GPU memory per device. The full final run should use at least the current layout, namely $2{\times}$ $8$-GPU A100-80GB nodes plus $1{\times}$H200. For fewer out-of-memory retries, we recommend $3{\times}$ $8$-GPU H100/H200 or A100-80GB nodes.

\paragraph{Higher $k$.} As stated before, as $k$ increases, so does the computational cost. The Table~\ref{tab:compound-gram-memory} outlines and approximation of the total required GPU memory to process our datasets with $k={1,2,3}$.
\paragraph{Compound Gram Memory Footprint.}
Table~\ref{tab:compound-gram-memory} reports the fp32 memory footprint of the Gram tensors used by the point-form models. For a point cloud with $m$ points in ambient dimension $n$, the $q$-form compound Gram tensor has shape
\[
[m,\binom{n}{q},\binom{n}{q}],
\]
so its fp32 memory cost is
\[
4m\binom{n}{q}^2 \text{ bytes}.
\]
Thus, while the $0$-form and $1$-form tensors remain small, the $2$-form and especially $3$-form compound tensors can become memory-dominant when $n$ is large. The table distinguishes the largest per-cloud tensor size from the hypothetical total memory required if all clouds in a task had their corresponding tensors materialized simultaneously.

\begin{table}[t]
\centering
\small
\setlength{\tabcolsep}{3pt}
\resizebox{\textwidth}{!}{%
\begin{tabular}{lrrrrrrrrrrr}
\toprule
Task & Ambient $n$ & Clouds & $m$ min/med/max & $C_2$ & $C_3$ &
$G_0$ total & $G_1$ total & $G_2$/cloud max & $G_2$ total &
$G_3$/cloud max & $G_3$ total \\
\midrule
ts2\_5\_concentric\_circles & 2 & 300 & 128 & 1 & 0 &
0.10 MB & 0.60 MB & 0.0005 MB & 0.10 MB & --- & --- \\
ts2\_5\_parallel\_lines & 2 & 300 & 128 & 1 & 0 &
0.10 MB & 0.60 MB & 0.0005 MB & 0.10 MB & --- & --- \\
ts2\_5\_colocated & 2 & 300 & 128 & 1 & 0 &
0.10 MB & 0.60 MB & 0.0005 MB & 0.10 MB & --- & --- \\
e10\_sanity & 2 & 300 & 128 & 1 & 0 &
0.10 MB & 0.60 MB & 0.0005 MB & 0.10 MB & --- & --- \\
ts25\_density\_shift & 2 & 296 & 256 & 1 & 0 &
0.30 MB & 1.20 MB & 0.001 MB & 0.30 MB & --- & --- \\
e10\_sanity & 2 & 300 & 128 & 1 & 0 &
0.10 MB & 0.60 MB & 0.0005 MB & 0.10 MB & --- & --- \\
\midrule
ts4\_singlecell & 50 & 320 & 128 & 1225 & 19600 &
0.20 MB & 0.38 GB & 0.72 GB & 228.98 GB & 183.18 GB & 58.62 TB \\
ts8\_rna\_velocity\_default & 50 & 1000 & 14754 & 1225 & 19600 &
0.10 MB & 0.21 GB & 0.22 GB & 125.93 GB & 57.24 GB & 32.24 TB \\
ts8\_rna\_velocity\_quick & 24 & 160 & 42104 & 276 & 2024 &
0.006 MB & 3.40 MB & 4.40 MB & 0.44 GB & 0.23 GB & 23.43 GB \\
pdo\_A\_celltype & 45 & 1079 & 17/128/128 & 990 & 14190 &
0.50 MB & 0.89 GB & 0.47 GB & 432.03 GB & 96.01 GB & 88.76 TB \\
pdo\_B\_culture & 45 & 1079 & 17/128/128 & 990 & 14190 &
0.50 MB & 0.89 GB & 0.47 GB & 432.03 GB & 96.01 GB & 88.76 TB \\
pdo\_C\_treatment & 45 & 760 & 21/128/128 & 990 & 14190 &
0.30 MB & 0.63 GB & 0.47 GB & 304.86 GB & 96.01 GB & 62.63 TB \\
pdo\_F\_patient & 45 & 1079 & 17/128/128 & 990 & 14190 &
0.50 MB & 0.89 GB & 0.47 GB & 432.03 GB & 96.01 GB & 88.76 TB \\
pdo\_A\_nocasp\_alias\_current & 45 & 1079 & 17/128/128 & 990 & 14190 &
0.50 MB & 0.89 GB & 0.47 GB & 432.03 GB & 96.01 GB & 88.76 TB \\
pdo\_A\_nocasp\_if\_44D & 44 & 1079 & 17/128/128 & 946 & 13244 &
0.50 MB & 0.85 GB & 0.43 GB & 394.48 GB & 83.64 GB & 77.32 TB \\
pdo\_response\_efp & 12 & 240 & 256 & 66 & 220 &
0.20 MB & 33.80 MB & 4.30 MB & 1.00 GB & 47.30 MB & 11.08 GB \\
pdo\_response\_tumor\_selective & 12 & 240 & 256 & 66 & 220 &
0.20 MB & 33.80 MB & 4.30 MB & 1.00 GB & 47.30 MB & 11.08 GB \\
pdo\_response\_caf\_reprogramming & 12 & 240 & 256 & 66 & 220 &
0.20 MB & 33.80 MB & 4.30 MB & 1.00 GB & 47.30 MB & 11.08 GB \\
\bottomrule
\end{tabular}%
}
\caption{Estimated fp32 memory footprint of Gram tensors across tasks. Here $C_2=\binom{n}{2}$ and $C_3=\binom{n}{3}$ denote the number of basis $2$-forms and $3$-forms in ambient dimension $n$. $G_0$ denotes the scalar $0$-form Gram tensor, whose memory scales as $m$ per cloud. $G_1$ denotes the $1$-form Gram tensor, with per-cloud shape $[m,n,n]$. $G_2$ and $G_3$ denote compound Gram tensors with per-cloud shapes $[m,\binom{n}{2},\binom{n}{2}]$ and $[m,\binom{n}{3},\binom{n}{3}]$, respectively. The ``cloud max'' columns report the memory of the largest single cloud, while the ``total'' columns report the memory if all clouds in the task had the corresponding tensors materialized at once.}
\label{tab:compound-gram-memory}
\end{table}

\end{document}